%File: anonymous-submission-latex-2024.tex
\documentclass[letterpaper]{article} % DO NOT CHANGE THIS
\usepackage{aaai24}
\usepackage{times}  % DO NOT CHANGE THIS
\usepackage{helvet}  % DO NOT CHANGE THIS
\usepackage{courier}  % DO NOT CHANGE THIS
\usepackage[hyphens]{url}  % DO NOT CHANGE THIS
\usepackage{graphicx} % DO NOT CHANGE THIS
\urlstyle{rm} % DO NOT CHANGE THIS
  % DO NOT CHANGE THIS
\usepackage{natbib}  % DO NOT CHANGE THIS AND DO NOT ADD ANY OPTIONS TO IT
\usepackage{caption} % DO NOT CHANGE THIS AND DO NOT ADD ANY OPTIONS TO IT
\usepackage{booktabs}
\frenchspacing  % DO NOT CHANGE THIS
\setlength{\pdfpagewidth}{8.5in} % DO NOT CHANGE THIS
\setlength{\pdfpageheight}{11in} % DO NOT CHANGE THIS
%
% These are recommended to typeset algorithms but not required. See the subsubsection on algorithms. Remove them if you don't have algorithms in your paper.
\usepackage{algorithm}
\usepackage{algorithmic}
\usepackage{amsmath}
\usepackage{amssymb}
\usepackage{subfig}
\usepackage{xcolor}
\usepackage{amsthm}

\newtheorem{lemma}{Lemma}

\newtheorem{definition}{Definition}

\newtheorem*{lemma*}{Lemma}
%
% These are are recommended to typeset listings but not required. See the subsubsection on listing. Remove this block if you don't have listings in your paper.
\usepackage{newfloat}
\usepackage{listings}
\DeclareCaptionStyle{ruled}{labelfont=normalfont,labelsep=colon,strut=off} % DO NOT CHANGE THIS
\lstset{%
	basicstyle={\footnotesize\ttfamily},% footnotesize acceptable for monospace
	numbers=left,numberstyle=\footnotesize,xleftmargin=2em,% show line numbers, remove this entire line if you don't want the numbers.
	aboveskip=0pt,belowskip=0pt,%
	showstringspaces=false,tabsize=2,breaklines=true}
\floatstyle{ruled}
\newfloat{listing}{tb}{lst}{}
\floatname{listing}{Listing}
%
% Keep the \pdfinfo as shown here. There's no need
% for you to add the /Title and /Author tags.
\pdfinfo{
/TemplateVersion (2024.1)
}

\setcounter{secnumdepth}{0} %May be changed to 1 or 2 if section numbers are desired.

% The file aaai24.sty is the style file for AAAI Press
% proceedings, working notes, and technical reports.
%

% Title

% Your title must be in mixed case, not sentence case.
% That means all verbs (including short verbs like be, is, using,and go),
% nouns, adverbs, adjectives should be capitalized, including both words in hyphenated terms, while
% articles, conjunctions, and prepositions are lower case unless they
% directly follow a colon or long dash
\title{No Prior Mask: Eliminate Redundant Action for Deep Reinforcement Learning}
\author{
    %Authors
    % All authors must be in the same font size and format.
    % Written by AAAI Press Staff\textsuperscript{\rm 1}\thanks{With help from the AAAI Publications Committee.}\\
    % AAAI Style Contributions by Pater Patel Schneider,
    Dianyu Zhong\equalcontrib,
    Yiqin Yang\equalcontrib,
    Qianchuan Zhao\thanks{ Advising.}
}
\affiliations{
    %Afiliations
    Department of Automation, Tsinghua University\\
    % If you have multiple authors and multiple affiliations
    % use superscripts in text and roman font to identify them.
    % For example,

    % Sunil Issar\textsuperscript{\rm 2},
    % J. Scott Penberthy\textsuperscript{\rm 3},
    % George Ferguson\textsuperscript{\rm 4},
    % Hans Guesgen\textsuperscript{\rm 5}
    % Note that the comma should be placed after the superscript

    % 1900 Embarcadero Road, Suite 101\\
    % Palo Alto, California 94303-3310 USA\\
    % email address must be in roman text type, not monospace or sans serif
    \{zhongdy19,  yangyiqi19\}@mails.tsinghua.edu.cn, zhaoqc@tsinghua.edu.cn
%
% See more examples next
}

%Example, Single Author, ->> remove \iffalse,\fi and place them surrounding AAAI title to use it
\iffalse
\title{My Publication Title --- Single Author}
\author {
    Author Name
}
\affiliations{
    Affiliation\\
    Affiliation Line 2\\
    name@example.com
}
\fi

\iffalse
%Example, Multiple Authors, ->> remove \iffalse,\fi and place them surrounding AAAI title to use it
\title{My Publication Title --- Multiple Authors}
\author {
    % Authors
    First Author Name\textsuperscript{\rm 1},
    Second Author Name\textsuperscript{\rm 2},
    Third Author Name\textsuperscript{\rm 1}
}
\affiliations {
    % Affiliations
    \textsuperscript{\rm 1}Affiliation 1\\
    \textsuperscript{\rm 2}Affiliation 2\\
    firstAuthor@affiliation1.com, secondAuthor@affilation2.com, thirdAuthor@affiliation1.com
}
\fi

% REMOVE THIS: bibentry
% This is only needed to show inline citations in the guidelines document. You should not need it and can safely delete it.
\usepackage{bibentry}
% END REMOVE bibentry

\begin{document}

\maketitle
% \footnote{*Equal contribution.}
\begin{abstract}
The large action space is one fundamental obstacle to deploying Reinforcement Learning methods in the real world. 
The numerous redundant actions will cause the agents to make repeated or invalid attempts, even leading to task failure. 
Although current algorithms conduct some initial explorations for this issue, they either suffer from rule-based systems or depend on 
% some pre-selected parameters
expert demonstrations,
which significantly limits their applicability in many real-world settings. 
% Although current algorithms conduct some initial explorations for this issue, they either suffer from domain-specific knowledge or depend on rule-based systems,
% which significantly limits their applicability in many real-world settings. 
In this work, we examine the theoretical analysis of what action can be eliminated in policy optimization and propose a novel redundant action filtering mechanism. 
Unlike other works, our method constructs the similarity factor by estimating the distance between the state distributions, which requires no prior knowledge. 
In addition, we combine the modified inverse model to avoid extensive computation in high-dimensional state space.
We reveal the underlying structure of action spaces and propose a simple yet efficient redundant action filtering mechanism named No Prior Mask~(NPM) based on the above techniques.
We show the superior performance of our method by conducting extensive experiments on high-dimensional, pixel-input, and stochastic problems with various action redundancy tasks.
Our code is public online at https://github.com/zhongdy15/npm.
% \textcolor{red}{including toy examples, Minigrid and Atari benchmark}. 
\end{abstract}

\section{Introduction}
Although Reinforcement Learning~(RL) methods have found tremendous success in challenging domains, including video games~\citep{mnih2015human} and classic games~\cite{silver2016mastering, yang2021believe, ma2021offline}, deploying them in real-world applications is still limited. 
One of the main challenges towards that goal is dealing with large action spaces.
For example, maximizing long-term portfolio value using various trading strategies in the finance system~\cite{jiang2017deep}, regulating the voltage level of all the units in the power system~\cite{glavic2017reinforcement}, and generating recommendation sequences from an extensive collection of options~\cite{sidney2005integrating} pose significant challenges due to the numerous actions involved.
The redundant actions in these tasks will cause the agents to make repeated invalid attempts, even leading to task failure.

%Although Reinforcement Learning(RL) algorithms have found tremendous success in challenging domains, they often cannot directly applied in an environment with large discrete action spaces.Research on recommender system[reference] and industrial plants[reference] shows that learning how to act in such case is a difficult task,especially where actions are irrelevant or redundant.

% In this paper, we propose a framework to learn the relationship between actions and cluster actions according to their effects on the environment.

To solve this issue, current researchers attempt to utilize the underlying structure in the action space to filter the irrelevant actions.
One paradigm for algorithm design incorporates the prior knowledge into the learning algorithm.
There are several works in the existing empirical literature, especially in challenging domains such as Dota2~\cite{berner2019dota}, 
StarCraftII~\cite{vinyals2017starcraft} and Honor of Kings~\cite{ye2020mastering}. 
These approaches build simple handcrafted action masks to determine the availability of an action, such as checking if a valid target is nearby or if the ability is on cooldown.
~\citet{ye2020mastering} demonstrates that action mask can largely reduce the training time and ~\citet{huang2021gym} conducts ablation studies on action mask, revealing its substantial potential in enhancing performance.
% Despite these methods being interpretable and efficient, manually constructing the accurate relationship between actions is extremely hard, especially in high-dimensional and state-dependent cases without enough expert information.
However, despite the interpretability and efficiency of these methods, it remains exceptionally challenging to manually construct an accurate relationship between actions, particularly in high-dimensional and state-dependent scenarios with limited expert information.

%Previous research has shown the benefits associated with utilizing the structure in action spaces.
%[Deep Reinforcement Learning in Large Discrete Action Spaces].
%This work use continuous action spaces and bridge them with original large discrete action spaces ,which can significantly speed learning for real-world problems with thousands of actions.
%In prior work, the structure in action spaces is provided a prior.However, the relationship between actions is difficult or impossible to construct, especially it is state-dependent and time-varying.

On the other hand, recent advances were made in autonomously learning such underlying structures in the action space.
For example, ~\citet{dulac2015deep} leverages prior information about the actions to embed them in a continuous space.
~\citet{NEURIPS2018_645098b0} utilizes an external elimination signal provided by the environment to predict invalid actions.
~\citet{Tennenholtz2019TheNL} combines existing methods in natural language processing (NLP) to
group
similar actions and utilizing relations between
different actions from expert data.
~\citet{chandak2019learning} provides an algorithm to learn and use action representations on large-scale real-world problems.
~\citet{baram2021action} proposes maximizing the next states' entropy instead of action entropy to minimize action redundancy.

Nevertheless, the aforementioned methods either rely heavily on prior information or are tightly coupling with the policy, which significantly limits their applicability in various real-world scenarios.
% However, the methods mentioned above either suffer from the prior information or tightly coupling with the policy, significantly limiting applicability in many real-world settings.
Therefore, the shortcoming of the current redundant action filtering mechanism naturally leads to the following question:

\begin{center}
    {{\it How to learn the underlying structure accurately in the action space without prior knowledge?}}
\end{center}

%In recent years, many advances were made in the field of autonomously learning the underlying structure of the action set.
%[Learn What Not to Learn: Action Elimination with Deep Reinforcement Learning] 
%To learn control policies in text-based game where the action space (text) is combinatorial and compositional ,xxx propose a method that eliminate actions by utilizing an elimination signal.However,the elimination signal is also produced by rule-based system.
%[The Natural Language of Actions]
%xxx adopt existing methods in natural language processing (NLP) to learn action-only context-based action representation from demonstrations which should be constructed by a set of optimal policies.  
%[Learning Action Representations for Reinforcement Learning]
%xxx provide an algorithm to learn action representations autonomously. However, the dimension of action representations is fixed and predefined. %non-trivially handcrafted with prior knowledge of the environments
%In usual real-world problem, we cannot get how many clusters which actions should be aligned actually. The dimension of action representation is often task-agnostic and state-dependent.This is a non-trivial aspect, because redundant clusters are meaningless and too few clusters will lead to deviation.

To answer this question, we start with the theoretical analysis of what action can be eliminated in policy optimization.
Intuitively, actions with the same state transition should be classified into similar actions.
Based on this analysis, we propose a redundant action classifying mechanism, which constructs the similarity factor by estimating the distance between the state distributions.
In addition, we combine the modified inverse model to avoid extensive computation in the high-dimensional state space, which makes our algorithm simple yet efficient.
We conduct extensive experiments on various domains, including synthetic action redundancy, combined action redundancy and state-dependent action redundancy.
The experimental results show that our method performs better than other baselines.

\subsection{Important Contributions:}
\begin{enumerate}
    \item We propose a mask-based reinforcement learning framework without any prior knowledge, which can scale up to high-dimensional pixel-based observations.
    % To the best of our knowledge, our work is the first study of analyzing state-dependent redundant action filtering mechanisms, leading our method to perform better than other methods.
    To the best of our knowledge, our work is the first study of constructing no prior state-dependent action mask, leading our method to perform better than other methods.
    \item We give a novel theoretical analysis revealing what kind of mask is reasonable and feasible. Our work stands out as the first attempt to bridge the gap between the practical technique of action mask and the theoretical foundations of Markov Decision Processes (MDPs).
    \item The mask learning phase is reward-free, indicating our mask mechanism is unrelated to policy and easily transfers across multi-tasks.
    We validate our method's effectiveness, simplicity, and transfer ability on both single-tasks and multi-tasks.
\end{enumerate}

\section{Related work}
\paragraph{RL in Large Action Space}
There are extensive works to apply RL algorithms to environments with large action spaces, which can be roughly divided into using prior knowledge and without prior knowledge.
On the one hand, researchers find that better performances can be achieved when actions are factored into their primary categories~\cite{sharma2017learning} according to the underlying compositional structure in Atari 2600~\cite{bellemare2013arcade}.
\citet{dulac2015deep} attempt to embed actions with prior information in a continuous space and find the closest neighbors of promising ones.
\citet{zahavy2018learn} directly learn about redundant or irrelevant actions with external elimination signals provided by the environment and eliminate them from being sampled in text-based games.
\citet{Tennenholtz2019TheNL} adopts the Negative-Sampling procedure with expert demonstrations to gain a deeper understanding of actions.
However, valuable prior information is scarce and expensive, which limits the application of these methods.

% [1] introduces a creative approach to discretizing high-dimensional continuous actions through the sequential combination of one-dimensional discrete actions. [2] explores the intriguing concept of utilizing hypergraphs to combine actions within the action space. [3] presents an innovative neural architecture that capitalizes on a shared decision module and distinct network branches for each action dimension. [4] offers a new lens on Q-learning by amalgamating action discretization with value decomposition. By framing single-agent control as a cooperative multi-agent RL problem, this paper contributes a fresh perspective to addressing continuous control challenges. [5] introduces an elegant approach by discretizing continuous actions based on prior knowledge and subsequently learning exclusively within the corresponding discrete action space during online training.

Unlike prior-based methods, learning in large action spaces without prior knowledge is hard.
\citet{chandak2019learning} showed how to learn and use representations without prior knowledge in which an embedding is used as part of the policy's structure to train an agent.
\citet{baram2021action} explored to incorporate maximum entropy between distributions over the next states and penalized it in reward as action redundancy term.
However, these methods have a tight coupling with the policy, limiting the optimized policy's generalization performance across multi-tasks.

Meanwhile, there are several works on discretizing continuous actions and learning the structure within the action space. ~\citet{metz2017discrete} introduces a creative approach to discretizing high-dimensional continuous actions through the sequential combination of one-dimensional discrete actions. ~\citet{tavakoli2018action} explores the intriguing concept of utilizing hypergraphs to combine actions within the action space. ~\citet{tavakoli2021learning} presents an innovative neural architecture that capitalizes on a shared decision module and distinct network branches for each action dimension. ~\citet{seyde2022solving} offers a new lens on Q-learning by amalgamating action discretization with value decomposition. By framing single-agent control as a cooperative multi-agent RL problem, this paper contributes a fresh perspective to addressing continuous control challenges. ~\citet{dadashi2021continuous} introduces an elegant approach by discretizing continuous actions based on prior knowledge and subsequently learning exclusively within the corresponding discrete action space during online training.

\paragraph{Action Mask} Action mask has achieved wide attention in dealing with numerous invalid actions tasks, such as Dota 2~\cite{berner2019dota}, Starcraft II~\cite{vinyals2017starcraft} and Honor of Kings~\cite{ye2020mastering}.
These methods build simple action filters manually, determining whether the action is available.
At each time step, the algorithm restricts available actions from the full action space based on the constructed filters and optimizes policy in a sub-action space.
~\citet{ye2020mastering} demonstrates that action mask can largely reduce the training time from 80 hours to 65 hours. ~\citet{huang2021gym} reveals that action mask considerably improves performance on real-time strategy (RTS) games.
However, learning a valid action mask without prior knowledge and promoting algorithm performance is not explored well.

\begin{figure*}[t]
    \centering
    \includegraphics[width=0.8\textwidth]{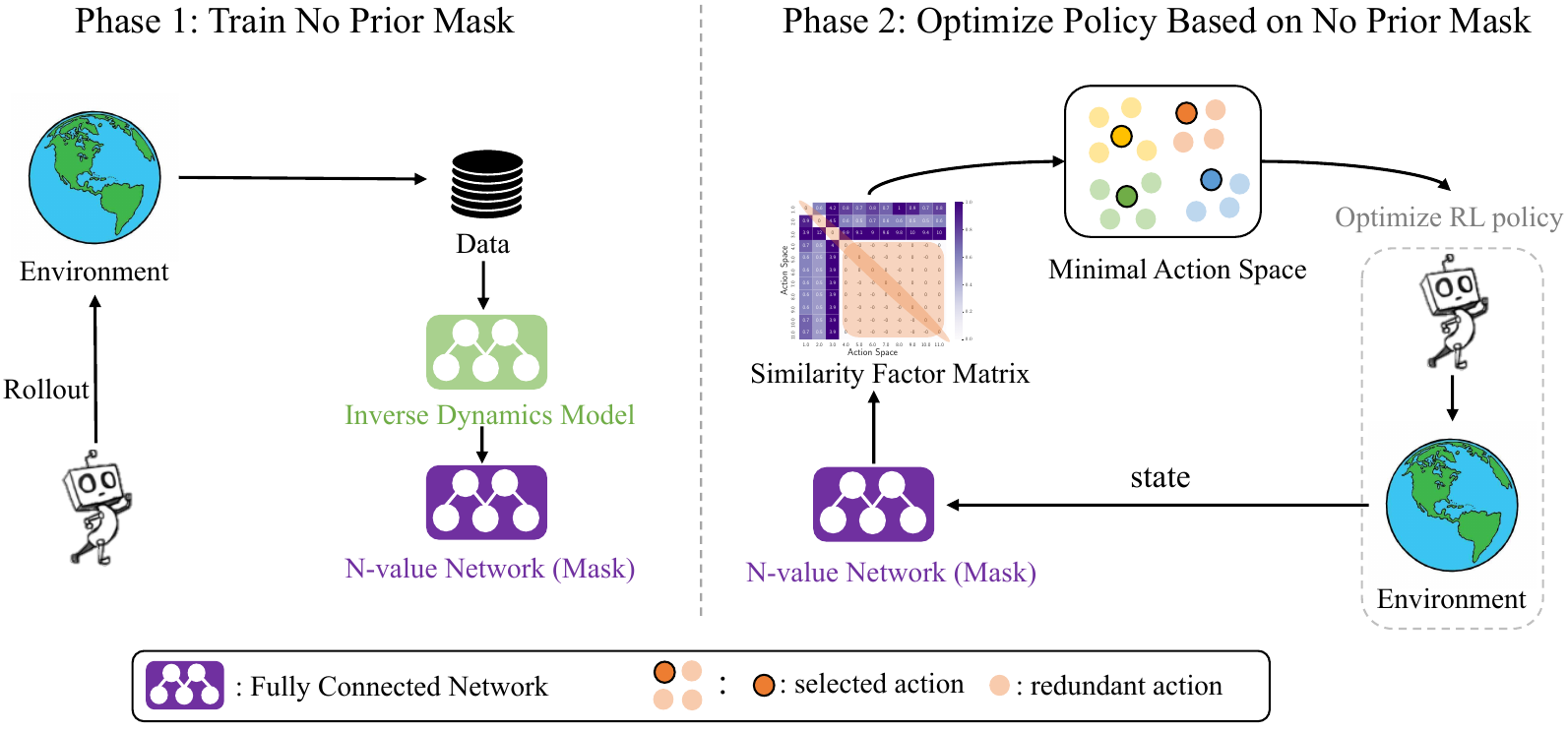}
    \caption{The framework of NPM.
    Our method consists of two phases: The first is to train the no prior mask.
    The second phase uses the trained mask to calculate the similarity factor matrix and eliminate the redundant actions.}
    \label{fig: framework}
\end{figure*}

\paragraph{MDP Homomorphisms and Bisimulation} There are several works in MDP homomorphisms and action abstractions. ~\citet{taylor2008bounding,ravindran2004algebraic} define a behavior similarity metric for measuring state-action pair similarity and obtains error bounds. ~\citet{panangaden2023policy} extend the definition of MDP homomorphisms to the setting of continuous state and action spaces and derive a policy gradient theorem on the abstract MDP. ~\citet{thomas2017independently} learn controllable factors along with policies. ~\citet{sallans2004reinforcement} use an undirected graphical model to approximate the value function with factored states and actions.
However, the methods utilizing the Hausdorff metric ~\cite{taylor2008bounding} or the Kantorovich metric~\cite{panangaden2023policy} lead to unaffordable computational complexity in high-dimensional states, which limits the applicability to broad scenarios.

\paragraph{Action Priors} There are several works in accelerating learning by reducing the number of actions in RL. ~\citet{sherstov2005improving,fernandez2006probabilistic,rosman2012good,biza2021action} utilize a set of different optimal policies from similar tasks to construct a small fraction of the domain’s action set or bias exploration away from sub-optimal actions. ~\citet{rosman2015action} define action priors over observation features rather than states, and ~\citet{abel2015goal} model the optimal actions through experience to avoid searching for sub-optimal actions.~\citet{yan2022ceip} combine multiple implicit priors with task-specific demonstrations to outperform in challenging environments.
However, the research on Action Priors focus on reducing sub-optimal actions instead of redundant actions discussed in our work.

\section{Background}
We consider the Markov Decision Processes~(MDPs), defined by the tuple $(\mathcal{S}, \mathcal{A}, \mathcal{P}, r, \gamma)$, where $\mathcal{S}$ is a state space, $\mathcal{A}$ is an action space, $\gamma\in [0,1)$ is the discount factor and $\mathcal{P}: \mathcal{S}\times\mathcal{A} \rightarrow \Delta(\mathcal{S}), r:\mathcal{S}\rightarrow [0, r_{\text{max}}]$ are the transition function and reward function, respectively.
In this work, we restrict our focus to MDPs with finite action sets, and $|\mathcal{A}|$ denotes the size of the action set.
We also assume a fixed distribution $\mu_0 \in \Delta(\mathcal{S})$ as the initial state distribution.
The goal of an RL agent is to learn a policy $\pi: \mathcal{S} \rightarrow \Delta{(\mathcal{A})}$
under dataset $\mathcal{D}$, which maximizes the expectation of a discounted cumulative reward: $J(\pi)=\mathbb{E}_{\mu_0,\pi}\left[\sum_{t=0}^{\infty}\gamma^t r(s_t)\right]$.

For any policy $\pi$, the corresponding state-action value function is $Q^{\pi}(s,a)=\mathbb{E}[\sum_{k=0}^{\infty}\gamma^k r_{t+k}|S_t=s,A_t=a,\pi]$.
% The state value function is $V^{\pi}(s)=\mathbb{E}[\sum_{k=0}^{\infty}\gamma^k r_{t+k}|S_t=s,\pi]$.
% It follows from the Bellman equation that $V^{\pi}(s)=\sum_{a\in\mathcal{A}}\pi(a|s)Q^{\pi}(s,a)$.
% An optimal policy is $\pi^*\in\arg\max_{\pi\in\Pi}\mathbb{E}[\sum_{t=0}^{\infty}\gamma^t r_{t}|\pi]$.
% where $\Pi$ denotes the set of all possible policies

%which specifies the environment as a tuple $(S,A,P,r,\gamma,\rho_0)$, where $S$ is the state space,$A$ is the full action space, $P:S \times A \times S \rightarrow [0,1]$ is the transition probability function, $r:S \times A \rightarrow R$ is the reward function ,$\gamma \rightarrow [0,1]$ is the discount factor, and $\rho_0: S \rightarrow [0,1] $ is the distribution over initial states. 
%The goal of an RL agent is to learn a policy $\pi(a|s)$ to  maximize the discounted cumulative reward, which can be denoted as:
%\begin{equation}
%\label{eq:ppogoal}
%\mathop{\max}\limits_{\pi \in \Omega}{ J^{\pi}} = 
%\mathop{\mathbf{E}}\limits_{\tau \sim p^{\pi}(\tau)} 
%\left[ \sum_{t = 0}^{\infty} \gamma ^t 
%R(s_t,a_t,s_{t+1}) \right],
%\end{equation}

% \mathop{\mathbf{E}}\limits_{\tau \sim p^{\pi}(\tau)} \left[ R(\tau) \right] =
%where $\tau \doteq (s_0,a_0,s_1,a_1...)$ denotes a trajectory
%and the distribution over $\tau$ is given by 
%$p^{\pi}(\tau) = p(s_0) \prod \limits_{i=0}^{N-1} \pi(a_i|s_i)p(s_{i+1}|s_i,a_i)$.

\subsection{Action Mask Mechanism}
The action mask mechanism significantly reduces the complexity of exploration for some complicated cases, such as real-time strategy games or job scheduling.
In standard value iteration methods, the agent will make decisions by maximizing $Q(s, a)$ among the full action space $\mathcal{A}$:

$$a = {\arg\max}_{a\in\mathcal{A}}Q(s,a).$$

However, if $|\mathcal{A}|$ is too large and contains numerous redundant actions, the difficulty of the exploration is greatly increased.
For this reason, we define the invalid action space $\mathcal{A}^{\text{invalid}}\subseteq \mathcal{A}$ to reduce the difficulty of the exploration, which can be state-dependent or time-varying.
Given $\mathcal{A}^{\text{invalid}}$, value-based RL methods will replace the full action space by the valid action space $\mathcal{A} \backslash \mathcal{A}^{\text{invalid}}$:

\begin{equation}
\label{eq: value-based}
a = {\arg\max}_{a \in \mathcal{A} \backslash \mathcal{A}^{\text{invalid}}} Q(s,a).
\end{equation}

As for policy gradient methods, the action mask mechanism will forcibly set the sampling probability of the redundant actions to zero and re-normalize:
\begin{equation}
    \label{eq: pi mask}
    \pi_{\text{mask}}(a|s)=
    \left\{ 
    \begin{array}{cc}
     \frac{\pi(a|s)}{1-\sum_{a \in \mathcal{A}^{\text{invalid}}}\pi(a|s)} &, a \notin \mathcal{A}^{\text{invalid}} \\
     0 &, a \in \mathcal{A}^{\text{invalid}}. 
    \end{array}
    \right.
\end{equation}

%Specifically, we first define $A^{\text{invalid}}\subseteq A$ as invalid action space, 

%Specifically, the action mask mechanism works as an action filter before deploying action to the real agent.
%Let we define $A^{\text{invalid}} \subseteq A$ as invalid action space,action mask will perform different formulation in policy gradient algorithm and value iteration algorithm.

%In standard policy gradient algorithms, the policy network outputs the action probability $\pi(a|s),a \in A$ to guide an agent's policy.
%Given $A^{\text{invalid}}$ representing the set of actions will be masked out, the action mask mechanism will forcibly set the sampling probability of the corresponding action to zero and re-normalize:

\section{Method}
We start with the theoretical analysis of what action can be eliminated in policy optimization.
Based on this analysis, we propose a redundant action classifying mechanism, which constructs the similarity factor by estimating the distance between the state distributions.
Further, we introduce the modified inverse model to avoid extensive computation.
Finally, we show the overall framework of our algorithm in Figure~\ref{fig: framework} and Algorithm~\ref{alg: NPM}.

% consists of training the similarity factor model with random policy and eliminating the redundant actions in the reinforcement learning. 

\subsection{Identify Redundant Actions based on the Similarity Factor}\label{sec: method1}

{\it What actions can be eliminated in policy optimization?}
At first sight, actions transferring from the same initial state $s_t$ to the same next state $s_{t+1}$ should be classified into similar actions since they have the same effect on the state transition.
We formulate this intuition by introducing the similarity factor as follows:

%Based on  the effect on the state, we utilize similarity factor to evaluate the relation between actions.

\begin{definition}[Similarity Factor]
    \label{def: similarity factor}
    We define the similarity factor $M(s_t, a_i, a_j)$ by KL-divergence following state $s_t$ and any two actions $a_i$, $a_j$:
    \begin{align}
        M(s_t,a_i,a_j) = D_{\text{\rm KL}}(P(s_{t+1}|s_t,a_i) \| P(s_{t+1}|s_t,a_j)).
    \end{align}
\end{definition}

%\textbf{Definition 1}. 
%We define the similarity factor $M(s_t,a_i,a_j)$ following state $s_t$ and any two actions $a_i$ and $a_j$

%$$M(s_t,a_i,a_j) = D_{\text{KL}}(P(s_{t+1}|s_t,a_i)||P(s_{t+1}|s_t,a_j))$$

We can use the similarity factor to identify whether the actions are redundant or irrelevant based on the definition.
Specifically, if the value of $M(s_t, a_i, a_j)$ is small, the corresponding two actions have similar effects on the state $s_t$ and should be aligned into the same cluster.
Therefore, we have the following formulation:

% Based on the similarity factor derived in Section~\ref{sec: method1}, we can divide the action space $\mathcal{A}$ in the state $s$ into $l$ categories: 

% \begin{align}
%    &\mathcal{A} = \bigcup_{m \in [0,l-1]} A_m, \\
%    &\text{s.t.} \quad \forall a_i, a_j \in A_m, \quad M(s,a_i,a_j) < \epsilon
% \end{align}

\begin{definition}[State-dependent Action Clusters]
    We define the state-dependent action cluster $A_{s,\epsilon}^m$ over state $s$ based on the similarity factor:
    \begin{align}
        &\mathcal{A} = \bigcup_{m \in \{0,1,...,l-1\}} A_{s,\epsilon}^m, \\
        &\text{s.t.} \quad \forall a_i, a_j \in A_{s,\epsilon}^m, \quad M(s,a_i,a_j) < \epsilon
    \end{align}
\end{definition}

where $A_{s,\epsilon}^m$ is the sub-action space and $A_{s,\epsilon}^i \cap A_{s,\epsilon}^j = \emptyset$ if $i \neq j$.
In practice, we choose arbitrarily one action $a_{A_m} \in A_{s,\epsilon}^m$ (e.g. the first action) to represent the sub-action space $A_{s,\epsilon}^m$ and mask out all the redundant actions.
Then, based on the selected action $a_{A_m}$, we can reduce the original action space $\mathcal{A}$ in the state $s$ into the minimal action space $\mathcal{A}_{s}^{\epsilon} = \{ a_{A_0}, a_{A_1},...,a_{A_{l-1}}\}$.
Next, similar to Equation~\ref{eq: value-based} and Equation~\ref{eq: pi mask}, the policy gradient methods and value-based methods can be optimized by simply replacing the full action space with the minimal action space $\mathcal{A}_{s}^{\epsilon}$.

% We can choose arbitrarily one action $a_{K_m} \in K_m$ (e.g. the first action) to represent the sub-action space $K_m$ and mask out all the redundant actions.
% Then, based on the selected action $a_{K_m}$, we can reduce the original action space $\mathcal{A}$ in the state $s$ into the $\epsilon$-minimal action space $\mathcal{A}_{s}^{\epsilon} = \{ a_{K_0}, a_{K_1},...,a_{K_{l-1}}\}$.

Actions in the same clusters have similar effects on the state. Intuitively, eliminating the redundant actions will have little effect on the policy's performance. Based on the above motivation, we have the following theoretical analysis:

%The similarity factor provides a criterion to identify whether the actions are redundant or irrelevant. Intuitively, if the value of $M(s_t,a_i,a_j)$ is small, corresponding two actions have similar effect and can be aligned in the same cluster. 

% The following lemma shows that when two actions $a_i, a_j$ are similar w.r.t. (with respect to) their next state, they can be joined into a single action with a small change in value.

\begin{lemma}

\label{lemma1}
For any policy $\pi$ and $A_{s,\epsilon}^m$, we can choose arbitrarily one action $a_{A_m} \in A_{s,\epsilon}^m$ to represent the sub-action space. We denote
\begin{align}
\pi_{A_{s,\epsilon}^m}(a \mid s)= \begin{cases}\sum_{a^{\prime} \in A_{s,\epsilon}^m} \pi\left(a^{\prime} \mid s\right) & , a = a_{A_m} \\ 0 &, a \in A_{s,\epsilon}^m \backslash \{ a_{A_m}\} \\ \pi(a \mid s) & , \text {\rm o.w }\end{cases}
\end{align}

then
\begin{align}
\left\|V^\pi-V^{\pi_{A_{s,\epsilon}^m}}\right\|_{\infty} \leq \frac{\gamma r_{max}}{(1-\gamma)^2} \sqrt{2\epsilon}
\end{align}
\end{lemma}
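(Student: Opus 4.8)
The plan is to reduce the infinite-horizon value gap to a one-step perturbation of the induced state-to-state transition kernel, exploiting that $\pi$ and $\pi' := \pi_{A_{s,\epsilon}^m}$ change the action distribution only at the single state $s$ (all other states, and all actions outside the cluster, retain their $\pi$-probabilities). Writing $P^\pi(\cdot\mid s) = \sum_a \pi(a\mid s)\,P(\cdot\mid s,a)$ for the Markov chain induced by $\pi$, the key structural observation is that $P^\pi$ and $P^{\pi'}$ agree on every row except the one indexed by $s$, and that on this row the difference telescopes, since $\pi'$ merely reassigns the cluster's total mass to the representative $a_{A_m}$, to $\sum_{a \in A_{s,\epsilon}^m} \pi(a\mid s)\,[P(\cdot\mid s,a) - P(\cdot \mid s, a_{A_m})]$.

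First I would control this single row. Applying the triangle inequality in $\ell_1$ to the telescoped expression and then Pinsker's inequality to each term, together with the cluster constraint $M(s,a,a_{A_m}) < \epsilon$ that holds because $a, a_{A_m} \in A_{s,\epsilon}^m$, yields
\begin{align}
\bigl\|P^\pi(\cdot\mid s) - P^{\pi'}(\cdot \mid s)\bigr\|_1 \le \sum_{a \in A_{s,\epsilon}^m} \pi(a\mid s)\, \sqrt{2\, M(s,a,a_{A_m})} < \sqrt{2\epsilon},
\end{align}
where the last step uses $\sum_{a\in A_{s,\epsilon}^m}\pi(a\mid s) \le 1$. This is the one place the similarity factor of Definition~\ref{def: similarity factor} enters: Pinsker converts the KL bound encoded in $M$ into a total-variation (hence $\ell_1$) bound on the induced next-state distributions.

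Next I would propagate this one-step discrepancy to the value function through the standard Bellman telescoping identity
\begin{align}
V^\pi - V^{\pi'} = \gamma\,(I - \gamma P^\pi)^{-1}\,(P^\pi - P^{\pi'})\,V^{\pi'} .
\end{align}
Taking $\|\cdot\|_\infty$, I would bound $\|(I-\gamma P^\pi)^{-1}\|_\infty \le (1-\gamma)^{-1}$ (as $P^\pi$ is row-stochastic, so $\|(P^\pi)^k\|_\infty = 1$) and $\|V^{\pi'}\|_\infty \le r_{max}/(1-\gamma)$ from $r\in[0,r_{max}]$. The middle factor is where the single-state structure pays off: only the $s$-row of $P^\pi - P^{\pi'}$ is nonzero, so by H\"older $\|(P^\pi - P^{\pi'})V^{\pi'}\|_\infty \le \|P^\pi(\cdot\mid s) - P^{\pi'}(\cdot\mid s)\|_1\,\|V^{\pi'}\|_\infty$. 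Chaining the three bounds gives $\tfrac{\gamma r_{max}}{(1-\gamma)^2}\sqrt{2\epsilon}$, exactly as claimed.

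I expect the main obstacle to be bookkeeping rather than conceptual. The delicate points are getting the Pinsker constant right so that $\sqrt{2\epsilon}$ (rather than $\sqrt{\epsilon/2}$ or $\sqrt{\epsilon}$) emerges, and rigorously arguing that merging a single cluster at a single state perturbs only one row of the transition matrix, so the telescoping argument does not accumulate error across states. I would also confirm that the direction in which KL appears in $M(s,a,a_{A_m})$ is harmless: total variation is symmetric and Pinsker holds for either ordering of the divergence, so the asymmetry in Definition~\ref{def: similarity factor} causes no difficulty.
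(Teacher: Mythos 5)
Your proof is correct and reaches the paper's exact constant, but by a genuinely different route in the propagation step. The paper constructs an auxiliary MDP $\hat{\mathcal{M}}$ with kernel $\hat{P}$ that reroutes every clustered action through the representative $a_{A_m}$, observes $\hat{P}^\pi = P^{\pi_{A_{s,\epsilon}^m}}$, bounds $\sum_{s'}|P(s'\mid s,a)-\hat{P}(s'\mid s,a)| < \sqrt{2\epsilon}$ per state--action pair via the same Pinsker inequality you use, and then invokes an external simulation-type result (Lemma B.2 of Janner et al.) asserting that the $t$-step state marginals drift by at most $t\sqrt{2\epsilon}$, finally summing $r_{\max}\sqrt{2\epsilon}\sum_{t}\gamma^t t = \frac{\gamma r_{\max}}{(1-\gamma)^2}\sqrt{2\epsilon}$. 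You instead work directly with the induced state-to-state kernels and replace the marginal-drift lemma with the resolvent identity $V^\pi - V^{\pi'} = \gamma(I-\gamma P^\pi)^{-1}(P^\pi - P^{\pi'})V^{\pi'}$ plus H\"older; this is self-contained, avoids the auxiliary MDP and the imported lemma entirely, and your convex-combination bound $\sum_{a\in A_{s,\epsilon}^m}\pi(a\mid s)\sqrt{2M(s,a,a_{A_m})}$ on the row total variation is if anything slightly sharper than the paper's uniform per-action bound (it would let $\epsilon$ be replaced by a policy-weighted average of cluster KLs). Two small caveats. First, your resolvent identity carries no reward-difference term only because the paper takes $r:\mathcal{S}\to[0,r_{\max}]$ state-only; with action-dependent rewards you would need an additional $\|r^\pi - r^{\pi'}\|_\infty/(1-\gamma)$ contribution. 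Second, the paper's $\hat{P}$ (and hence its reading of $\pi_{A_{s,\epsilon}^m}$) merges the state-dependent cluster at \emph{every} state, not only at one fixed $s$, so your "only one row of $P^\pi - P^{\pi'}$ is nonzero" observation does not match the paper's interpretation --- but this is harmless: your H\"older step only uses $\max_s \|P^\pi(\cdot\mid s)-P^{\pi'}(\cdot\mid s)\|_1 < \sqrt{2\epsilon}$, which your telescoping-plus-Pinsker argument establishes row by row, so the same chain of bounds gives $\frac{\gamma r_{\max}}{(1-\gamma)^2}\sqrt{2\epsilon}$ verbatim in the all-states reading, with no accumulation of error across states in either case.
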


Lemma~\ref{lemma1} shows that eliminating the redundant actions based on the similarity factor has little effect on the policy's performance.
Please refer to Appendix~\ref{appendix: proof} for the proof.

% In practice, we often choose the first action (according to the index in the default action set) to represent the cluster.

%\textcolor{blue}{Proof:
%First, we note the following relationship between the total variation divergence and the KL divergence:}
%$$D_{T V}(p \| q)^2 \leq 1/2 D_{\mathrm{KL}}(p \| q)$$

%[refer:Concentration Inequalities: A Nonasymptotic Theory of Independence.Theorem 4.19， P103， Pinsker's Inequality]

%where
%$$D_{T V}(p \| q)=\frac{1}{2} \sum_j\left|p_i-q_i\right|$$

%Therefore, for all $a \in A$ we have that

%$$\begin{aligned}
%& \sum_{s^{\prime}}\left|P\left(s^{\prime} \mid s, a\right)-\hat{P}\left(s^{\prime} \mid s, a\right)\right| \\
%\end{aligned}$$

\subsection{Estimate Similarity Factor using Modified Inverse Dynamic Model}
\label{subsec: inverse model}

Although we can identify the redundant actions based on the similarity factor, accurately calculating $M(s_t, a_i, a_j)$ is extremely hard since it requires the distance on the state distribution.
Therefore, the key issue is reducing the computational complexity of $M(s_t, a_i, a_j)$. 
We derive a form equivalent to Definition~\ref{def: similarity factor}, shown in Lemma ~\ref{lemma2} to solve this issue.
Compared with Definition~\ref{def: similarity factor}, Lemma~\ref{lemma2} transfers the computation on the state space to the action space, making calculating $M(s_t, a_i, a_j)$ feasible.

%While the similar factor provide fruitful information, it may be difficult to approximate in large state domains.We introduce a random policy $\pi$ into the equation to estimate the similar factor in the action space consist of orders of magnitude less elements.

\begin{lemma}
\label{lemma2}
The similarity factor $M$ can be divided by two terms with the same form $N$, which we refer to as N-value network: 
\begin{align*}
M(s_t,a_i,a_j) = N(s_t,a_i,a_i) - N(s_t,a_i,a_j)
\end{align*}
where
\begin{align*}
N(s_t,a_i,a_j) = \mathbb{E}_{s_{t+1} \sim P(\cdot|s_t,a_i)}\log \left[ \frac{P^{\pi}(a_j|s_t,s_{t+1})}{\pi(a_j|s_t)} \right].
\end{align*}
For $i = j$, then we have the former term
\begin{align*}
N(s_t,a_i,a_i) = \mathbb{E}_{s_{t+1} \sim P(\cdot|s_t,a_i)}\log \left[ \frac{P^{\pi}(a_i|s_t,s_{t+1})}{\pi(a_i|s_t)} \right].
\end{align*}
\end{lemma}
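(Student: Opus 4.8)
The plan is to start from Definition~\ref{def: similarity factor}, expand the KL-divergence explicitly, and then rewrite each forward transition density appearing inside the logarithm by means of Bayes' rule (the inverse-dynamics identity). The decisive feature is that, under a fixed policy $\pi$, Bayes' rule introduces a policy-induced state marginal that does \emph{not} depend on the action; this marginal will cancel in the log-ratio, which is precisely what converts an intractable state-space distance into the action-space quantity $N$. First, I would write
\begin{align*}
M(s_t,a_i,a_j) = \mathbb{E}_{s_{t+1}\sim P(\cdot|s_t,a_i)}\log\left[\frac{P(s_{t+1}|s_t,a_i)}{P(s_{t+1}|s_t,a_j)}\right].
\end{align*}

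Next I would apply Bayes' rule under $\pi$. Defining the policy-induced marginal $P^{\pi}(s_{t+1}|s_t)=\sum_{a'}\pi(a'|s_t)P(s_{t+1}|s_t,a')$, the inverse model satisfies $P^{\pi}(a|s_t,s_{t+1})=P(s_{t+1}|s_t,a)\,\pi(a|s_t)/P^{\pi}(s_{t+1}|s_t)$, and hence each forward density can be expressed as $P(s_{t+1}|s_t,a)=P^{\pi}(a|s_t,s_{t+1})\,P^{\pi}(s_{t+1}|s_t)/\pi(a|s_t)$. Substituting this representation for both $a=a_i$ and $a=a_j$ into the log-ratio, the factor $P^{\pi}(s_{t+1}|s_t)$ is common to numerator and denominator and cancels, leaving
\begin{align*}
\log\frac{P(s_{t+1}|s_t,a_i)}{P(s_{t+1}|s_t,a_j)} = \log\frac{P^{\pi}(a_i|s_t,s_{t+1})}{\pi(a_i|s_t)} - \log\frac{P^{\pi}(a_j|s_t,s_{t+1})}{\pi(a_j|s_t)}.
\end{align*}

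Finally, I would take the expectation over $s_{t+1}\sim P(\cdot|s_t,a_i)$ on both sides and invoke linearity of expectation to split the right-hand side into two terms. The first term is exactly $N(s_t,a_i,a_i)$ and the second is exactly $N(s_t,a_i,a_j)$ by the definition in the statement, yielding $M(s_t,a_i,a_j)=N(s_t,a_i,a_i)-N(s_t,a_i,a_j)$ as claimed.

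The computation itself is short; the main obstacle is conceptual rather than technical, namely recognizing that the cancellation of the action-independent marginal $P^{\pi}(s_{t+1}|s_t)$ is what makes the estimation tractable, and being careful that the expectation is taken consistently under $P(\cdot|s_t,a_i)$ in every term (including the cross term indexed by $a_j$). The only genuine technical point to watch is a support/absolute-continuity condition ensuring $P(\cdot|s_t,a_i)\ll P(\cdot|s_t,a_j)$ so that the logarithm and the posterior $P^{\pi}(a_j|s_t,s_{t+1})$ are well defined; when this fails the KL-divergence is $+\infty$ and the identity holds in the extended sense, so no separate argument is needed.
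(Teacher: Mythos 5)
Your proposal is correct and follows essentially the same route as the paper's proof: both expand the KL-divergence, insert the policy-induced marginal $P^{\pi}(s_{t+1}|s_t)$ via Bayes' rule so that it cancels in the log-ratio, and split the resulting expectation by linearity into $N(s_t,a_i,a_i)-N(s_t,a_i,a_j)$. Your added remarks on the absolute-continuity condition $P(\cdot|s_t,a_i)\ll P(\cdot|s_t,a_j)$ and on taking every expectation under $P(\cdot|s_t,a_i)$ are points the paper passes over silently, and they are correct.
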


In Lemma~\ref{lemma2}, we introduce the inverse dynamic model $P^{\pi}(a_t|s_t,s_{t+1})$ to calculate $M(s_t, a_i, a_j)$, which greatly reduce the computational complexity of our method. 
Please refer to Appendix~\ref{appendix: proof} for the proof.

We note that $P^{\pi}(a_t|s_t,s_{t+1})$ is different from $P(a_t|s_t,s_{t+1})$, which ignores the $\pi$ and do not take the policy distribution over $s_t$ into account.
% The conflict induced by the $P(a_t|s_t,s_{t+1})$ will be severe in the action redundancy task.
The conflict induced by the $P(a_t|s_t,s_{t+1})$ will be severe when training on the data collected by varying policies.

% \textcolor{red}{
% For instance, for one-step MDP with $n$ actions, from the initial state $s_0$, the agent always reaches the same next state $s_1$ with any action.
% In this case, $P(a_t|s_t,s_{t+1})$ rely on the policy $\pi(\cdot|s_t)$
% and the estimation of $P(a_t|s_t,s_{t+1})$ will not converge if the policy varies.
% }

When the policy $\pi$ is fixed, $P^{\pi}(a_t | s_t, s_{t+1})$ is equivalent to $P(a_t | s_t, s_{t+1})$. However, in more complex environments, where the agent needs to explore a vast state space, a fixed policy may not lead to effective exploration.  
To solve this issue, we have to adopt multiple exploration policies~(e.g., curiosity reward~\cite{pathak2017curiosity}).
With the collected data coming from gradually varying policies, the modified inverse dynamics model lead to better performance in Experiment Section.

Therefore, we construct the modified inverse dynamic model $P^{\rm inv}_{\psi}(a_t|s_t,s_{t+1},\pi(\cdot|s_t))$ with parameter $\psi$ to estimate $ P^{\pi}(a_t|s_t,s_{t+1})$. 
In Experiment Section, we show that the modified inverse dynamic model can converge faster and achieve higher accuracy compared with $P(a_t|s_t,s_{t+1})$ with the collected data coming from gradually varying policies.

% After taking policy distribution into account, the inverse dynamic model converges faster and achieve higher accuracy. We compared the original inverse dynamic model and modified version in chapter ~\ref{chapt:exp}.

% \subsection{Redundant Action Filtering Mechanism}\label{sec: method2}

% \begin{equation}
%     \label{eq:npmPPO}
%    \pi_{\text{mask}}(a|s)=
%    \left\{ 
%    \begin{array}{cc}
%     \sum_{a \in A_m} \pi(a|s) &, a \in \mathcal{A}_{s}^{\epsilon}\\
%     0 &, a \notin \mathcal{A}_{s}^{\epsilon}. 
%    \end{array}
%    \right.
%\end{equation}

% For the policy gradient methods, the action mask mechanism will forcibly set the sampling probability of the redundant actions to zero and re-normalize as Equation. \ref{eq:mask}:

%For the value-based RL methods, we will simply replace the full action space by the minimal action set $\hat{\mathcal{A}_{s,\epsilon}}$ :
%\begin{equation}
%    \label{eq:npmQ}
%    a = \arg\max_{a \in \hat{\mathcal{A}_{s,\epsilon}}} Q(s,a)
%\end{equation}

\subsection{Practical Implementation}

In this subsection, we show how to train the modified inverse dynamic model $P^{\rm inv}_{\psi}$ and N-value network $N_{\theta}$ parameterized with $\theta$.
First, we train the inverse dynamic model with the following loss:

%The inverse dynamic model $I_{\omega}$ is trained using a neural network with data $\langle s_t,a_t,s_{t+1},\pi(\cdot|s_t) \rangle \sim D_{\pi}$. The loss is given by:

\begin{align}
\label{eq: mask}
&L(\psi) =\\ \notag &H_{( s_t,a_t,s_{t+1},\pi(\cdot|s_t)) \sim \mathcal{D}}\left[P^{\rm inv}_{\psi}(\cdot|s_t,s_{t+1},\pi(\cdot|s_t)), P^{\rm one-hot}_{a_t}(\cdot)\right],
\end{align}

where $H(\cdot,\cdot)$ is the cross entropy loss and $|P^{\rm one-hot}_{a_t}(\cdot)|=|\mathcal{A}|$.
$P^{\rm one-hot}_{a_t}(\cdot)$ denotes one-hot distribution, where the index value corresponding to $a_t$ is 1, otherwise is 0.

% holds $P^o_{a_t}(a_t)=1$, and $P^o_{a_t}(a)=0$ when $a \neq a_t$.
% To obtain the similarity factor $M(s,a_i,a_j)$ between action $a_i$ and $a_j$ on state $s$, we construct N-value network which inputs state-action pair $\langle s,a_i \rangle$ and outputs a $|A|$-dimensional vector.

\begin{figure*}[ht]
    \centering\subfloat{\includegraphics[scale=0.24]{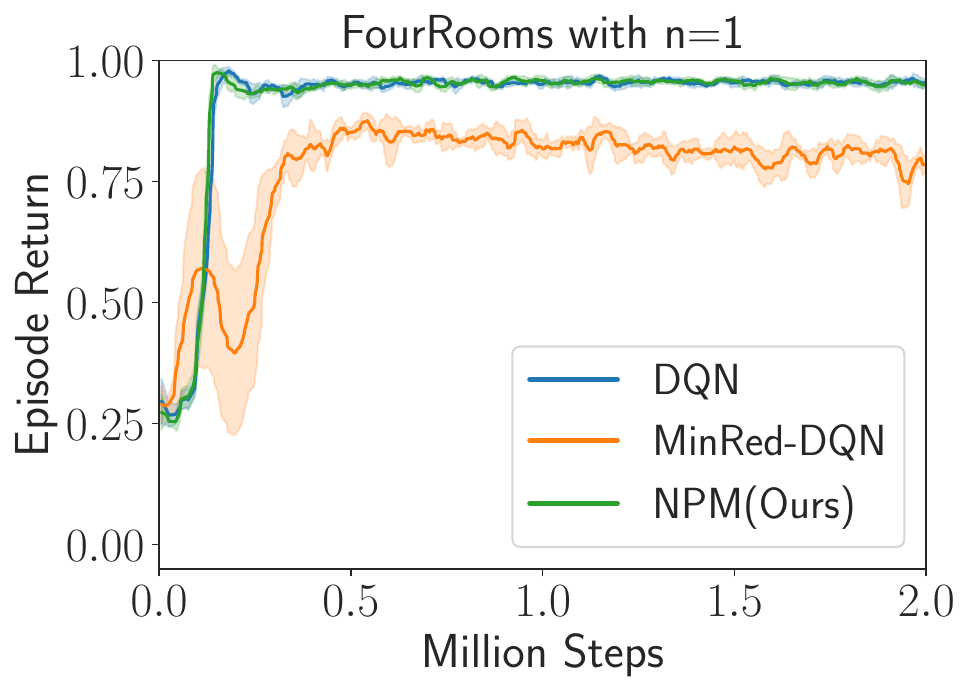}\label{fig: sub_figure1}}
    \hspace{1mm}\subfloat{\includegraphics[scale=0.24]{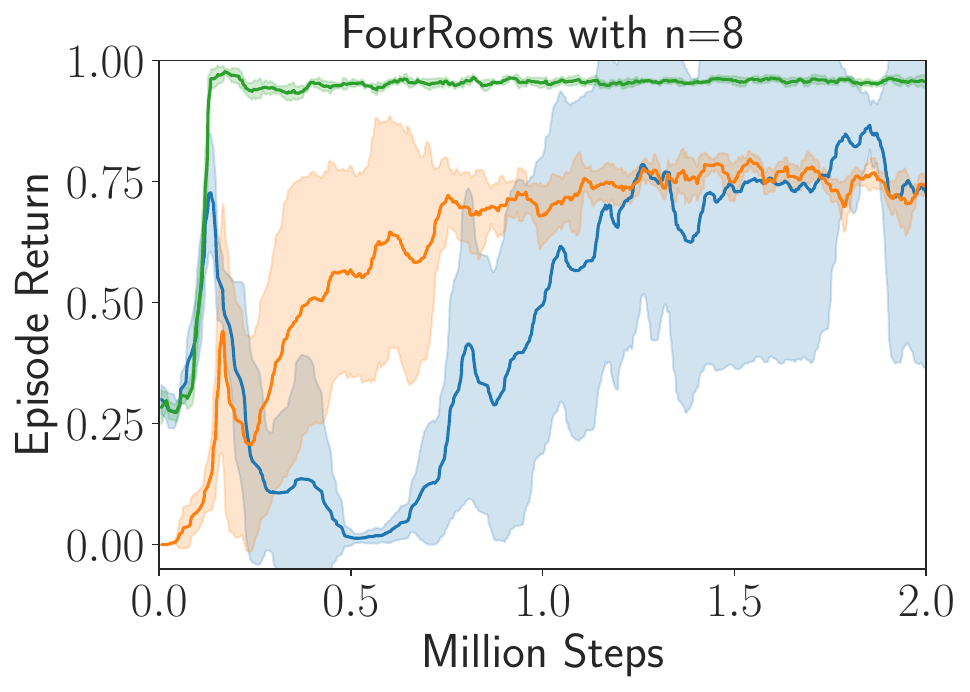}\label{fig: sub_figure4}}
    \hspace{1mm}\subfloat{\includegraphics[scale=0.24]{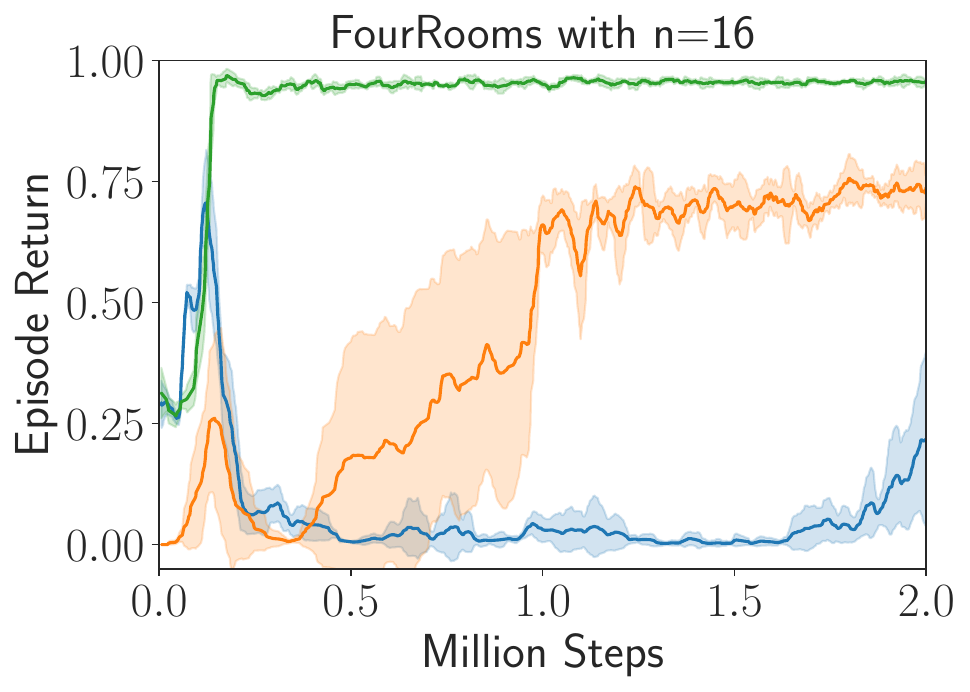}\label{fig: sub_figure5}}
    \hspace{1mm}\subfloat{\includegraphics[scale=0.24]{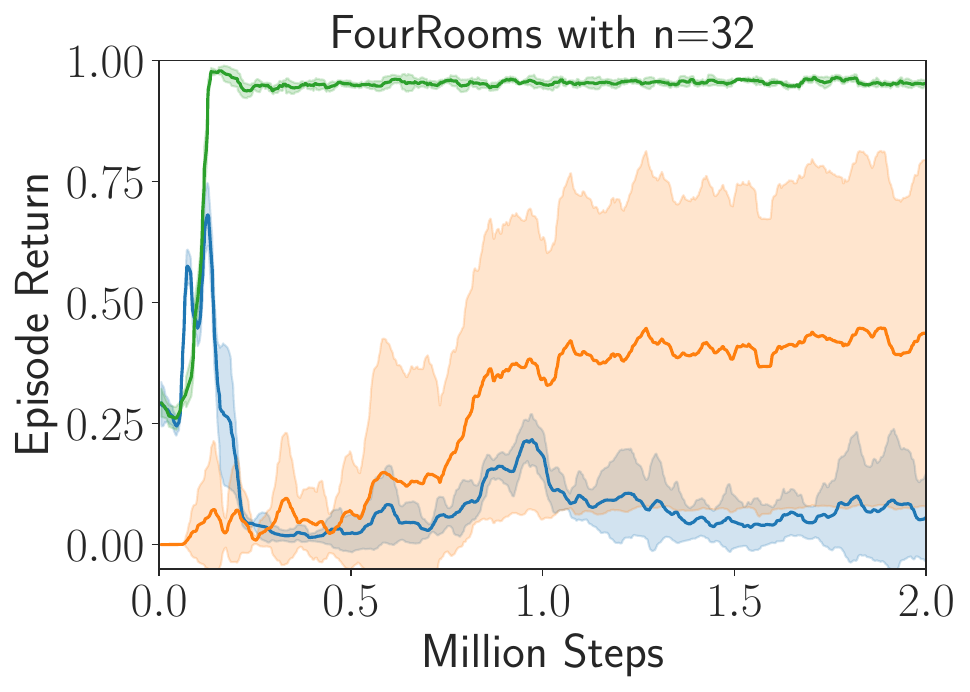}\label{fig: sub_figure6}}

    \subfloat{\includegraphics[scale=0.28]{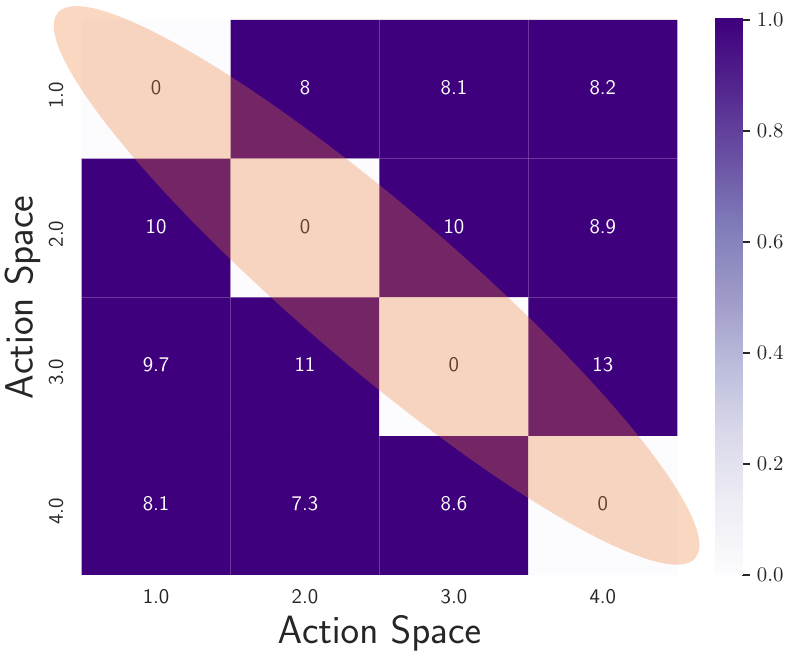}}
    \hspace{1mm}\subfloat{\includegraphics[scale=0.28]{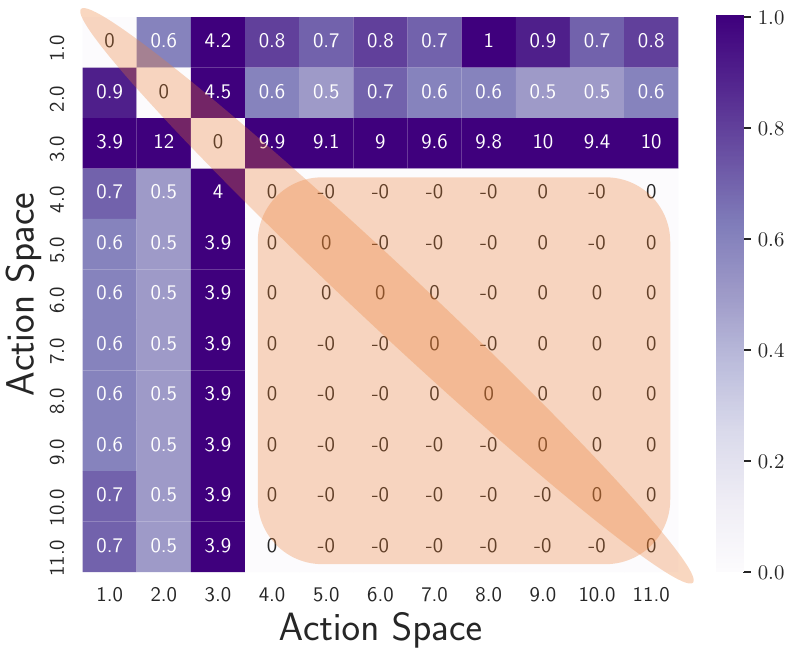}}
    \hspace{1mm}\subfloat{\includegraphics[scale=0.28]{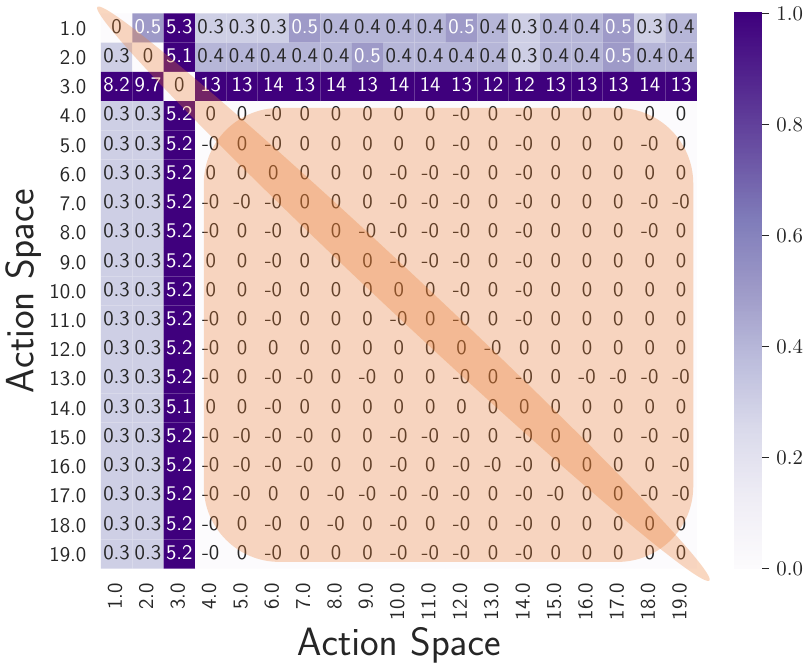}}
    \hspace{1mm}\subfloat{\includegraphics[scale=0.28]{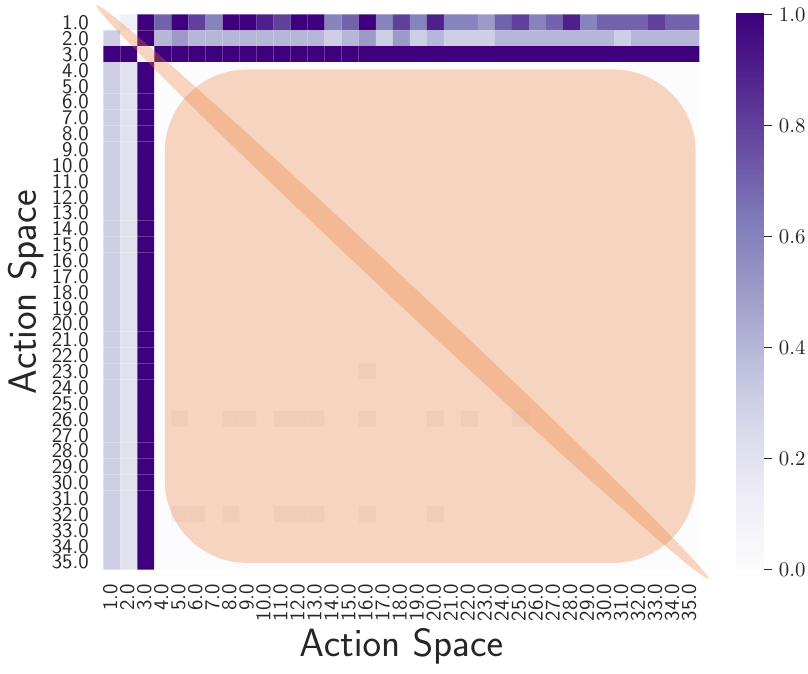}}
    \caption{Upper: The experimental results on FourRooms with synthetic action redundancy $n \in \{1,8,16,32\}$ with five random seeds. 
    Below: Visualization of similarity factor matrix corresponding to each task.}
    \label{fig: FourRoomsexp}
\end{figure*}

As for the N-value network $N_{\theta}$, we adopt the state-action pair $(s,a_t)$ as inputs and output a $|\mathcal{A}|$-dimensional vector.
Actually, there are some alternative plans to construct an N-value network.
For example, we can input only state $s$ and output a similarity factor matrix.
However, with the increase of $|\mathcal{A}|$, the output dimension is positively correlated with ${|\mathcal{A}|}^2$, leading the method difficult to guarantee convergence.
Therefore, based on the Lemma~\ref{lemma2}, we train the N-value network with the following loss:

\begin{align}
\label{eq: N-value}
L(\theta) = &\mathbb{E}_{( s_t,a_t,s_{t+1},\pi(\cdot|s_t)) \sim \mathcal{D}}[ N_{\theta}(s_t,a_t,\cdot) \\ \notag & - \log  \frac{P^{\rm inv}_{\psi}(\cdot|s_t,s_{t+1},\pi(\cdot|s_t))}{\pi(\cdot|s_t)} ]^2
\end{align}

% the output of the former method will increase by the speed ${|A|}^2$. As a result, this method is difficult to guarantee the convergence.

% Utilizing the conclusion in Lemma 2, after we successfully get the inverse dynamic model, we can train the N-value network with data $\langle s_t,a_t,s_{t+1},\pi(\cdot|s_t) \rangle \sim D_{\pi}$.

% To train the similarity factor model N, we specify the loss as:

where $P^{\rm inv}_{\psi}(\cdot|s_t,s_{t+1},\pi(\cdot|s_t))$ is the trained inverse dynamic model.
In practice, the modified inverse dynamic model $P^{\rm inv}_{\psi}$ and N-value network $N_{\theta}$ are trained iteratively.
$P^{\rm inv}_{\psi}$ updates faster than N-value network to guarantee the convergence.
Based on the trained N-value network, we calculate the similarity factor between any two actions $a_i, a_j$:
\begin{align}
M(s_t,a_i,a_j) = N_{\theta}(s_t,a_i,a_i) - N_{\theta}(s_t,a_i,a_j).
\end{align}

% In the practical implementation, the input of the N-value network is $(s, a)$, and the output is a $1 \times |\mathcal{A}|$ dimensional vector.
In the practical implementation, we iterate over each action $a_i$ in the discrete action set to obtain the similarity factor matrix $M(s_t, \cdot, \cdot)$ ($|\mathcal{A}| \times |\mathcal{A}|$), where every element
represents the KL-divergence between any two actions.
Then, we can utilize $M(s_t, \cdot, \cdot)$ to construct state-dependent action clusters and eliminate the redundant actions as illustrated in Lemma~\ref{lemma1}.

% We name $M(s_t, \cdot, \cdot)$ similarity factor matrix, where every element
% represents the KL-divergence between any two actions and
% we can use it to mask the redundant actions.

% In addition, the similarity factor matrix $M(s_t, \cdot, \cdot)$ provides fruitful information, revealing the action space's underlying structure.

We show the overall framework in Algorithm~\ref{alg: NPM} in Appendix~\ref{appendix: alg} and Figure~\ref{fig: framework}.
Our algorithm consists of two parts.
The former is to train the similarity factor model.
The latter is eliminating redundant actions during policy optimization.
We note that in complex tasks we instead adopt the curiosity reward~(e.g., $r(s,a)=N(s,a)^{-\frac{1}{2}}$)~\cite{pathak2017curiosity} to update the initial random policy while training the similarity factor model.

% As Algorithm \ref{alg:NPM} shown, overall algorithm consists of two parts.
% The former is training the similarity factor model with random policy and the latter is eliminating the redundant actions in the reinforcement learning. 

% Meanwhile, the fig.\ref{fig:framework} shows the pipeline of training N-value network and constructing the similarity factor matrix.

% \begin{figure*}[!htb]
% \begin{center}
%  \includegraphics[width=0.95\linewidth]{}
% \end{center}
% \vspace{-2em}
% \caption{The practical implementation of training the N-Value Network and constructing similarity factor matrix.  }
% \label{fig:framework}
% \vspace{-1em}
% \end{figure*}

% \begin{figure*}[t]
%     \centering
%     \includegraphics[scale=0.5]{}
%     \hspace{1mm}
%     \includegraphics[scale=0.45]{}
%     \hspace{1mm}
%     \includegraphics[scale=0.45]{}
%     \hspace{1mm}
%     \includegraphics[scale=0.45]{}
%     \caption{1.}
%     \label{fig: }
% \end{figure*}

\section{Experiments}
\label{sec: exp}

In this section, we aim to evaluate the effectiveness of the redundant action filtering mechanism and answer the following questions: 
(1) How does NPM perform compared with other methods in various action redundancy tasks?
(2) Why and When is NPM effective?
(3) Whether the learned mask can be transferred across multi-tasks?
% (3) Is there exists a universal hyper-parameter that generally works better than the baselines?
% We now describe several experiments that aimed to measure different properties of similarity factor. We include full training details of hyperparameters and architectures in the appendix.

% \begin{figure*}[t]
%    \centering
    % \subfigbottomskip = -5pt
% \subfloat{\includegraphics[width=0.2\hsize, height=0.2\hsize]{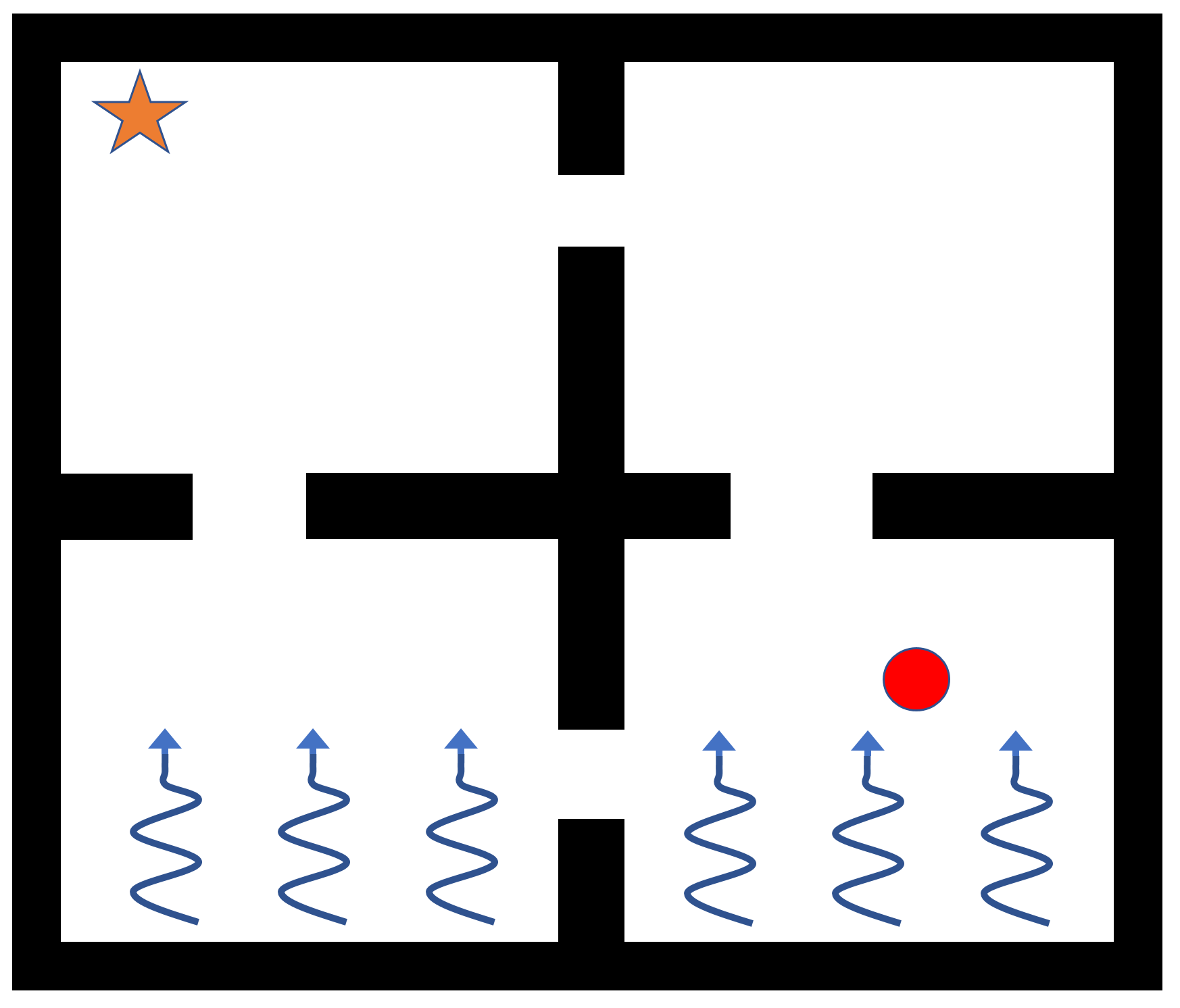}\label{fig: sub_figure1}}
% \subfloat{\includegraphics[width=0.2\hsize, height=0.2\hsize]{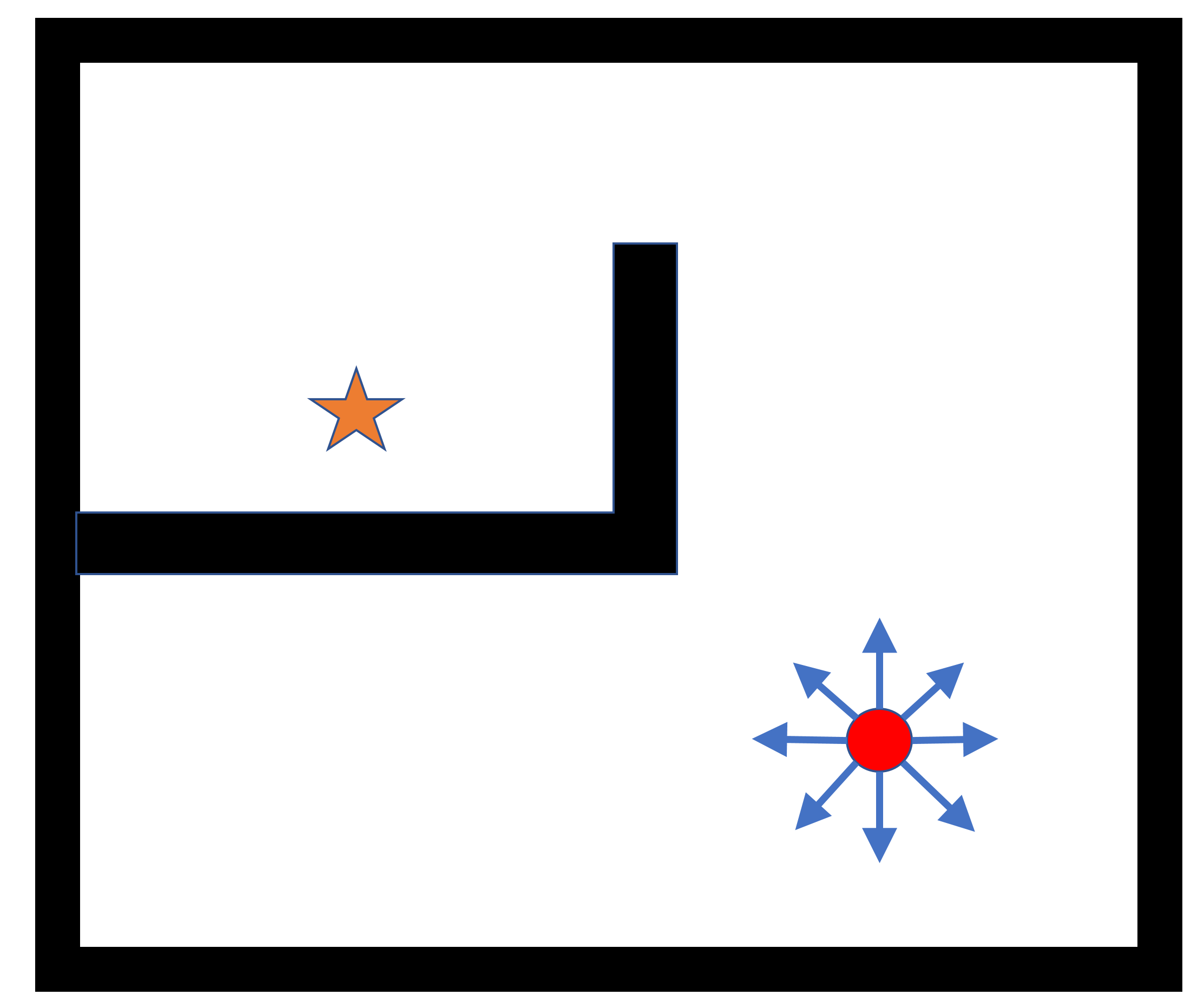}\label{fig: sub_figure1}}
% \subfloat{\includegraphics[width=0.3\hsize, height=0.2\hsize,trim=130 180 100 180,clip]{}\label{fig: sub_figure1}}
% \subfloat{\includegraphics[width=0.2\hsize, height=0.2\hsize,trim=130 180 100 180,clip]{}\label{fig: sub_figure1}}
% \caption{Various action redundancy tasks.From the left to right are Synthetic Action Redundancy task, Combined Action Redundancy task, State-dependent Action Redundancy task and Action Sequence Redundancy task.}
% \label{fig: task}
% \end{figure*}

\subsection{Experimental Setting}
We evaluate our method on three action redundancy tasks: synthetic action redundancy, combined action redundancy, and state-dependent action redundancy including Minigrid environments~\cite{minigrid,chevalier2018babyai} and Atari benchmark.
In all experiments, we utilize image as the sole input and provide evidence for NPM's scalability to high-dimensional pixel-based observations.

% which are shown in Figure~\ref{fig: task}. All our tasks are with stochastic transition and high-dimensional inputs.

% \begin{figure}[h]
%    \centering
%    \includegraphics[scale =0.2]{fig_Unlock Pickup.pdf}
%    \caption{Unlock pickup}
%    \label{fig:maze2ndNPM}
% \end{figure}

\begin{figure}


    \centering\subfloat{\includegraphics[scale=0.06]{fourrroms.png}}
    \hspace{0.5cm}\subfloat{\includegraphics[scale=0.048]{gridcl.png}}
    \caption{Left: Four room task. Right: Maze task.}
    \vspace{-0.5cm}
    \label{fig: task description}
\end{figure}

\paragraph{Synthetic Action Redundancy.}
We consider a controlled synthetic four-room task consistent with~\cite{baram2021action}, shown in Figure~\ref{fig: task description}.
In this task, agent~(red circle) must achieve the goal~(star) to earn the reward within 100 steps.
Meanwhile, the wind in the vertical direction will interfere agent with probability $p$.
We construct a synthetic redundant action space $\mathcal{A}=\{\textit{\rm Top, Bottom, Left,} \overbrace{\rm Right, ..., Right}^{n}\}$, for which the action $\textit{\rm Right}$ was repeated $n$ times.
To reach the goal, the agent must counteract redundancy in action space.
% \begin{wrapfigure}{r}{3.8cm}
%     \vspace{-0.5cm}
%     \includegraphics[width=1.5in]{fourrroms.png}
%     \vspace{-0.5cm}
%     \caption{Four room}
%     \vspace{-0.5cm}
%     \label{fig: task1}
% \end{wrapfigure}

% We wish to determine how well our method handles redundancy in a controlled synthetic environment.
% As such, we constructed a four-room environment in which the initial position of the agent and the goal were at two opposite corners, which refer to ~\cite{baram2021action}. 

% The agent earns a reward of 1 at the goal and 0 elsewhere.
% An episode ends whenever the agent achieves the goal or attempts 100 steps.
% Meanwhile, random wind in vertical direction will move the agent up one step With the probability of $p$.
% We constructed a synthetic redundant action space $\mathcal{A}=\{\textit{\rm Top, Bottom, Left,} \overbrace{\rm Right, ..., Right}^{n}\}$, for which the action $\textit{\rm Right}$ was repeated $n$ times.
% To reach the goal on the far left and receive positive reward, the agent must counteract redundancy in actions pulling her to the right. 
% We compare NPM, MinRed DQN and DQN. Each experiment result is averaged over Five random seeds with the standard deviation.

% \begin{wrapfigure}{r}{3.8cm}
%     \vspace{-0.5cm}
%     \includegraphics[width=1.5in]{gridcl.png}
%     \caption{Maze task}
%     \vspace{-0.5cm}
%     \label{fig: task2}
% \end{wrapfigure}

\begin{figure*}[t]
    \centering
    \subfloat{\includegraphics[scale=0.22]{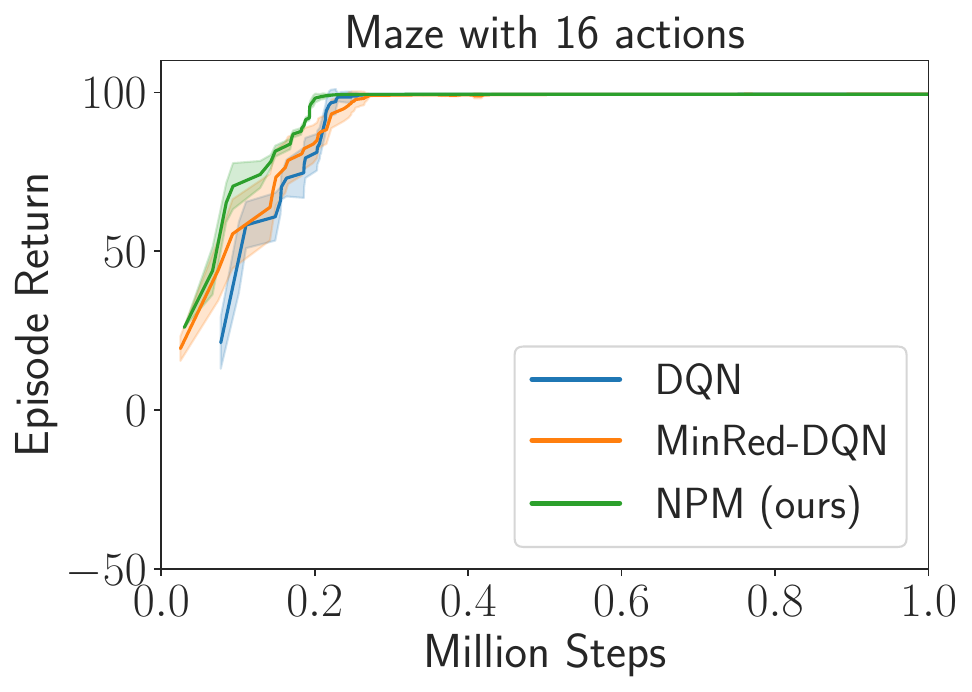}\label{fig: sub_figure1}}
    \hspace{1mm}
    \subfloat{\includegraphics[scale=0.22]{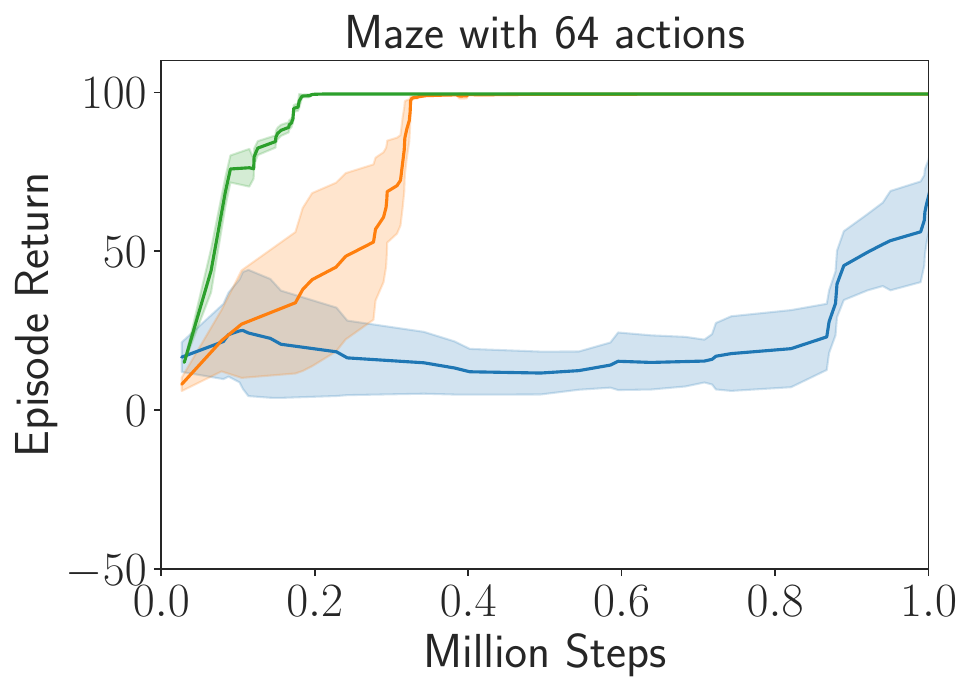}\label{fig: sub_figure2}}
    \hspace{1mm}
    \subfloat{\includegraphics[scale=0.22]{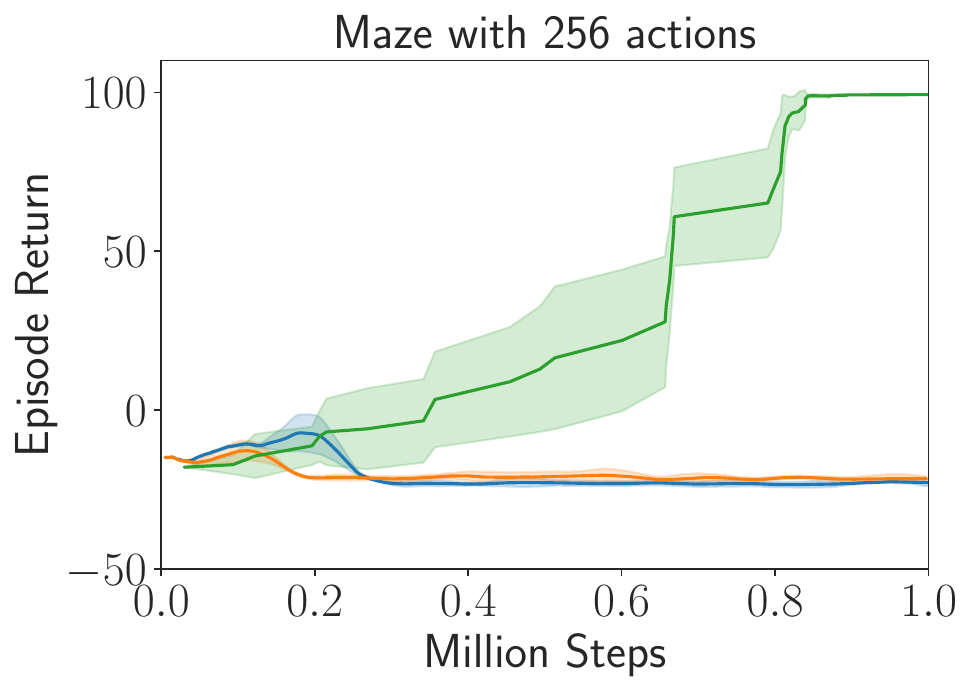}\label{fig: sub_figure3}}
    \caption{The experimental results on 
    Maze with combined action redundancy with actions = $\{16,64,256\}$.
    The experiments are conducted under five random seeds.}
    \label{fig: mazeexp}
\end{figure*}

\begin{figure*}[t]
    \centering
    % \subfloat{\includegraphics[scale=0.2]{}\label{fig: sub_figure1}}
    % \hspace{1mm}
    \subfloat{\includegraphics[scale=0.20]{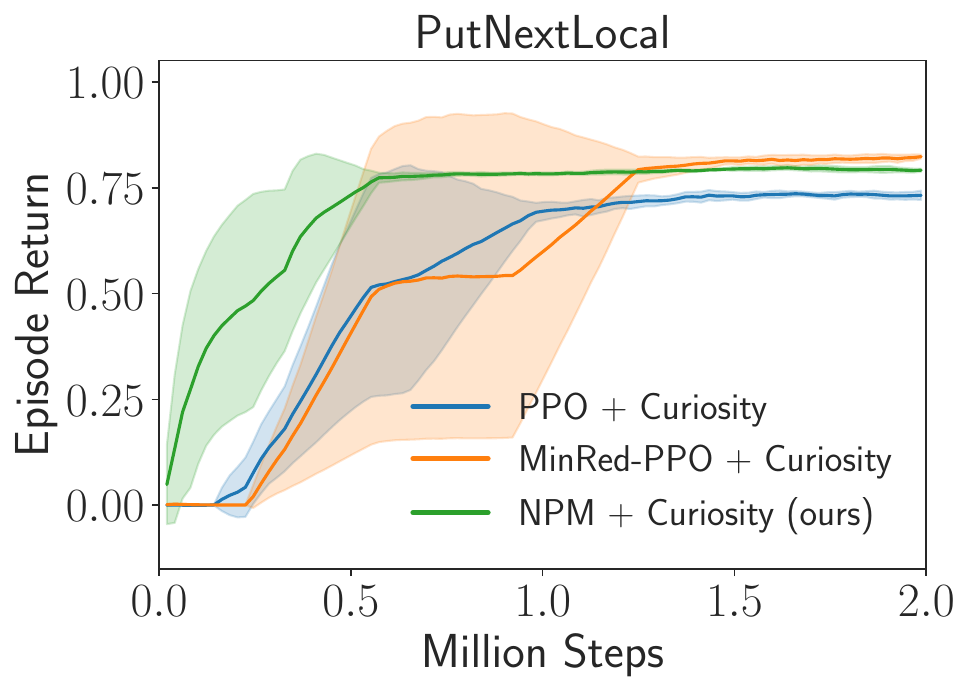}\label{fig: sub_figure1}}
    \subfloat{\includegraphics[scale=0.2]{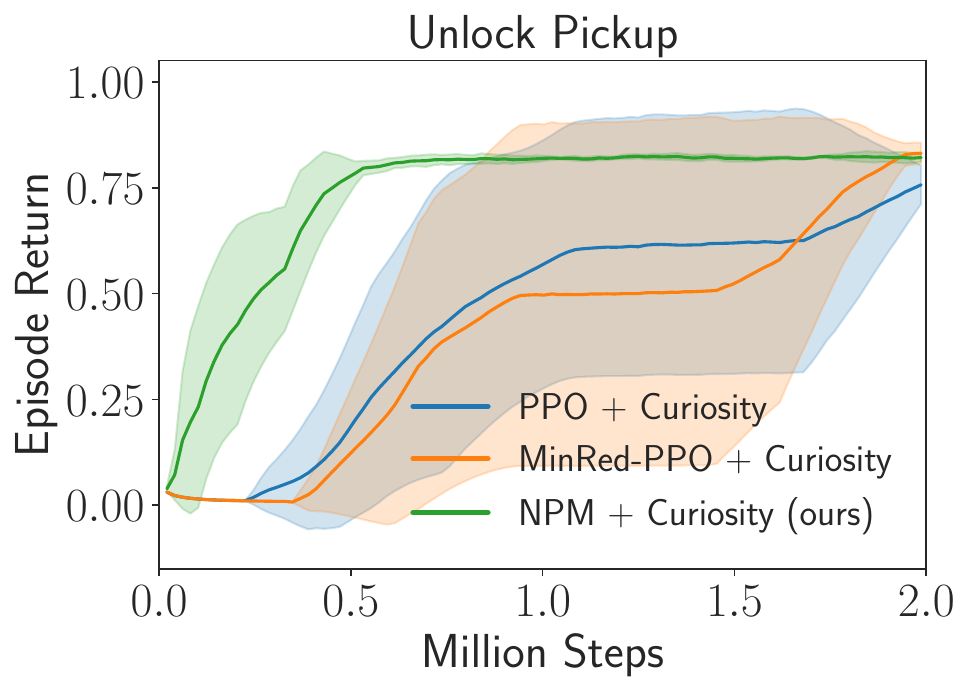}\label{fig: sub_figure1}}
    \subfloat{\includegraphics[scale=0.20]{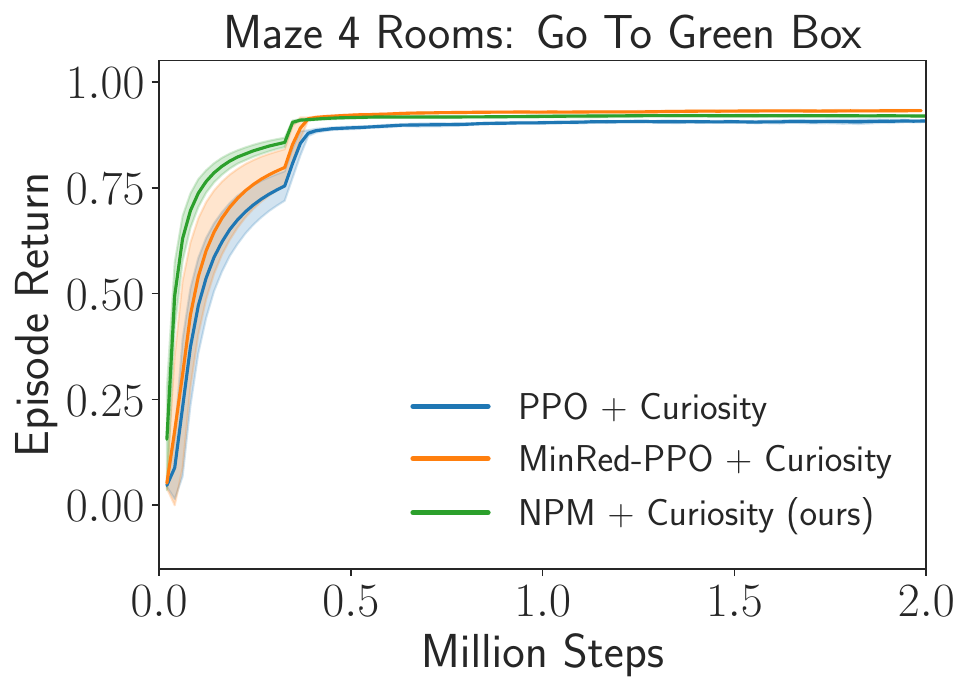}}\label{fig: sub_figure1}
    \subfloat{\includegraphics[scale=0.2]{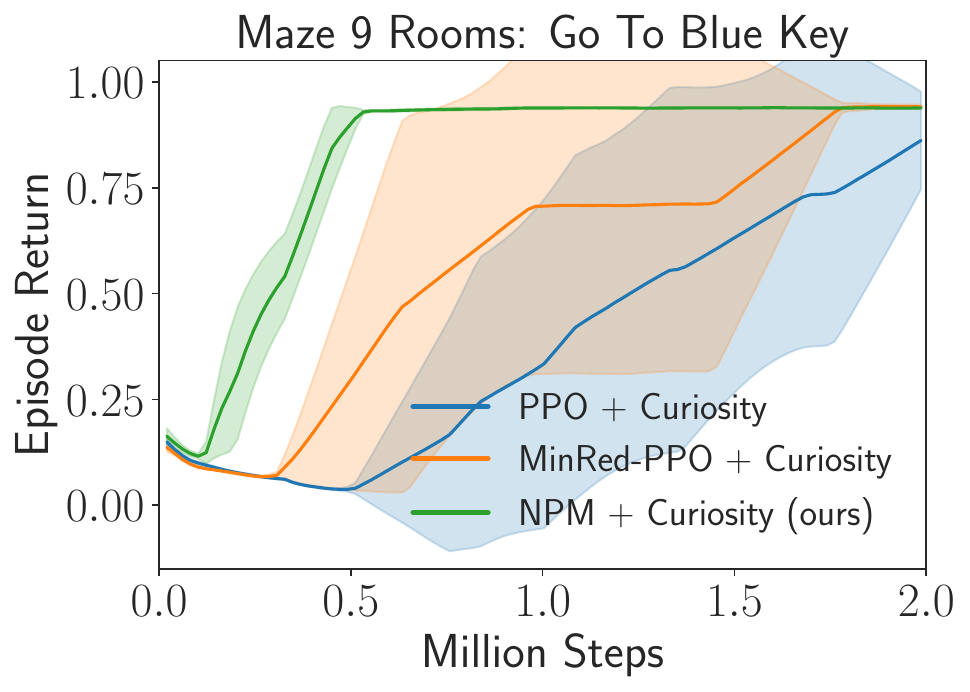}\label{fig: sub_figure2}}

    \subfloat{\includegraphics[scale=0.20]{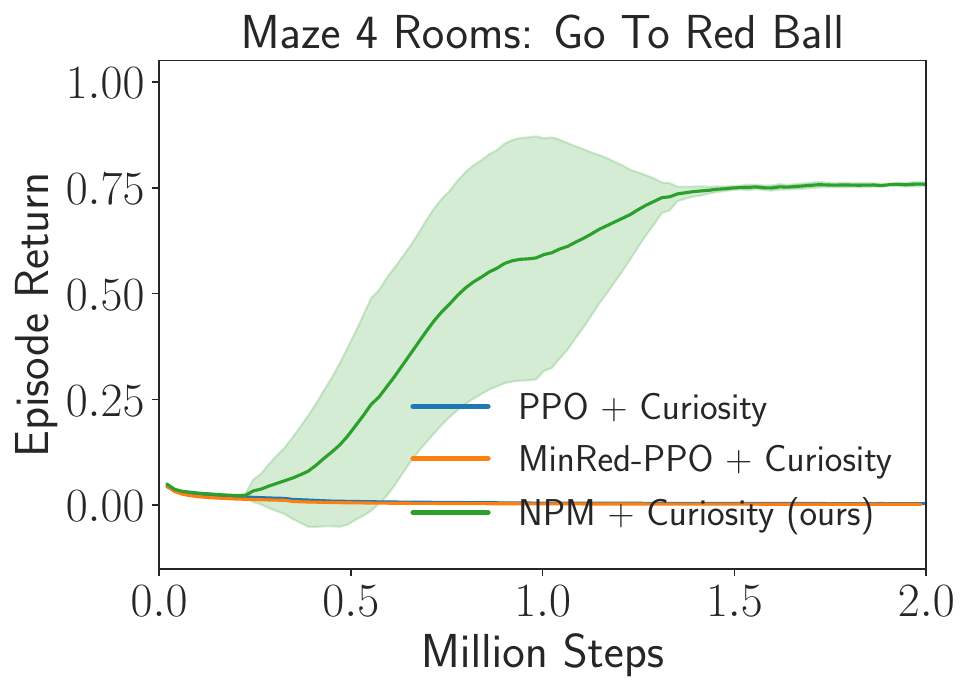}\label{fig: sub_figure1}}
    \subfloat{\includegraphics[scale=0.20]{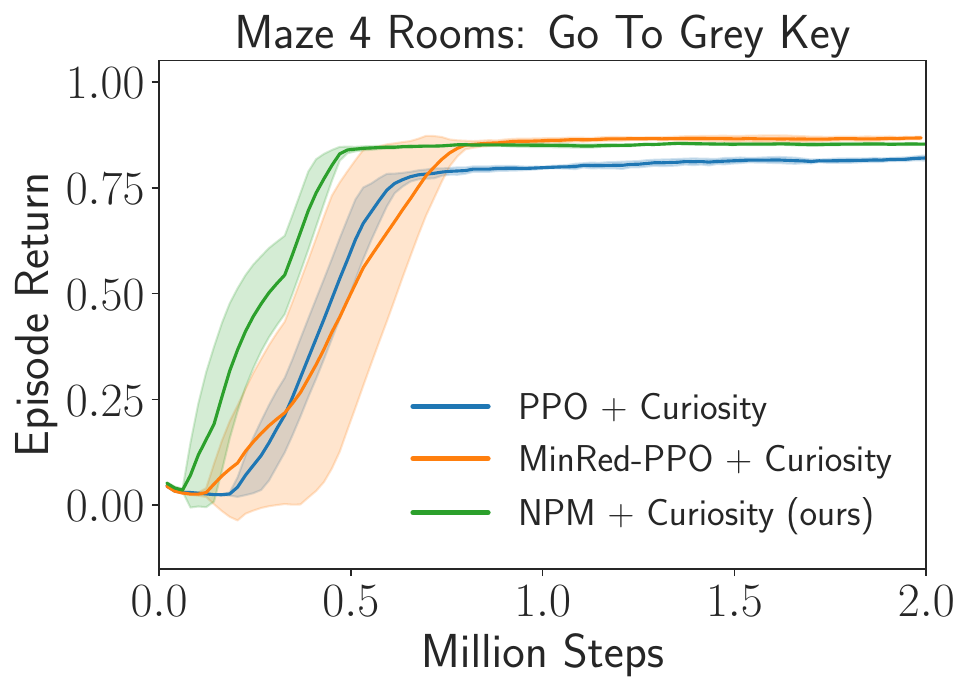}\label{fig: sub_figure1}}
    \subfloat{\includegraphics[scale=0.2]{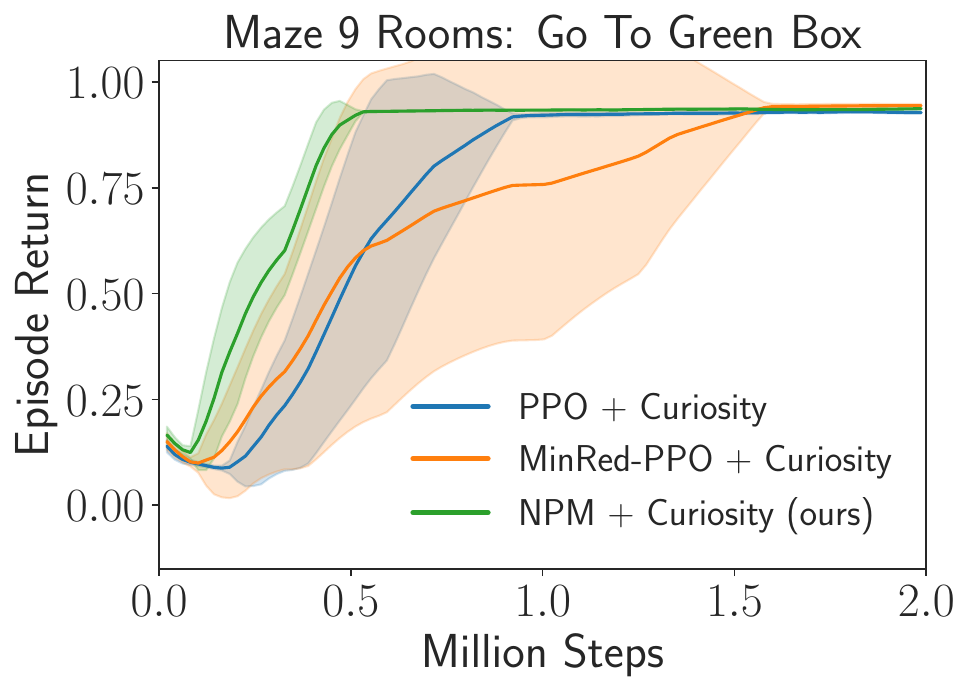}\label{fig: sub_figure3}}
    \subfloat{\includegraphics[scale=0.2]{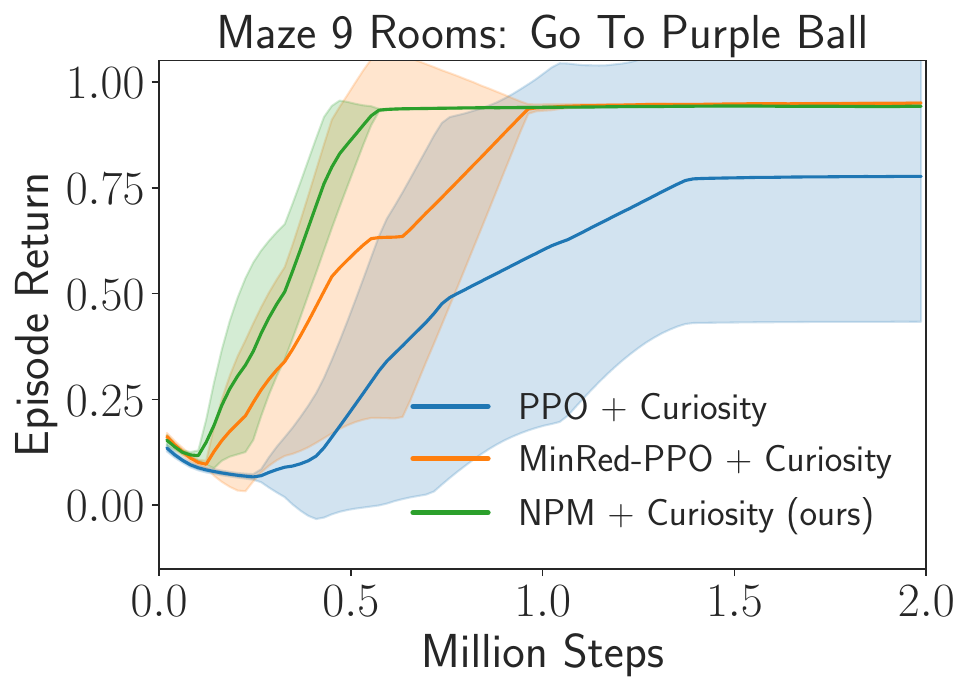}\label{fig: sub_figure3}}
    \caption{Above: The experimental results in the state-dependent action redundancy tasks under five random seeds.
    Below: Transfer results of the learned mask in the Maze 4 Rooms and Maze 9 Rooms.
    }
    \label{fig: task3 result}
\end{figure*}

\paragraph{Combined Action Redundancy.}
We consider the continuous-state maze task consistent with~\cite{chandak2019learning}, shown in Figure~\ref{fig: task description}.
Unlike the Synthetic Action Redundancy, the agent has $m$ equally spaced actuators~(each actuator moves the agent in the direction the actuator is pointing towards), and it can choose whether each actuator should be on or off.
Therefore, the size of the action set is exponential in the number of actuators, that is, $|\mathcal{A}|=2^m$.
The net outcome of an action is the vectorial summation of the displacements associated with the selected actuators.
To earn the reward, the agent must achieve the goal~(star) within 150 steps.

% The agent is rewarded with a small penalty for each time step, and a reward of 100 is given upon reaching the goal position.
% To make the problem more challenging, random noise was added to the action $10\%$ of the time, and the maximum episode length was 150 steps.
% We compare NPM, MinRed DQN, and DQN. Each experiment result is averaged over Five random seeds with the standard deviation.

% \begin{wrapfigure}{r}{3.4cm}
%     \vspace{-1.0cm}
%     \includegraphics[width=1.5in]{gridcl.png}
%     \vspace{-0.5cm}
%     \caption{Maze task}
%     \vspace{-0.5cm}
%     \label{fig: task2}
% \end{wrapfigure}

\paragraph{State-dependent Action Redundancy.}
% We consider the Minigrid environments~\cite{minigrid,chevalier2018babyai} as the state-dependent action redundancy task.
% As shown in Figure~\ref{fig: task3 setting}, there are common identifiers among these tasks: agent, box, key, ball, and door.
% We need to control the agent to complete different tasks.
% For example, PutNextLocal is putting an object next to another object inside a single room with no doors.
We consider the Minigrid environments~\cite{minigrid,chevalier2018babyai} as the state-dependent action redundancy tasks.
As shown in Figure~\ref{fig: task3 setting} in Appendix, there are common identifiers among these tasks: agent, box, key, ball, and door.
We need to control the agent to complete different tasks.
For example, PutNextLocal is putting an object next to another object inside a single room with no doors.
UnlockPickup is picking up a box that is placed in another room behind a locked door.
The Maze 4 and 9 Rooms control the agent to go to an object, and the object may be in another room.
The action space is $\mathcal{A}=$\{Turn Left, Turn Right, Move Forward, Pick Up, Drop, Toggle, Noop\}.
Note that there are state-dependent redundancies in these tasks.
For instance, the agent can pick up an object only when this object is in front of the agent.
Otherwise, the pick-up action will not work, and the agent will stay still.
Therefore, we need to eliminate the action redundancy to complete these tasks efficiently.

\paragraph{Baselines:}
We compare NPM against the state-of-the-art baseline, MinRed~\cite{baram2021action}, which maximizes the next states' entropy to minimize action redundancy.
In this paper, we compare our method with two modified algorithms based on MinRed, named MinRed-DQN and MinRed-PPO. 
In addition, we compare our method with DQN~\cite{mnih2015human}, PPO~\cite{schulman2017proximal}, and PPO with curiosity reward~\cite{pathak2017curiosity}.

\subsection{Experimental Results}
\paragraph{Results of Synthetic Action Redundancy.}
We compare NPM with MinRed-DQN and DQN in the Four-Rooms task with synthetic action redundancy $n=\{1, 8, 16, 32\}$, which is shown in Figure~\ref{fig: FourRoomsexp}.
The experimental results show that with the increase of $n$, DQN converges more slowly or collapses.
The performance of MinRed-DQN will drop if $n$ is large.
Differently, NPM converges quickly and achieves better performance.

Further, we visualize the similarity factor matrix $M(s_t, \cdot, \cdot)$, where every element represents KL-divergence between two actions defined in Definition~\ref{def: similarity factor}.
The experimental results in Figure~\ref{fig: FourRoomsexp} show that the KL value between redundant actions is 0.
This demonstrates that the learned similarity factor model successfully captures the action structure, and we can use it to mask the redundant actions.

\begin{figure*}[t]
    \centering
    \subfloat{\includegraphics[scale=0.4]{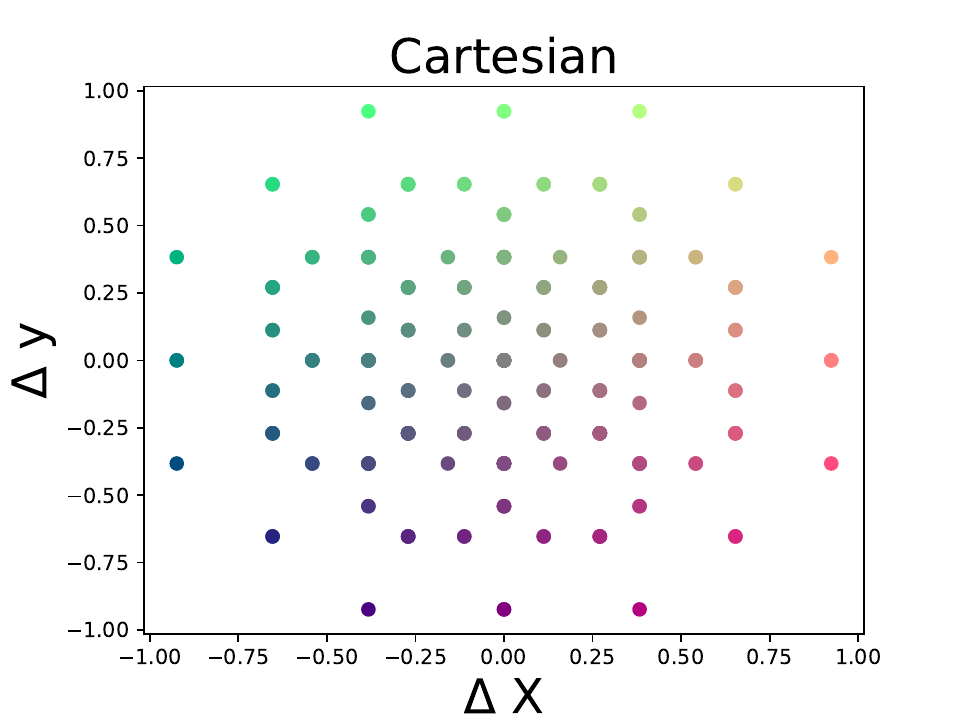}\label{fig: sub_figure1}}
    \subfloat{\includegraphics[scale=0.4]{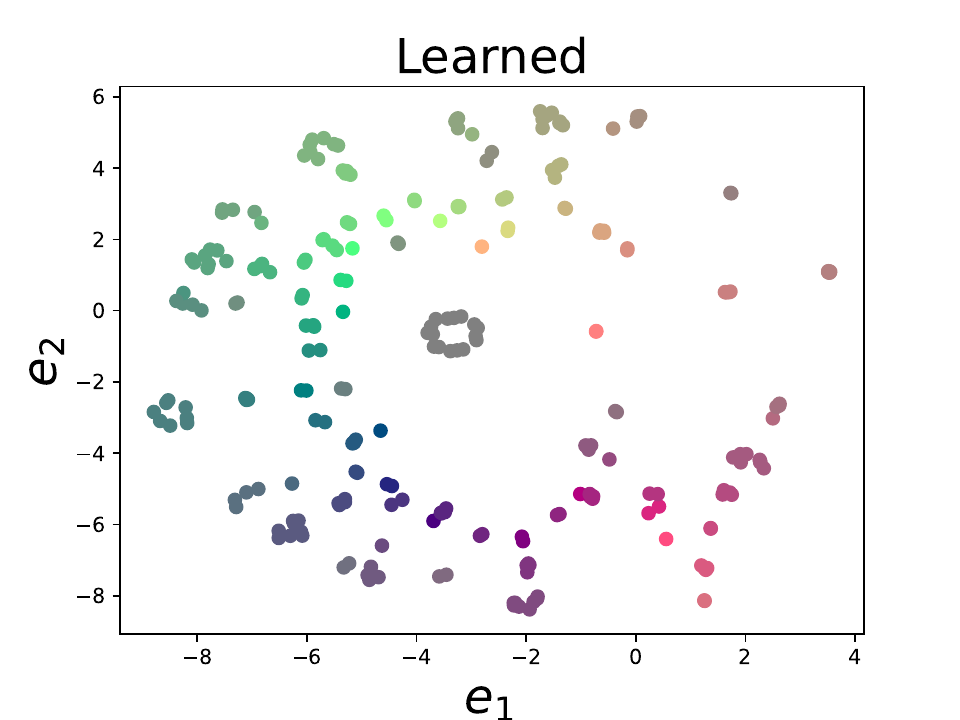}\label{fig: sub_figure2}}
    \caption{Left: 2-D representations for the displacements in the Cartesian coordinate caused by each action.
    Right: t-SNE projection based on similarity factor.
    We find that the actions with similar effects gather together based on the similarity factor.  The relative position of points of different colors in the Cartesian coordinate is consistent with the t-SNE projection result.
    }
    \label{fig: task2 visual}
\end{figure*}

\paragraph{Results of Combined Action Redundancy.}
In the maze, we compare NPM with MinRed-DQN and DQN with $|\mathcal{A}|=\{2^4, 2^6, 2^8\}$.
The experimental results in Figure~\ref{fig: mazeexp} show that NPM performs better than MinRed-DQN and DQN.
We further visualize the similarity factor.
Specifically, we first visualize the displacement in the Cartesian coordinate caused by each action.
We mark actions with a similar effect as similar colors~(There are some overlapping points).
Then, we calculate the t-SNE projection based on the similarity factor shown in Figure~\ref{fig: task2 visual}.
The experimental results show two interesting phenomenons:
The actions with similar effects gather together based on the similarity factor. 
Second, the relative position of points of different colors in the Cartesian coordinate is consistent with the t-SNE projection result.
This demonstrates that the relationship between different actions is also revealed in the learned representations.

\paragraph{Results of State-dependent Action Redundancy.}
We compare NPM with PPO and Minred+PPO with the curiosity reward.
The experimental result in Figure~\ref{fig: task3 result} shows NPM achieves superior performance than baselines.
In addition, we test the transfer ability of the learned mask.
% Specifically, we train the prior in the ``Maze 4 Rooms: Go To Green Box'' and ``Maze 9 Rooms: Go to Blue Key''.
% Next, we respectively train the RL policy in the second phase in the ``Maze 4 Rooms: Go To Red Ball, Grey Key'' and ``Maze 9 Rooms: Go to Blue Key, Purple Ball''.
Specifically, we train the action mask only in the ``Maze 4 Rooms: Go To Green Box'' and ``Maze 9 Rooms: Go to Blue Key'' in the first phase.
Next, we respectively train the RL policy with the corresponding action masks in the second phase in the ``Maze 4 Rooms: Go To Red Ball, Grey Key'' and ``Maze 9 Rooms: Go to Blue Key, Purple Ball''.
The experimental results in Figure~\ref{fig: task3 result} show that the learned mask of NPM can transfer across multi-tasks. 

Further, we visualize the learned similarity factor.
The results in Figure~\ref{fig: task3 visual} and Figure~\ref{fig: different action clusters} in Appendix show that the agent has different action clusters in various state positions, which can be successfully captured based on the learned similarity factor.

\subsection{Ablation Study}
\paragraph{How to select $\epsilon$} is an important issue in our algorithm.
As shown by Lemma~\ref{lemma1}, to limit the performance error to an acceptable level, we need to adjust the value of $\epsilon$ as the reward range $[0, r_{\max}]$ in the environment increases.
In typical scenarios where the range of reward is not particularly large, it is appropriate to choose the range of hyperparameters $\epsilon$ between 0.05 and 0.5 in practical implementation.
We conduct the ablation studies for $\epsilon$ $\in$ \{0.01, 0.05, 0.1, 0.5, 1, 5\}.
The results in Figure~\ref{fig: ablation_epsilon} in Appendix~\ref{appendix: addi exp} show that NPM performs robust when $\epsilon$ $\in$ (0.05, 0.5).

\paragraph{Modified inverse dynamic model:}
We compare the modified inverse dynamic model with the original model in the combined action redundancy domain~(maze task).
We select various action numbers $n$ in the maze task.
The experimental results in Figure~\ref{fig: ablation_modified} in Appendix~\ref{appendix: addi exp} show that the modified inverse dynamic model~(named modified) learns faster and achieves higher accuracy than the original dynamic model~(named original).
The result is consistent with the statement in section~\ref{subsec: inverse model}.

\paragraph{Curiosity Reward:}
We conduct ablation studies for the curiosity reward used in the state-dependent action redundancy tasks.
The results in Figure~\ref{fig: ablation_curiosity} in Appendix~\ref{appendix: addi exp} show that the curiosity reward is essential in complex tasks.
Further, we visualize the visitation locations in the Unlock-Pickup task.
The result in Figure~\ref{fig: heatmap} in Appendix shows that agent driven by the curiosity reward succeeds in adequate exploration.

\paragraph{Extend to Atari Tasks:}
We conduct experiments on another state-dependent action redundancy tasks~(e.g., Atari 2600).
The experimental results in Figure~\ref{fig:DemonAttack cluster} and Figure~\ref{fig: Atari result} in Appendix show our method can effectively remove the redundant actions and improve the current baselines.

% \section{Limitations and Future Work}
% \textcolor{red}{
% Our current method is limited to discrete actions, while continuous action spaces will introduce high complexity and we can not simply eliminate actions with mask. Meanwhile, We realize the potential benefits of using similarity factor to achieve better generalization across actions, and in our future work, we will continue to explore ways to deal with continuous action spaces and leverage the underlying structure of action space.
% }

\section{Conclusion}
In this paper, we construct a framework to reveal the underlying structure of action space without prior knowledge.
Our theoretical analysis has yielded a crucial finding: the policy derived from our framework has minimal impact on the performance of the original policy.
Building upon these insights, we propose a simple yet efficient algorithm, known as NPM.
% It uses the similarity factor to precisely identify and remove redundant actions.
% To enable the practical application of NPM, we also introduce a modified inverse dynamic model.
In practice, our approach consistently outperforms existing methods across a wide range of tasks.
In future, we will explore the extension of NPM into continuous action spaces and its potential applications in robotics.
% Looking ahead, our future research will explore the extension of NPM into continuous action spaces and its potential applications in robotics.

% Our theoretical analysis reveals that the policy derived from our framework has little effect on the original policy's performance.

% it may be interesting to explore how NPM can be extended into other reinforcement learning settings, including multi-agent reinforcement learning~\cite{rashid2020monotonic} or hierarchical reinforcement learning~\cite{nachum2018data}.
\section{Acknowledgments}
This work is supported by National Natural Science Foundation of China under Grant No. 62192751, in part by Key R\&D Project of China under Grant No. 2017YFC0704100, the 111 International Collaboration Program of China under Grant No. BP2018006, and in part by the BNRist Program under Grant No. BNR2019TD01009, the National Innovation Center of High Speed Train R\&D project (CX/KJ-2020-0006), in part by the InnoHK Initiative, The Government of HKSAR; and in part by the Laboratory for AI-Powered Financial Technologies.

\bibliography{aaai24}

%%%%%%%%%%%%%%%%%%%%%%%%%%%%%%%%%%%%%%%%%%%%%%%%%%%%%%%%%%%%

\clearpage
\appendix

\onecolumn
\section{Algorithm}
\label{appendix: alg}

\begin{algorithm}[h]
\caption{No Prior Mask Reinforcement Learning}
\label{alg: NPM}
\begin{algorithmic}[1]
\STATE {\bf Inputs:} Initialize policy $\pi_{\phi}$, inverse dynamics model $P^{\rm inv}_{\psi}$, N-value network $N_{\theta}$ with $\phi_0, \psi_0, \theta_0$.
% \STAET {\bf Hyper-Parameter:} Set the hyper-parameter $\epsilon$ and N-value network update interval $T$.
\STATE {\bf Hyper-Parameter:} Set the hyper-parameter $\epsilon$ and N-value network update interval $T$.
\STATE \textcolor{purple}{Phase 1: Train No Prior Mask}
\FOR{iteration $i=0,1,2,...$}
\STATE Rollout with Policy( random initialized) and store data into buffer $\mathcal{D}$
\STATE Sample transitions from buffer $\mathcal{D}$
\STATE Update inverse dynamics model parameters $\psi$ based on the Equation~\ref{eq: mask}
\IF{ $i$ \% $T == 0$ }
{
\STATE Update N-value network parameters $\theta$ based on the Equation~\ref{eq: N-value}
}
\ENDIF
\IF{ task is complex }
{
\STATE Update policy parameters based on the curiosity reward
}
\ENDIF
\ENDFOR

\STATE \textcolor{purple}{Phase 2: Optimize Policy based on the No Prior Mask}
\FOR{$t=0,1,2,...$}
\STATE Evaluate the similarity factor matrix $M(s_t, \cdot, \cdot)$ using N-Value network $N_{\theta}$
% \STATE //Construct action clusters
\STATE Construct minimal action space $\mathcal{A}_{s}^{\epsilon} = \{ a_{A_0}, a_{A_1},...,a_{A_{l-1}}\}$
% \STATE //Eliminate redundant actions
\STATE Sample action $a_{A_m}$ based on the Equation~\ref{eq: value-based}
or Equation~\ref{eq: pi mask}
\STATE Execute action $a_{A_m}$ and store data into buffer $\mathcal{D}$
\STATE Optimiaze $\pi_{\phi}$ using reinforcement learning algorithm
\ENDFOR
\STATE {\bf Return:} Policy parameters $\phi$ and N-value network parameters $\theta$ \\
\end{algorithmic}
\end{algorithm}

\clearpage
\section{Proofs}
\label{appendix: proof}
\subsection{Proof of Lemma~\ref{lemma1}}
\begin{lemma*}
\label{lemma1-app}
For any policy $\pi$ and $A_{s,\epsilon}^m$, we can choose arbitrarily one action $a_{A_m} \in A_{s,\epsilon}^m$ to represent the sub-action space. We denote
\begin{align}
\pi_{A_{s,\epsilon}^m}(a \mid s)= \begin{cases}\sum_{a^{\prime} \in A_{s,\epsilon}^m} \pi\left(a^{\prime} \mid s\right) & , a = a_{A_m} \\ 0 &, a \in A_{s,\epsilon}^m \backslash \{ a_{A_m}\} \\ \pi(a \mid s) & , \text {\rm o.w }\end{cases}
\end{align}

then
\begin{align}
\left\|V^\pi-V^{\pi_{A_{s,\epsilon}^m}}\right\|_{\infty} \leq \frac{\gamma r_{max}}{(1-\gamma)^2} \sqrt{2\epsilon}
\end{align}
\end{lemma*}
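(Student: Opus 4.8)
The plan is to view each policy as inducing a Markov reward process on $\mathcal{S}$ and to bound the value gap by (i) controlling how much the two induced transition kernels differ and (ii) closing a self-referential inequality in the $\infty$-norm. A crucial simplification is that the reward $r$ depends only on the state, so it is identical under both policies and cancels in the difference $V^{\pi}-V^{\pi'}$, where I write $\pi':=\pi_{A_{s,\epsilon}^m}$. Writing the induced kernels $P^{\pi}(s'\mid s)=\sum_{a}\pi(a\mid s)P(s'\mid s,a)$ and analogously for $\pi'$, the first observation is that $\pi$ and $\pi'$ agree on every action outside the cluster, so their kernel difference telescopes onto the cluster:
\begin{align}
P^{\pi}(\cdot\mid s)-P^{\pi'}(\cdot\mid s)=\sum_{a'\in A_{s,\epsilon}^m}\pi(a'\mid s)\bigl[P(\cdot\mid s,a')-P(\cdot\mid s,a_{A_m})\bigr].
\end{align}

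Next I would bound the total-variation distance between the induced kernels. Applying the triangle inequality, then Pinsker's inequality, then the defining cluster condition $M(s,a',a_{A_m})=D_{\mathrm{KL}}\bigl(P(\cdot\mid s,a')\,\|\,P(\cdot\mid s,a_{A_m})\bigr)<\epsilon$, and finally $\sum_{a'\in A_{s,\epsilon}^m}\pi(a'\mid s)\le 1$, gives the uniform bound
\begin{align}
\bigl\|P^{\pi}(\cdot\mid s)-P^{\pi'}(\cdot\mid s)\bigr\|_{\mathrm{TV}}\le\sum_{a'\in A_{s,\epsilon}^m}\pi(a'\mid s)\sqrt{\tfrac{1}{2}M(s,a',a_{A_m})}\le\sqrt{\tfrac{\epsilon}{2}}.
\end{align}
Here the symmetry of $\|\cdot\|_{\mathrm{TV}}$ means the direction of the KL in $M$ is harmless, since the cluster condition bounds both orderings by $\epsilon$.

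Then I would set up the performance-difference recursion. Subtracting the two Bellman equations and inserting $\pm\sum_{s'}P^{\pi'}(s'\mid s)V^{\pi}(s')$ splits the gap into a kernel-mismatch term, bounded by H\"older as $\|P^{\pi}(\cdot\mid s)-P^{\pi'}(\cdot\mid s)\|_1\cdot\|V^{\pi}\|_\infty=2\|\cdot\|_{\mathrm{TV}}\cdot\|V^{\pi}\|_\infty$ with $\|V^{\pi}\|_\infty\le r_{\max}/(1-\gamma)$, plus a contraction term $\gamma\|V^{\pi}-V^{\pi'}\|_\infty$. Taking the supremum over $s$ yields
\begin{align}
\|V^{\pi}-V^{\pi'}\|_\infty\le\frac{2\gamma r_{\max}}{1-\gamma}\sqrt{\tfrac{\epsilon}{2}}+\gamma\,\|V^{\pi}-V^{\pi'}\|_\infty,
\end{align}
and rearranging the $\gamma$-term to the left reproduces exactly $\|V^{\pi}-V^{\pi'}\|_\infty\le\frac{\gamma r_{\max}}{(1-\gamma)^2}\sqrt{2\epsilon}$.

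The main obstacle I anticipate is not conceptual but the careful bookkeeping of constants: one must keep the normalization $\|\cdot\|_{\mathrm{TV}}=\tfrac12\|\cdot\|_1$ consistent with the Pinsker constant $\tfrac12$ and the factor of $2$ from converting back to the $\ell_1$ norm, so that the final coefficient lands on $\sqrt{2\epsilon}$ and not $\sqrt{\epsilon/2}$. In particular, using the crude bound $\|V^{\pi}\|_\infty\le r_{\max}/(1-\gamma)$ rather than the tighter centered span seminorm is precisely what matches the stated constant. A secondary subtlety is that the clusters are state-dependent, so the kernel-mismatch bound must hold uniformly in $s$ before the supremum is taken; this is automatic because the $\sqrt{\epsilon/2}$ estimate above holds at every state.
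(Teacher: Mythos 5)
Your proof is correct, and the constant works out exactly: the telescoping identity for the induced kernels is right, Pinsker with the cluster condition gives $\|P^{\pi}(\cdot\mid s)-P^{\pi'}(\cdot\mid s)\|_{\mathrm{TV}}\le\sqrt{\epsilon/2}$ uniformly in $s$, and the self-referential Bellman inequality closes to $\frac{2\gamma r_{\max}}{(1-\gamma)^2}\sqrt{\epsilon/2}=\frac{\gamma r_{\max}}{(1-\gamma)^2}\sqrt{2\epsilon}$. However, your route differs from the paper's in the accumulation step. The paper constructs an auxiliary MDP $\hat{\mathcal{M}}$ whose transition kernel replaces $P(\cdot\mid s,a)$ by $P(\cdot\mid s,a_{A_m})$ on the cluster, observes $\hat{P}^{\pi}=P^{\pi_{A_{s,\epsilon}^m}}$, bounds the per-$(s,a)$ $\ell_1$ transition gap by $\sqrt{2\epsilon}$ via Pinsker (same ingredient as yours), and then invokes an external result (Lemma B.2 of Janner et al.) stating that the time-$t$ state marginals drift by at most $t\sqrt{2\epsilon}$; summing $\sum_{t}\gamma^{t}r_{\max}\,t\sqrt{2\epsilon}$ and using $\sum_{t}\gamma^{t}t=\frac{\gamma}{(1-\gamma)^2}$ gives the bound. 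You instead subtract the two Bellman fixed-point equations for the induced Markov reward processes (exploiting that $r$ depends only on the state, so it cancels) and solve the resulting contraction inequality $\Delta\le\frac{\gamma r_{\max}}{1-\gamma}\sqrt{2\epsilon}+\gamma\Delta$, which is legitimate since $\Delta=\|V^{\pi}-V^{\pi'}\|_{\infty}\le r_{\max}/(1-\gamma)<\infty$. Your version is self-contained (no auxiliary MDP, no imported simulation lemma) and handles the state-dependence of the clusters transparently, since the kernel bound is established pointwise in $s$ before taking the supremum; the paper's marginal-drift route, by contrast, makes the horizon dependence $t\delta$ explicit, which is the more informative form if one wants finite-horizon or time-truncated variants of the bound. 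Two minor remarks: your aside about the symmetry of total variation is unnecessary, because the cluster condition already bounds $D_{\mathrm{KL}}\bigl(P(\cdot\mid s,a')\,\|\,P(\cdot\mid s,a_{A_m})\bigr)$ in exactly the orientation Pinsker needs; and your observation that the crude bound $\|V^{\pi}\|_{\infty}\le r_{\max}/(1-\gamma)$ (rather than a centered span bound, which would save a factor of $2$) is what reproduces the paper's constant is accurate --- both proofs are loose in the same place.
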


% \subsection{Proof of Lemma~1}
% In this section, we show the proof of lemma 1, which shows that any resigned distribution over the set of actions in $K$
% lead to little effect on policy's performance. 

% \textbf{Lemma 1}.
% Let $K \subset A $ s.t.
% $$M(s,a_i,a_j) < \epsilon, \forall a_i,a_j \in K$$
% %where $(s,a) \subset \tau \sim \pi, \forall a \in K $
% and denote
% $$\pi_K(a \mid s)= \begin{cases}\mu(a \mid s) \frac{\sum_{a^{\prime} \in K} \pi\left(a^{\prime} \mid s\right)}{\sum_{a^{\prime} \in K} \mu\left(a^{\prime} \mid s\right)} & , a \in K \\ \pi(a \mid s) & , \text { o.w }\end{cases}$$
% where $\mu$ is any distribution over the set of actions in K.

% then
% $$\left\|V^\pi-V^{\pi_K}\right\|_{\infty} \leq \frac{6 \gamma}{(1-\gamma)^4} \sqrt{2\epsilon}$$

\begin{proof}
Let $\mathcal{M}$ be the MDP defined by $(\mathcal{S}, \mathcal{A}, P, r, \gamma)$, and let $\hat{\mathcal{M}}$ be the MDP defined by $(\mathcal{S}, \mathcal{A}, \hat{P}, r, \gamma)$, where
% $$
% \hat{P}\left(s^{\prime} \mid s, a\right)= \begin{cases}\sum_{a^{\prime} \in K} P\left(s^{\prime} \mid s, a^{\prime}\right) \mu\left(a^{\prime} \mid s\right) & , a \in K \\ P\left(s^{\prime} \mid s, a\right) & , \text { o.w. }\end{cases}
% $$
$$
\hat{P}\left(s^{\prime} \mid s, a\right)= \begin{cases} P \left(s^{\prime} \mid s, a\right) & , a = a_{A_m} \\ P \left(s^{\prime} \mid s, a_{A_m}\right) &, a \in A_{s,\epsilon}^m \backslash \{ a_{A_m}\} \\ P \left(s^{\prime} \mid s, a\right) & , \text {\rm o.w }\end{cases}
$$
By definition of $\hat{\mathcal{M}}$ and $\pi_{A_{s,\epsilon}^m}$ we have that $\hat{P}^\pi=P^{\pi_{A_{s,\epsilon}^m}}$, which transfer the policy evaluation of $\pi_{A_{s,\epsilon}^m}$ on $\mathcal{M}$ to that of $\pi$ on $\hat{\mathcal{M}}$.

Next, we will prove the following equation, which shows that the total variation distance between $P$ and $\hat{P}$ will be bounded for all $s,a$.

\begin{equation}
\label{eq:tdleq}
  \sum_{s^{\prime}}\left|P\left(s^{\prime} \mid s, a\right)-\hat{P}\left(s^{\prime} \mid s, a\right)\right| = \begin{cases} 0 & , a = a_{A_m} \\ \sum_{s^{\prime}}\left|P\left(s^{\prime} \mid s, a\right)-P\left(s^{\prime} \mid s, a_{A_m}\right)\right| &, a \in A_{s,\epsilon}^m \backslash \{ a_{A_m}\} \\ 0 & , \text {\rm o.w }\end{cases}
\end{equation}

We note the following relationship between the total variation divergence and KL divergence (~\cite{10.1093/acprof:oso/9780199535255.001.0001}):

\begin{align*}
    D_{T V}(p \| q)^2 \leq \frac{1}{2} D_{\mathrm{KL}}(p \| q)
\end{align*}

where
\begin{align*}
    D_{T V}(p \| q)=\frac{1}{2} \sum_i\left|p_i-q_i\right|.
\end{align*}

Therefore, we have that for $a \in A_{s,\epsilon}^m \backslash \{ a_{A_m}\}$,

\begin{equation}
\label{eq:tdleq2}
  \sum_{s^{\prime}}\left|P\left(s^{\prime} \mid s, a\right)-P\left(s^{\prime} \mid s, a_{A_m}\right)\right| \leq 
  \sqrt{2 D_{\text{\rm KL}}(P\left(s^{\prime} \mid s, a\right) \| P\left(s^{\prime} \mid s, a_{A_m}\right)) }
   < \sqrt{2 \epsilon}
\end{equation}

where in the step, we use the condition
$$M(s_t,a_i,a_j) = D_{\text{\rm KL}}(P\left(s^{\prime} \mid s, a_i\right) \| P\left(s^{\prime} \mid s, a_j\right)) < \epsilon , \forall a_i, a_j \in A_{s,\epsilon}^m$$

Therefore, we have that
\begin{equation}
    \sum_{s^{\prime}}\left|P\left(s^{\prime} \mid s, a\right)-\hat{P}\left(s^{\prime} \mid s, a\right)\right| < \sqrt{2 \epsilon}, \forall s, a
\end{equation}

% Next, We define the similarity factor $M(s, a_i, a_j)$ by KL-divergence following state $s_t$ and any two actions $a_i$, $a_j$:
% \begin{align*}
%     M(s,a_i,a_j) = D_{\text{KL}}(P(s^{\prime}|s_t,a_i) \| P(s^{\prime}|s_t,a_j)).
% \end{align*}

% We note the following relationship between the total variation divergence and KL divergence (~\cite{10.1093/acprof:oso/9780199535255.001.0001}):

% \begin{align*}
%     D_{T V}(p \| q)^2 \leq \frac{1}{2} D_{\mathrm{KL}}(p \| q)
% \end{align*}

% where
% \begin{align*}
%     D_{T V}(p \| q)=\frac{1}{2} \sum_i\left|p_i-q_i\right|
% \end{align*}.

% If $M(s,a_i,a_j) = D_{\text{KL}}(P(s^{\prime}|s_t,a_i) \| P(s^{\prime}|s_t,a_j)) < \epsilon$, then we have:
% \begin{equation}
% \label{eq:tdleq}
%   \sum_{s^{\prime}}\left|P\left(s^{\prime} \mid s, a\right)-\hat{P}\left(s^{\prime} \mid s, a\right)\right|< \sqrt{2\epsilon}, \forall s, a  
% \end{equation}

Similar to the proof of Lemma 1 in ~\cite{Tennenholtz2019TheNL}, we use the following result proven in Lemma B.2 in ~\cite{janner2019trust} :

\begin{lemma}
\label{TVDeq}
Let $\mathcal{M}, \hat{\mathcal{M}}$ be MDPs as defined above. If
$$
\sum_{s^{\prime}}\left|P\left(s^{\prime} \mid s, a\right)-\hat{P}\left(s^{\prime} \mid s, a\right)\right|<\delta, \forall s, a
$$
then
$$
\sum_{s_t}\left|P\left(s_t \mid s_0\right)-\hat{P}\left(s_t \mid s_0\right)\right|<t \delta
$$
\end{lemma}

By definition of $\hat{\mathcal{M}}$ and $V^{\pi_{A_{s,\epsilon}^m}}$ we have that $\hat{P}^\pi=P^{\pi_{A_{s,\epsilon}^m}}$. Then, for all $s \in \mathcal{S}$
$$
\begin{aligned}
& \left|V^\pi(s)-V^{\pi_{A_{s,\epsilon}^m}}\right| \\
& =\left|\mathbb{E}_{P^\pi}\left(\sum_{t=0}^{\infty} \gamma^t r\left(s_t\right) \mid s_0=s\right)-\mathbb{E}_{\hat{P}^\pi}\left(\sum_{t=0}^{\infty} \gamma^t r\left(s_t\right) \mid s_0=s\right)\right| \\
& =\left|\sum_{t=0}^{\infty} \gamma^t\left[\mathbb{E}_{P^\pi}\left(r\left(s_t\right) \mid s_0=s\right)-\mathbb{E}_{\hat{P}^\pi}\left(r\left(s_t\right) \mid s_0=s\right)\right]\right| \\
& \leq \sum_{t=0}^{\infty} \gamma^t\left|\mathbb{E}_{P^\pi}\left(r\left(s_t\right) \mid s_0=s\right)-\mathbb{E}_{\hat{P}^\pi}\left(r\left(s_t\right) \mid s_0=s\right)\right| .
\end{aligned}
$$
Writing the above explicitly we get
\begin{equation}
\label{eq:verror}
\begin{aligned}
&\left|V^\pi(s)-V^{\pi_{A_{s,\epsilon}^m}}(s)\right| \\
&\leq \sum_{t=0}^{\infty} \gamma^t\left|\sum_{s_t} r(s_t)\left(P\left(s_t \mid s_0=s\right)-\hat{P}\left(s_t \mid s_0 =s\right)\right)\right| 
\end{aligned}
\end{equation}

Next
$$
\begin{aligned}
& \sum_{s_t} r\left(s_t\right)\left(P\left(s_t \mid s_0=s\right)-\hat{P}\left(s_t \mid s_0 =s\right)\right) \\
& \leq \sum_{s_t} r\left(s_t\right)\left|P\left(s_t \mid s_0=s\right)-\hat{P}\left(s_t \mid s_0=s\right)\right| \\
& \leq r_{\max} \sum_{s_t}\left|P\left(s_t \mid s_0=s\right)-\hat{P}\left(s_t \mid s_0=s\right)\right|
\end{aligned}
$$

Using Lemma~\ref{TVDeq}  we get
$$
\begin{aligned}
&\left|V^\pi(s)-V^{\pi_{A_{s,\epsilon}^m}}(s)\right| \\
&\leq \sum_{t=0}^{\infty} \gamma^t\left|\sum_{s_t} r(s_t)\left(P\left(s_t \mid s_0=s\right)-\hat{P}\left(s_t \mid s_0 =s\right)\right)\right| \\
&\leq \sum_{t=0}^{\infty} \gamma^t r_{\max} \sum_{s_t}\left|P\left(s_t \mid s_0=s\right)-\hat{P}\left(s_t \mid s_0=s\right)\right| \\
&\leq \sum_{t=0}^{\infty} \gamma^t r_{\max}t \sqrt{2\epsilon}
\end{aligned}
$$
The proof follows immediately due to
$$
\sum_{t=0}^{\infty} \gamma^t t= \frac{\gamma}{(1-\gamma)^2}
$$
for $|\gamma|<1$. Then, we have
$$\left\|V^\pi-V^{\pi_{A_{s,\epsilon}^m}}\right\|_{\infty} \leq \frac{\gamma r_{\max}}{(1-\gamma)^2} \sqrt{2\epsilon}$$

\end{proof}

\subsection{Proof of Lemma~\ref{lemma2}}

\begin{lemma*}
\label{lemma2-app}
The similarity factor $M$ can be divided by two terms with the same form $N$, which we refer to as N-value network: 
\begin{align*}
M(s_t,a_i,a_j) = N(s_t,a_i,a_i) - N(s_t,a_i,a_j)
\end{align*}
where
\begin{align*}
N(s_t,a_i,a_j) = \mathop{E} \limits_{s_{t+1} \sim P(\cdot|s_t,a_i)}\log \left[ \frac{P^{\pi}(a_j|s_t,s_{t+1})}{\pi(a_j|s_t)} \right]
\end{align*}
For $i = j$, then we have the latter term
\begin{align*}
N(s_t,a_i,a_i) = \mathop{E} \limits_{s_{t+1} \sim P(\cdot|s_t,a_i)}\log \left[ \frac{P^{\pi}(a_i|s_t,s_{t+1})}{\pi(a_i|s_t)} \right]
\end{align*}
\end{lemma*}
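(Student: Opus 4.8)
The plan is to eliminate the inverse dynamics model from $N$ by a single application of Bayes' rule, reducing each $N$-term to an expectation of a log-ratio of \emph{forward} transition probabilities, and then to show that forming the difference $N(s_t,a_i,a_i)-N(s_t,a_i,a_j)$ collapses exactly to the KL-divergence of Definition~\ref{def: similarity factor}.

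First I would write out the posterior over actions explicitly. Denoting the policy-induced marginal over next states by $P^{\pi}(s_{t+1}\mid s_t)=\sum_{a'}\pi(a'\mid s_t)P(s_{t+1}\mid s_t,a')$, Bayes' rule gives
$$P^{\pi}(a\mid s_t,s_{t+1})=\frac{\pi(a\mid s_t)\,P(s_{t+1}\mid s_t,a)}{P^{\pi}(s_{t+1}\mid s_t)}.$$
The crucial observation — and the real content of the lemma — is that the policy factor cancels, yielding
$$\frac{P^{\pi}(a\mid s_t,s_{t+1})}{\pi(a\mid s_t)}=\frac{P(s_{t+1}\mid s_t,a)}{P^{\pi}(s_{t+1}\mid s_t)}.$$
This is what lets the intractable quantity inside $N$ be rewritten purely in terms of forward dynamics.

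Second, I would substitute this identity into the definition of $N$, turning $N(s_t,a_i,a_j)$ into $\mathbb{E}_{s_{t+1}\sim P(\cdot\mid s_t,a_i)}\log\!\big[P(s_{t+1}\mid s_t,a_j)/P^{\pi}(s_{t+1}\mid s_t)\big]$, and likewise $N(s_t,a_i,a_i)$ with $a_j$ replaced by $a_i$. Forming the difference, the common marginal $P^{\pi}(s_{t+1}\mid s_t)$ appears in both logarithms and cancels, leaving
$$N(s_t,a_i,a_i)-N(s_t,a_i,a_j)=\mathbb{E}_{s_{t+1}\sim P(\cdot\mid s_t,a_i)}\log\frac{P(s_{t+1}\mid s_t,a_i)}{P(s_{t+1}\mid s_t,a_j)},$$
which is precisely $D_{\mathrm{KL}}\big(P(\cdot\mid s_t,a_i)\,\|\,P(\cdot\mid s_t,a_j)\big)=M(s_t,a_i,a_j)$ by Definition~\ref{def: similarity factor}.

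The argument is almost entirely mechanical, so I would not expect a genuine obstacle; the only subtlety worth flagging is the appearance and cancellation of the marginal $P^{\pi}(s_{t+1}\mid s_t)$. This is exactly what makes the decomposition valuable: neither $N$-term is individually tractable, since each hides the normalizing constant, yet their difference is, and the subtraction is what permits the method to sidestep estimating that constant. I would also record the well-definedness caveat, namely that the Bayes step requires $\pi(a\mid s_t)>0$ and $P^{\pi}(s_{t+1}\mid s_t)>0$ on the support of $P(\cdot\mid s_t,a_i)$, which is satisfied whenever the data-collecting policy assigns positive probability to the actions in question.
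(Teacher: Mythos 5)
Your proposal is correct and is essentially the paper's own argument run in reverse: the paper starts from the KL form of $M$, inserts the marginal $P^{\pi}(s_{t+1}\mid s_t)$ into the ratio, and applies the same Bayes identity $\frac{P^{\pi}(a\mid s_t,s_{t+1})}{\pi(a\mid s_t)}=\frac{P(s_{t+1}\mid s_t,a)}{P^{\pi}(s_{t+1}\mid s_t)}$ to produce the two $N$-terms, whereas you start from the $N$-terms and cancel the marginal in the difference. Your added positivity caveat ($\pi(a\mid s_t)>0$ on the relevant support) is a reasonable refinement the paper leaves implicit, but the mathematical content is identical.
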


% \begin{align*}
% \mathop{E} \limits_{s_{t+1} \sim P^{\pi}(\cdot|s_t,a_i)}\log \left[ \frac{P^{\pi}(a_i|s_t,s_{t+1})}{\pi(a_i|s_t)} \right] 
% \\& - \mathop{E} \limits_{s_{t+1} \sim P^{\pi}(\cdot|s_t,a_i)}\log \left[ \frac{P^{\pi}(a_j|s_t,s_{t+1})}{\pi(a_j|s_t)} \right]
% \end{align*}

\begin{proof}
Following the definition of KL-divergence, we have
\begin{align*}
M(s_t,a_i,a_j) 
&=D_{\text{KL}}(P(s_{t+1}|s_t,a_i)||P(s_{t+1}|s_t,a_j)) 
\\
&= E_{s_{t+1} \sim P(\cdot|s_t,a_i}) \log\left[\frac{P(s_{t+1}|s_t,a_i)}{P(s_{t+1}|s_t,a_j)}\right]
\end{align*}
We insert $P^{\pi}(s_{t+1}|s_t)$ with respect to a stochastic policy $\pi$:

\begin{align*}
\frac{P(s_{t+1}|s_t,a_i)}{P(s_{t+1}|s_t,a_j)} =
\frac{P(s_{t+1}|s_t,a_i) / P^{\pi}(s_{t+1}|s_t)}{P(s_{t+1}|s_t,a_j)/P^{\pi}(s_{t+1}|s_t)} 
\end{align*}

where
\begin{align*}
\frac{P(s_{t+1}|s_t,a_i)}{P^{\pi}(s_{t+1}|s_t)} &= \frac{P^{\pi}(s_{t+1},s_t,a_i) / P^{\pi}(s_t,a_i)}{P^{\pi}(s_{t+1},s_t)/P^{\pi}(s_t)} \\&= \frac{P^{\pi}(a_i|s_t,s_{t+1})}{\pi(a_i|s_t)}
\end{align*}

We can derive the equation for the denominator
in the same way. 

\begin{align*}
\frac{P(s_{t+1}|s_t,a_j)}{P^{\pi}(s_{t+1}|s_t)} &= \frac{P^{\pi}(s_{t+1},s_t,a_j) / P^{\pi}(s_t,a_j)}{P^{\pi}(s_{t+1},s_t)/P^{\pi}(s_t)} \\&= \frac{P^{\pi}(a_j|s_t,s_{t+1})}{\pi(a_j|s_t)}
\end{align*}

Then, we have 
\begin{align*}
 &M(s_t,a_i,a_j) = N(s_t,a_i,a_i) - N(s_t,a_i,a_j)
\end{align*}
where
$$
N(s_t,a_i,a_j) = \mathop{E} \limits_{s_{t+1} \sim P(\cdot|s_t,a_i)} \log \left[ \frac{P^{\pi}(a_j|s_t,s_{t+1})}{\pi(a_j|s_t)} \right].$$
\end{proof}

\clearpage
\section{Additional Experiments}
\label{appendix: addi exp}

\paragraph{How to select $\epsilon$} is an important issue in our algorithm.
If $\epsilon$ is too large, the constraint is loose, and we will filter out enormous valid actions.
If $\epsilon$ is too small, the constraint is tight, and we cannot identify the redundant actions.
Therefore, we conduct the ablation studies for $\epsilon$ $\in$ \{0.01, 0.05, 0.1, 0.5, 1, 5\}.
The results in Figure~\ref{fig: ablation_epsilon}  show that NPM performs robust when $\epsilon$ $\in$ (0.05, 0.5).

\begin{figure*}[h]
    \centering
    \subfloat{\includegraphics[scale=0.2]{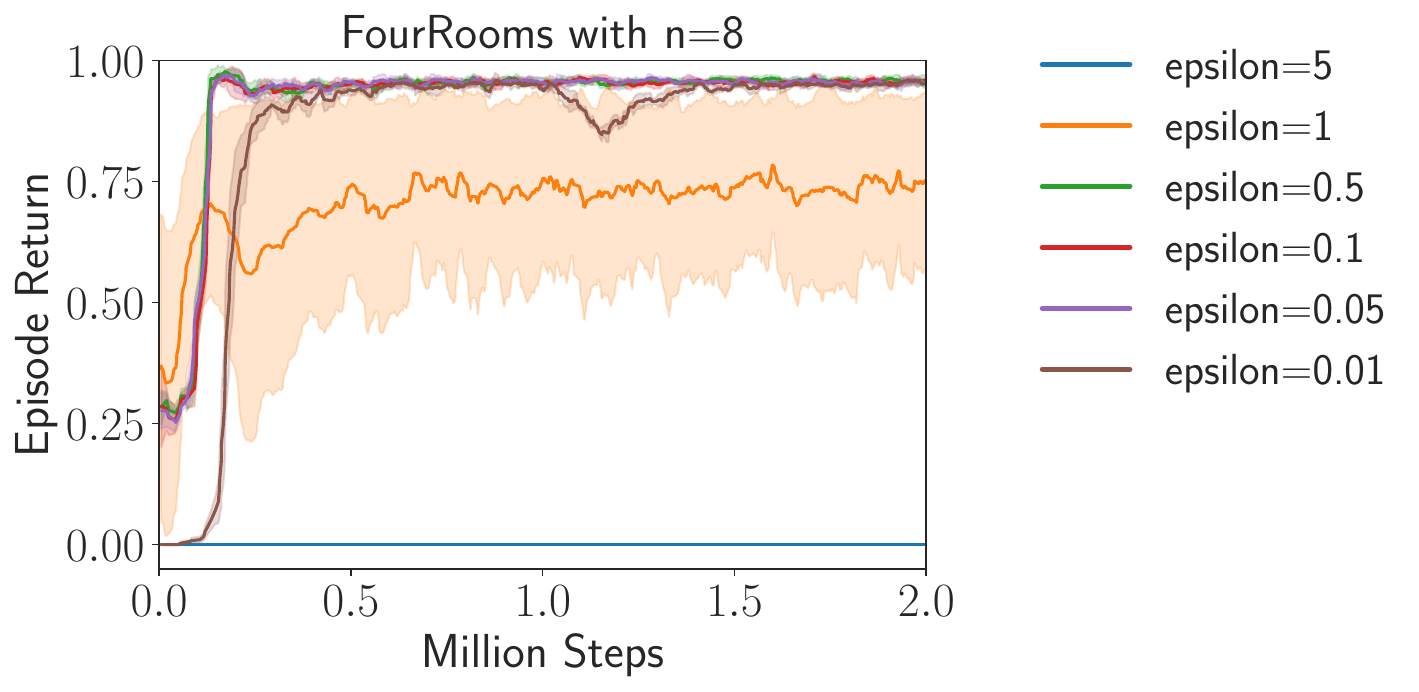}\label{fig: sub_figure1}}
    \subfloat{\includegraphics[scale=0.2]{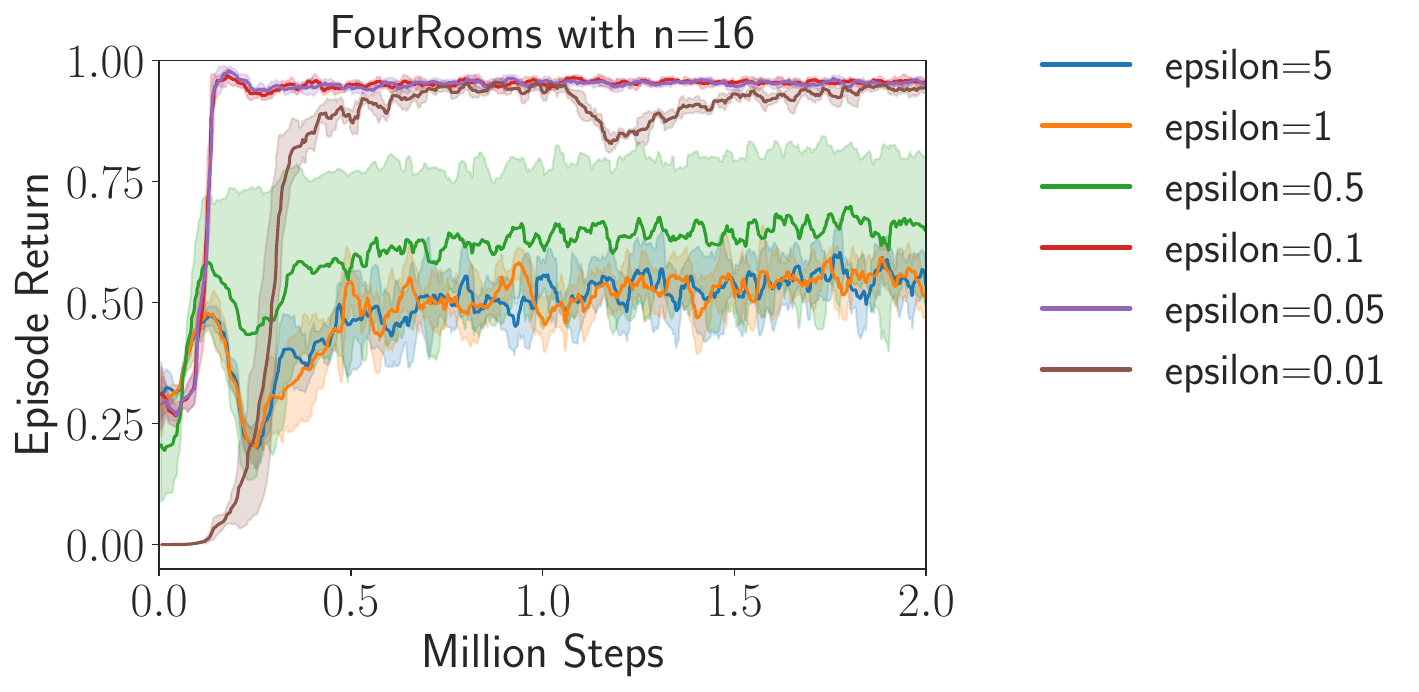}\label{fig: sub_figure2}}
    \subfloat{\includegraphics[scale=0.2]{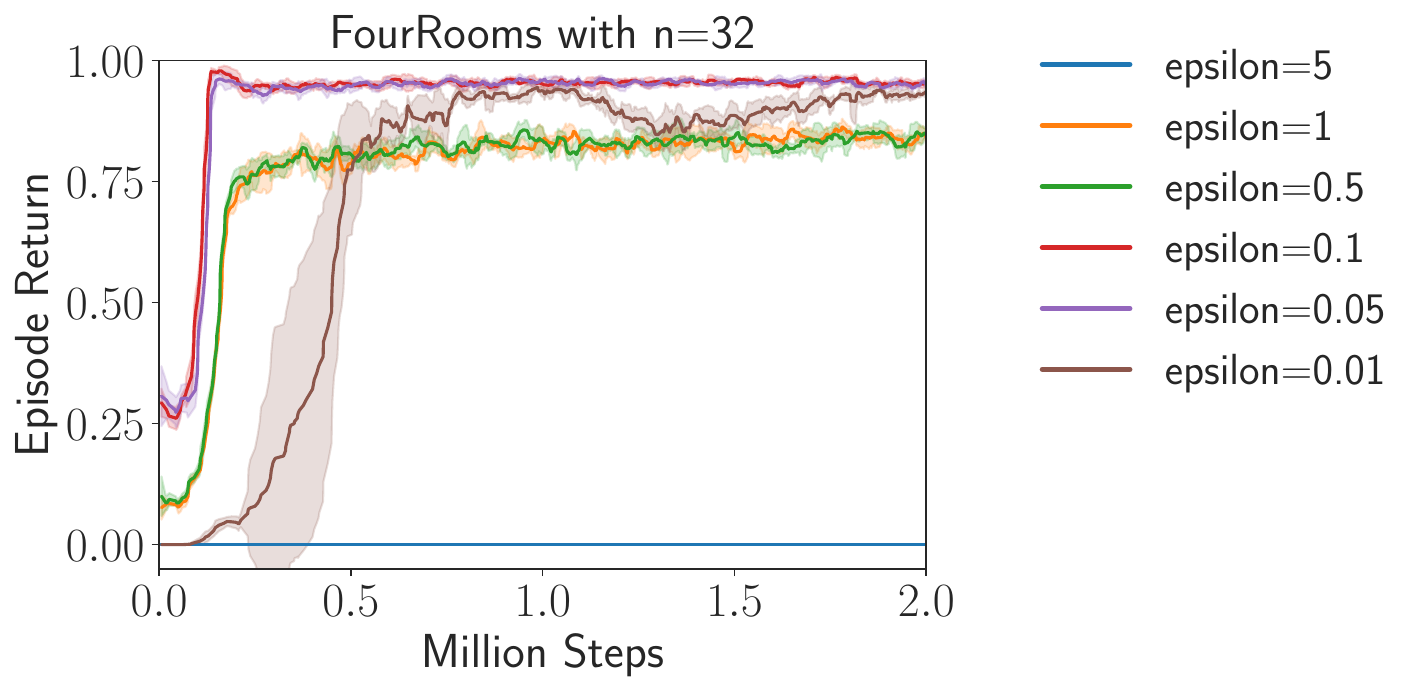}\label{fig: sub_figure3}}
    \caption{Ablation study with $\epsilon$ $\in$ \{0.01, 0.05, 0.1, 0.5, 1, 5\}.}
    \label{fig: ablation_epsilon}
\end{figure*}

\paragraph{Modified inverse dynamic model:}
We compare the modified inverse dynamic model with the original model in the combined action redundancy domain~(maze task).
We select various action numbers $n$ in the maze task.
The experimental results in Figure~\ref{fig: ablation_modified} show that the modified inverse dynamic model~(named modified) learns faster and achieves higher accuracy than the original dynamic model~(named original).
The result is consistent with the statement in section~\ref{subsec: inverse model}.

\begin{figure*}[h]
    \centering
    \subfloat{\includegraphics[scale=0.25]{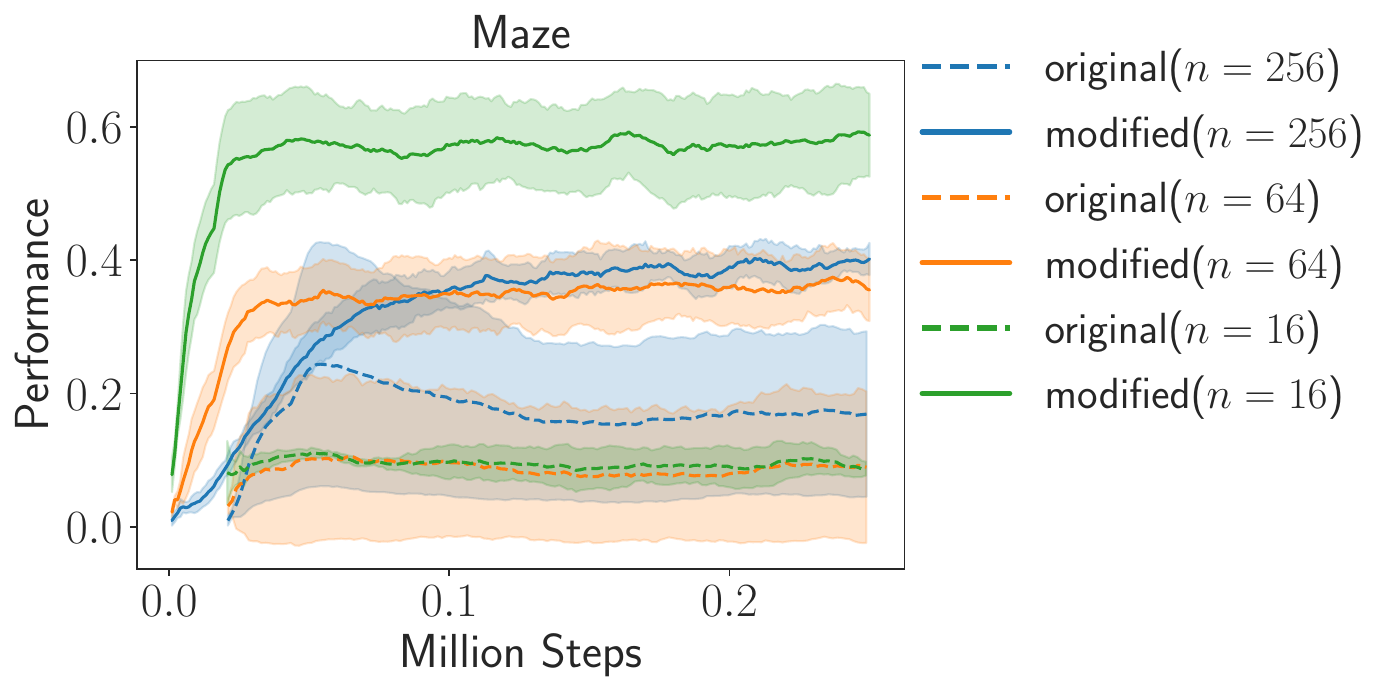}}
    \caption{Ablation study with the modified inverse dynamic model.}
    \label{fig: ablation_modified}
\end{figure*}

\paragraph{Curiosity Reward:}
% We conduct ablation studies for the curiosity reward used in the state-dependent action redundancy tasks.
% The results in Figure~\ref{fig: ablation_curiosity} show that the curiosity reward is essential in complex tasks.
We conducted ablation studies to analyze the impact of the curiosity reward within the context of state-dependent action redundancy tasks. The results presented in Figure~\ref{fig: ablation_curiosity} unequivocally demonstrate the indispensable role of the curiosity reward, particularly in challenging and intricate tasks.
Especially for the pre-training of action mask in phase 1, it is essential to collect data using varying policies (curiosity-driven) to train a model that applies to a broader state space.

\begin{figure*}[h]
    \centering
    \subfloat{\includegraphics[scale=0.25]{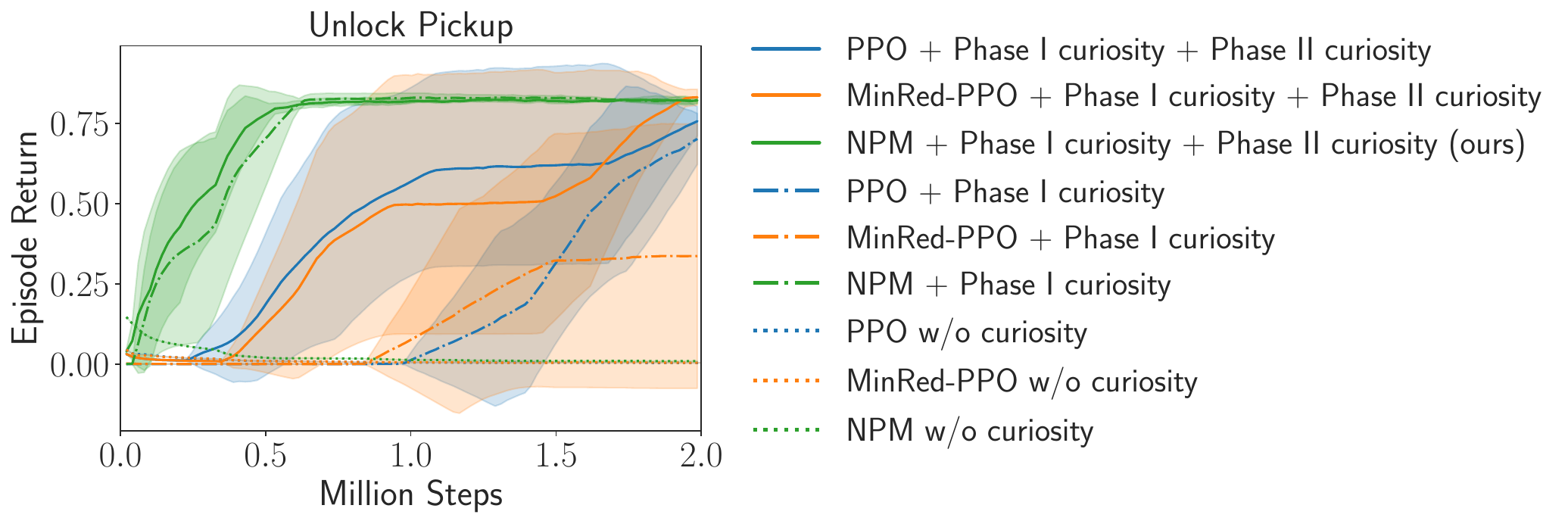}}
    \caption{Ablation study with curiosity.}
    \label{fig: ablation_curiosity}
\end{figure*}

\paragraph{Visitation Locations Heatmap:}
% Furthermore, we visualize the visitation locations of the fixed policy and the curiosity-driven policy in the Unlock-Pickup task.
% The result in Figure~\ref{fig: heatmap} shows that agent driven by the curiosity reward succeeds in adequate exploration in the phase 1 pre-training.
We offer a visual representation of visitation locations within the Unlock-Pickup task for both the fixed policy and the curiosity-driven policy. The outcome, illustrated in Figure~\ref{fig: heatmap}, illustrates the effectiveness of an agent motivated by the curiosity reward in achieving comprehensive exploration during the phase 1 pre-training stage.

\begin{figure*}[h]
     \centering
    \subfloat{\includegraphics[scale=0.25]{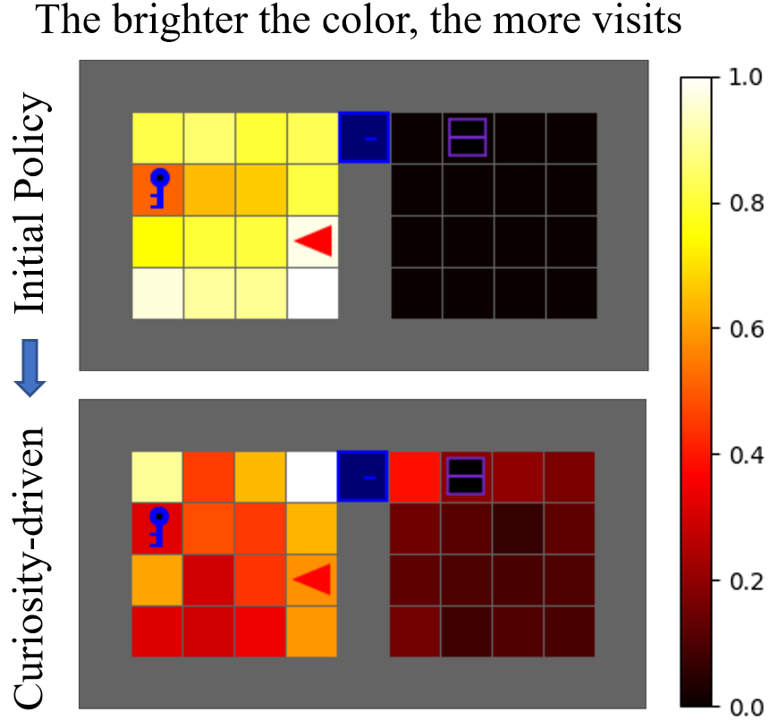}}
     \caption{Visitation locations heatmap.}
     \label{fig: heatmap}
 \end{figure*}

\paragraph{Soft Action Mask:}
We conducted additional experiments with soft masks, which do not entirely eliminate actions but assign probabilities based on their similarity measure.
Specifically, we calculate the average similarity for each action based on similarity factor matrix $M$: $$\bar{d}_{s_t,a_i} = \frac{\sum_{j \neq i} M(s_t,a_i,a_j)}{ |\mathcal{A}| - 1},$$ where $|\mathcal{A}|$ is the size of the action space. During phase 2 of training, we modify the policy distribution as follows: $$\pi_{\rm softmask}(a_i|s_t) = \frac{\pi(a_i|s_t)e^{\eta\bar{d}_{s_t,a_i}}}{Z},$$ where $\eta$ is a weight coefficient, and $Z$ is a normalization factor to maintain the probability distribution sum to 1.  By applying this modification, when an action is similar to many others, its probability of being sampled is reduced rather than being directly masked. The experimental results in Figure~\ref{fig: 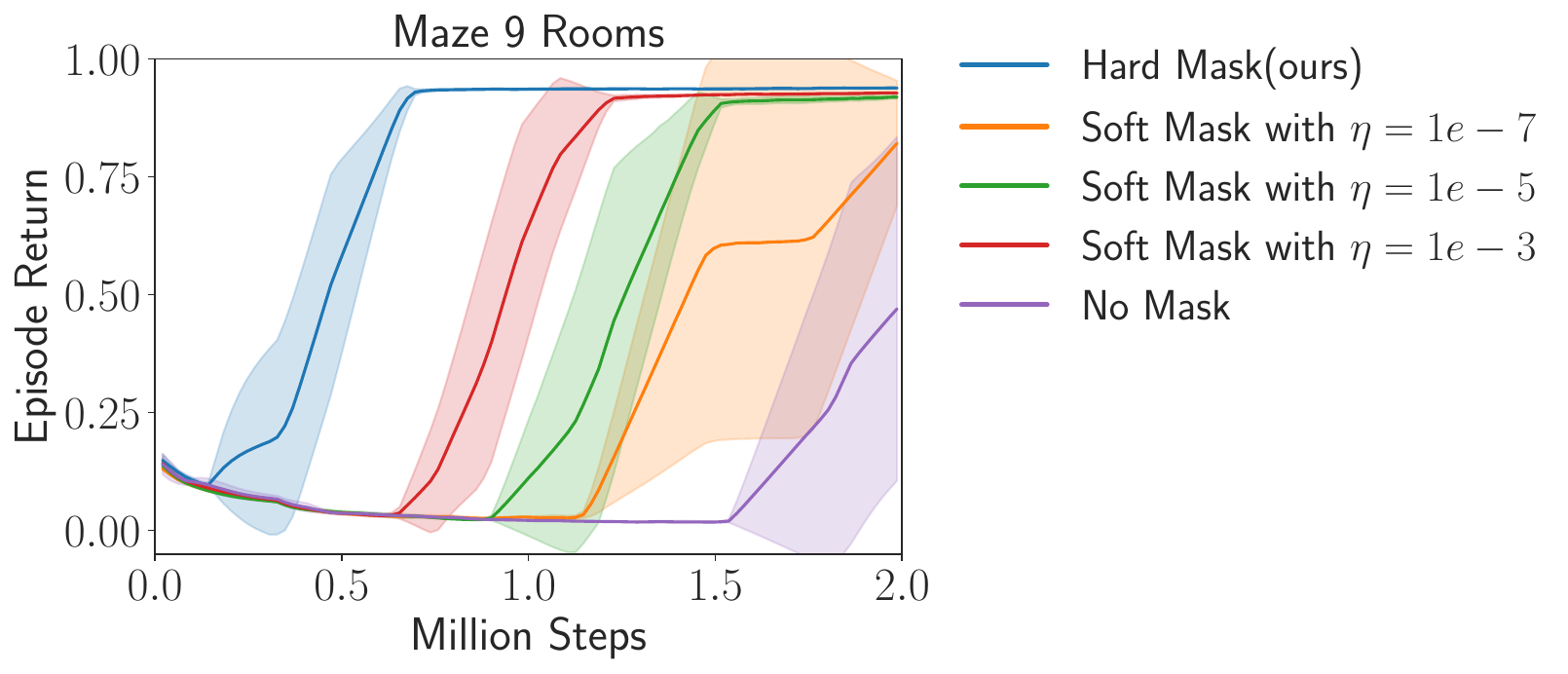} show that compared with hard masks, soft masks slow down the training process.

\begin{figure*}[h]
     \centering
    \subfloat{\includegraphics[scale=0.25]{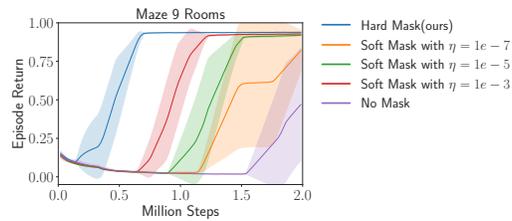}}
     \caption{The experimental results of Hard Mask, Soft Mask and No Mask.}
     \label{fig: ablation_fig_Maze_9_Rooms.pdf}
 \end{figure*}
 
\clearpage
\section{Minigrid environments}
We consider the Minigrid environments~\cite{minigrid,chevalier2018babyai} as the state-dependent action redundancy tasks.
As shown in Figure~\ref{fig: task3 setting}, there are common identifiers among these tasks: agent, box, key, ball, and door.
We need to control the agent to complete different tasks.
For example, PutNextLocal is putting an object next to another object inside a single room with no doors.
UnlockPickup is picking up a box that is placed in another room behind a locked door.
The Maze 4 and 9 Rooms control the agent to go to an object, and the object may be in another room.
The action space is $\mathcal{A}=$\{Turn Left, Turn Right, Move Forward, Pick Up, Drop, Toggle, Noop\}.
Note that there are state-dependent redundancies in these tasks.
For instance, the agent can pick up an object only when this object is in front of the agent.
Otherwise, the pick-up action will not work, and the agent will stay still.
Therefore, we need to eliminate the action redundancy to complete these tasks efficiently.
\begin{figure*}[h]
    \centering
    \includegraphics[scale=0.108]{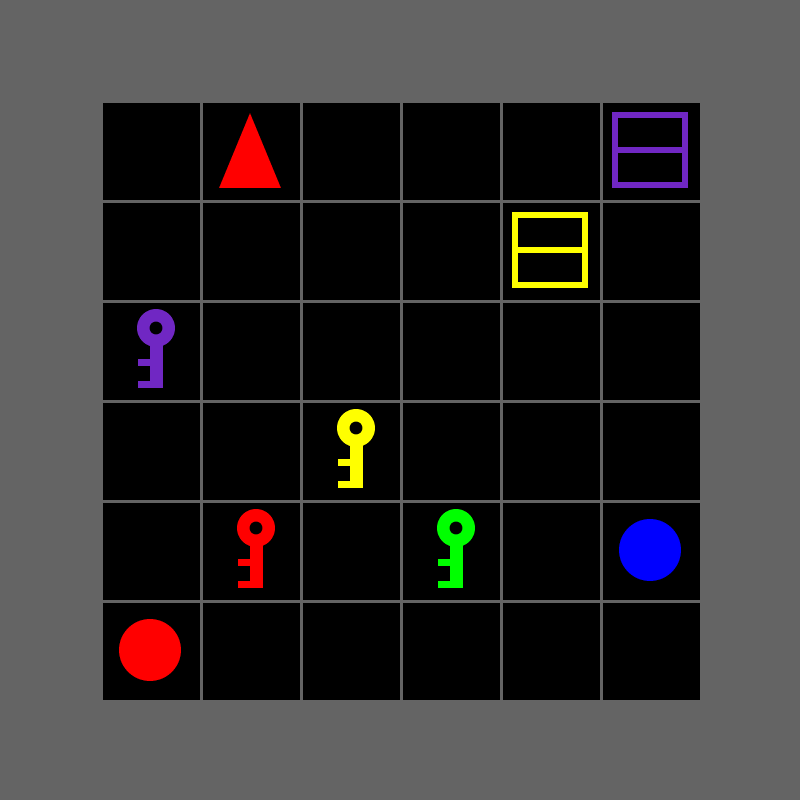}
    \hspace{1mm}
    \includegraphics[scale=0.1]{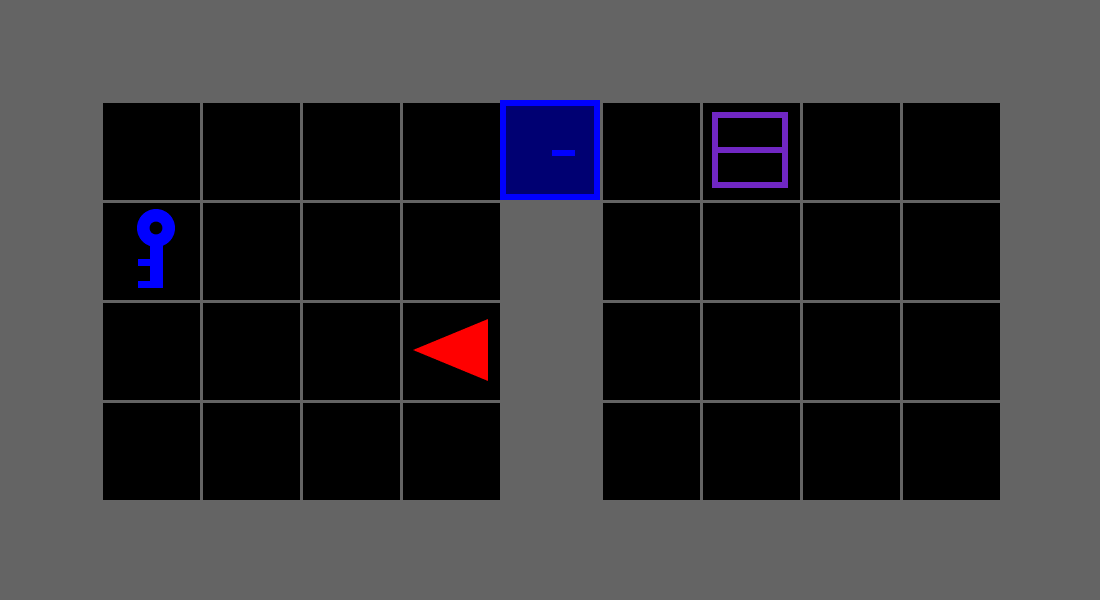}
    \hspace{1mm}
    \includegraphics[scale=0.058]{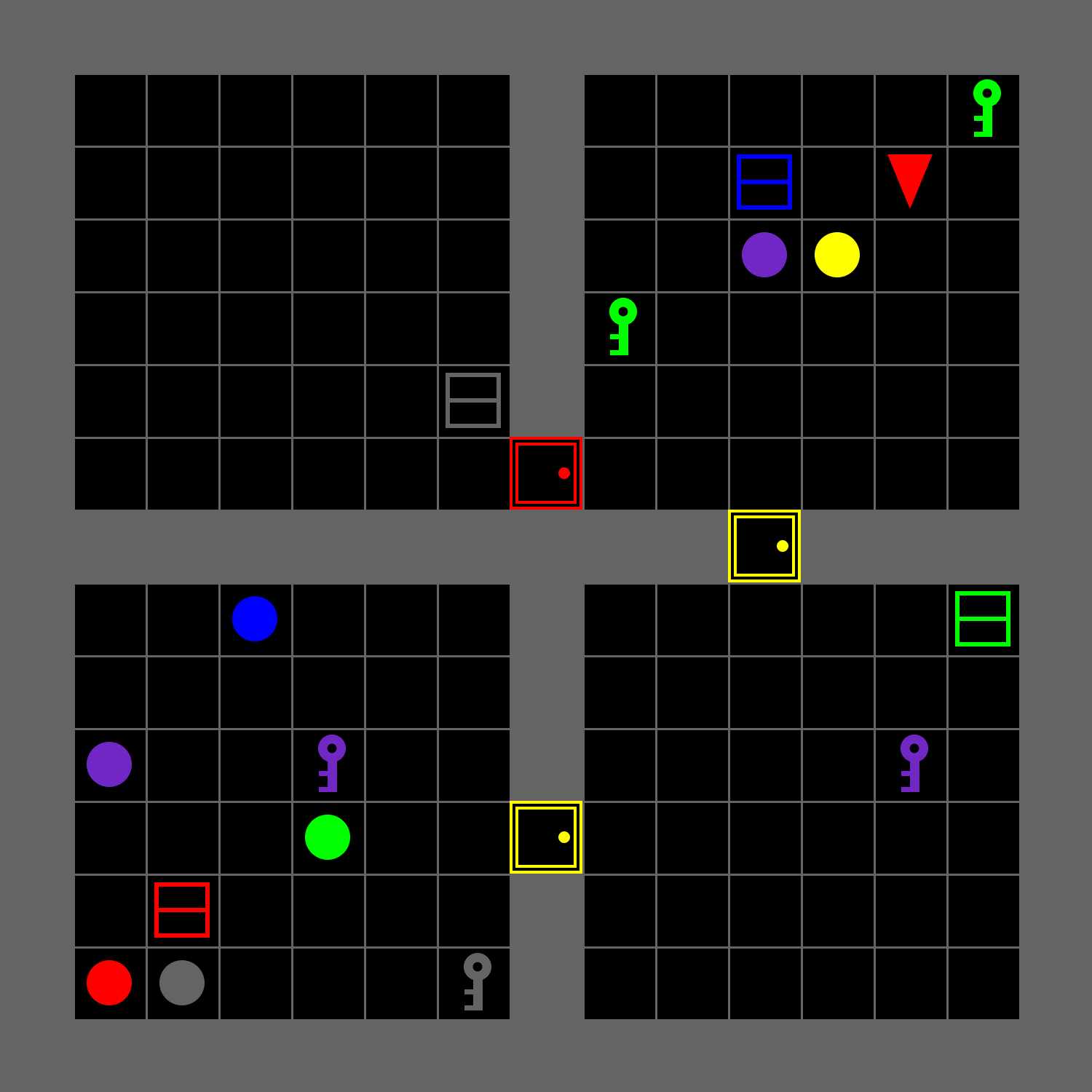}
    % \hspace{1mm}
    \includegraphics[scale=0.04]{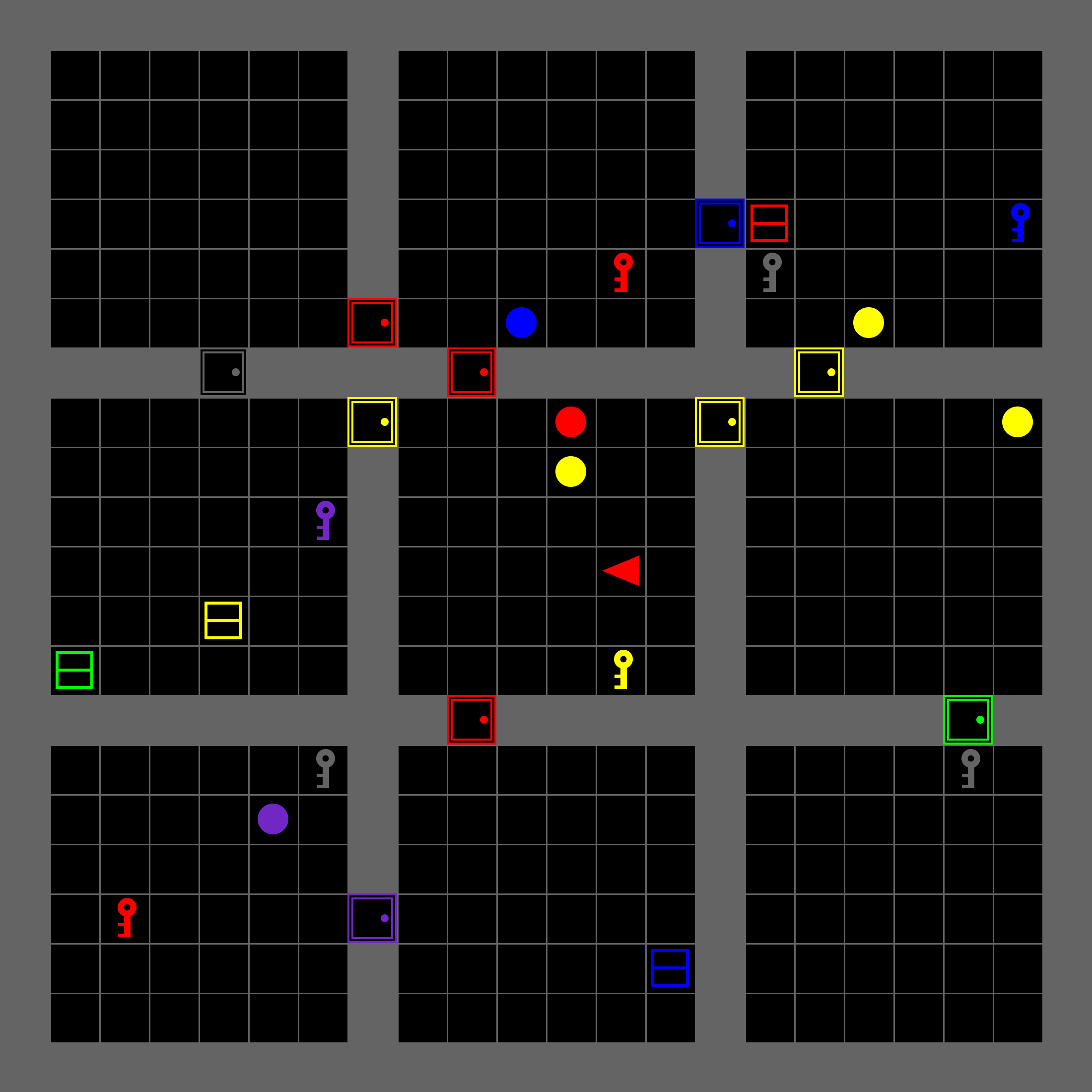}
    \caption{State-dependent action redundancy tasks.
    From left to right are PutNextLocal, Unlock Pickup, Maze 4 Rooms, and Maze 9 Rooms.
    There are common identifiers among these tasks: agent~(triangle), box~(square), key, ball, and door~(on the wall).
    }
    \label{fig: task3 setting}
\end{figure*}

% \begin{figure*}[h]
%     \centering\subfloat{\includegraphics[scale=0.22]{}\label{fig: sub_figure1}}
%     % \hspace{1mm}
%     \subfloat{\includegraphics[scale=0.23]{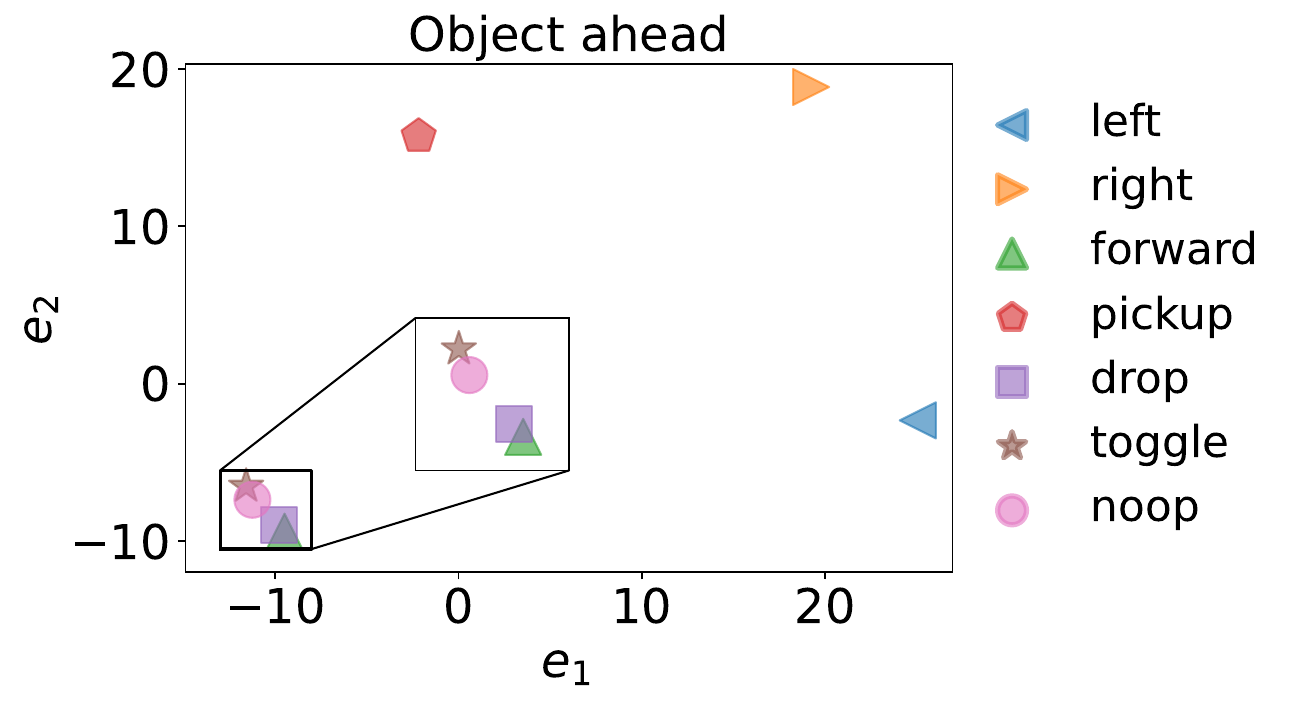}\label{fig: sub_figure4}}
%     %\hspace{1mm}
%     \subfloat{\includegraphics[scale=0.23]{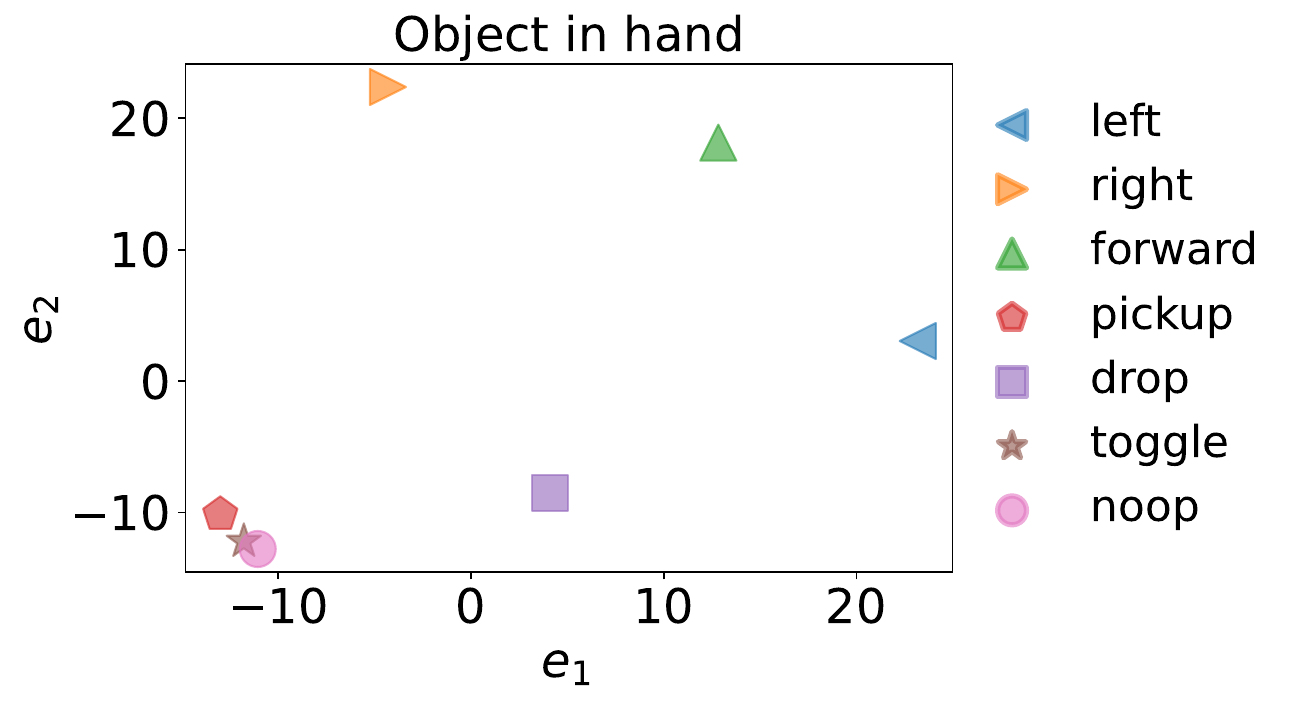}\label{fig: sub_figure6}}

%     \caption{
%     Left: Two different state positions of the agent in the UnlockPickup task.
%     Right: Visualization of similarity factor.
%     We find that agents in different positions have various action clusters.}
%     \label{fig: task3 visual}
% \end{figure*}
\section{State-dependent Action Clusters}
    We visualize the learned similarity factor.
The results in Figure~\ref{fig: task3 visual} and Figure~\ref{fig: different action clusters} show that the agent has different action clusters in various state positions, which can be successfully captured based on the learned similarity factor.

\begin{figure*}[h]
    \centering
    % \subfloat{\includegraphics[scale=0.2]{Newfig_GoToLocal.pdf}\label{fig: sub_figure1}}
    % \hspace{1mm}
    \subfloat{\includegraphics[scale=0.30]{unlockpickuppickup.pdf}\label{fig: sub_figure1}}
    \subfloat{\includegraphics[width=0.33\hsize, height=0.18\hsize,trim=50 70 50 70,clip]{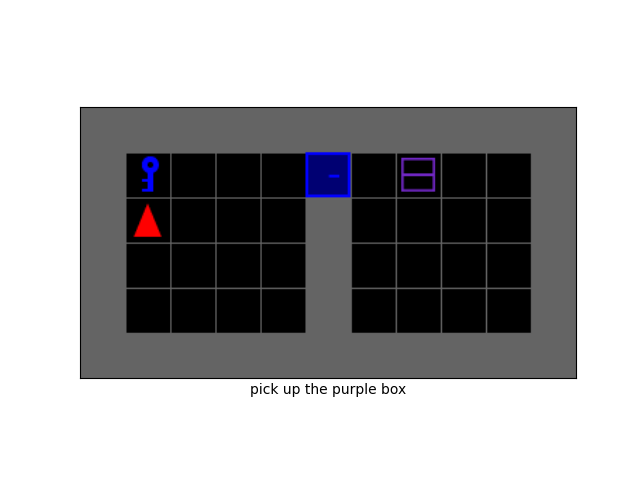}}\label{fig: sub_figure1}
    \vspace{0.5cm}
    
    \subfloat{\includegraphics[scale=0.30]{unlockpickuphandle.pdf}\label{fig: sub_figure1}}
    \subfloat{\includegraphics[width=0.33\hsize, height=0.18\hsize,trim=50 70 50 70,clip]{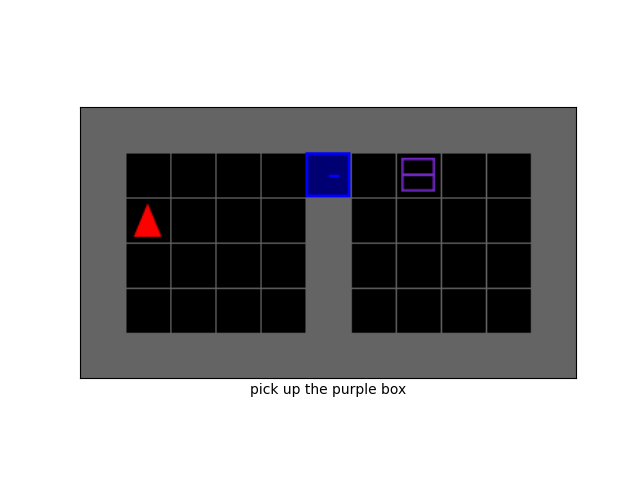}}\label{fig: sub_figure1}
    \vspace{0.5cm}
    
    \caption{Left: Visualization of similarity factor. 
    Right: Two different state positions of the agent in the UnlockPickup task.
    We find that agents in different positions have various action clusters.
    }
    \label{fig: task3 visual}
\end{figure*}

\begin{figure*}[h]
    \centering
    % \subfloat{\includegraphics[scale=0.2]{Newfig_GoToLocal.pdf}\label{fig: sub_figure1}}
    % \hspace{1mm}
    \subfloat{\includegraphics[scale=0.30]{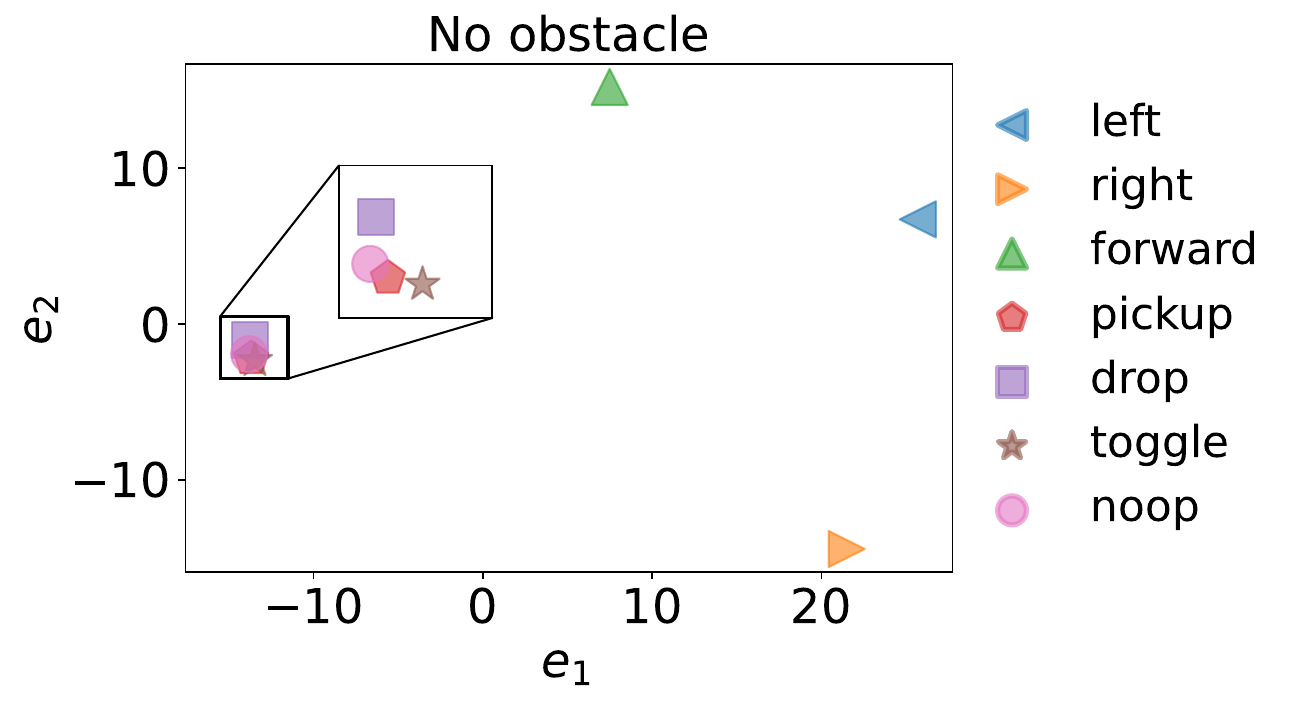}\label{fig: sub_figure1}}
    \subfloat{\includegraphics[width=0.33\hsize, height=0.18\hsize,trim=50 70 50 70,clip]{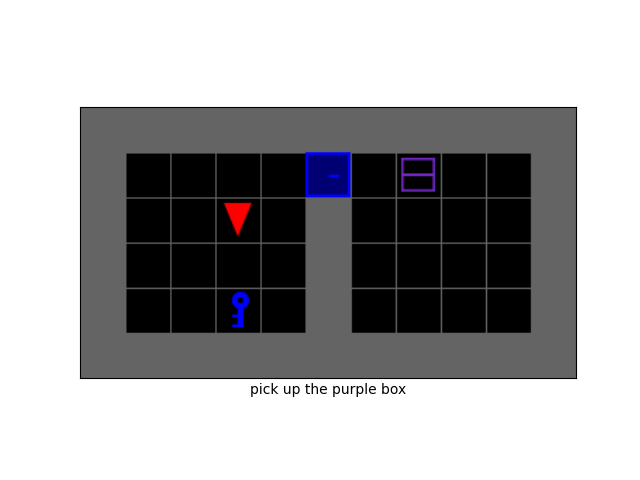}\label{fig: sub_figure1}}
    \vspace{0.5cm}
    
    \subfloat{\includegraphics[scale=0.30]{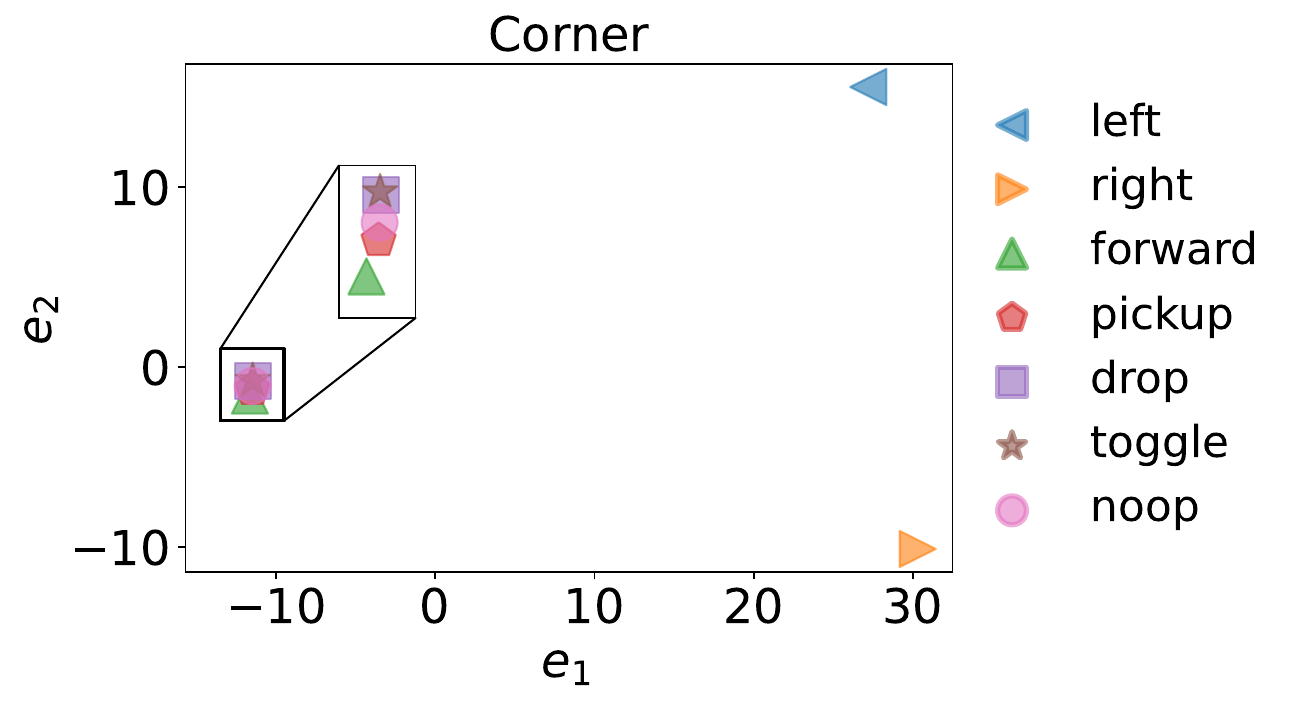}\label{fig: sub_figure1}}
    \subfloat{\includegraphics[width=0.33\hsize, height=0.18\hsize,trim=50 70 50 70,clip]{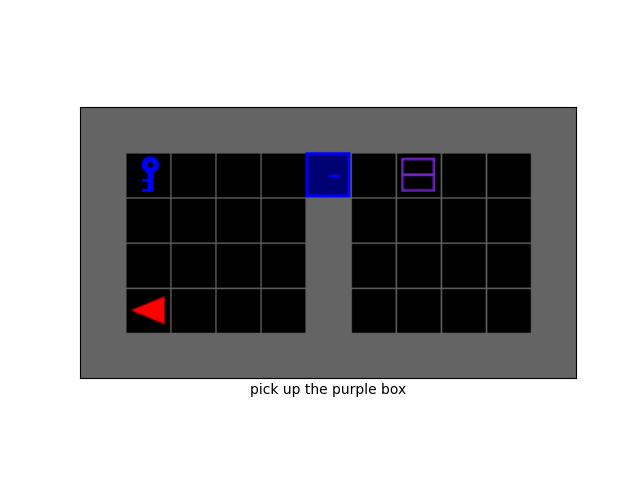}\label{fig: sub_figure1}}
    \vspace{0.5cm}
    
    \subfloat{\includegraphics[scale=0.30]{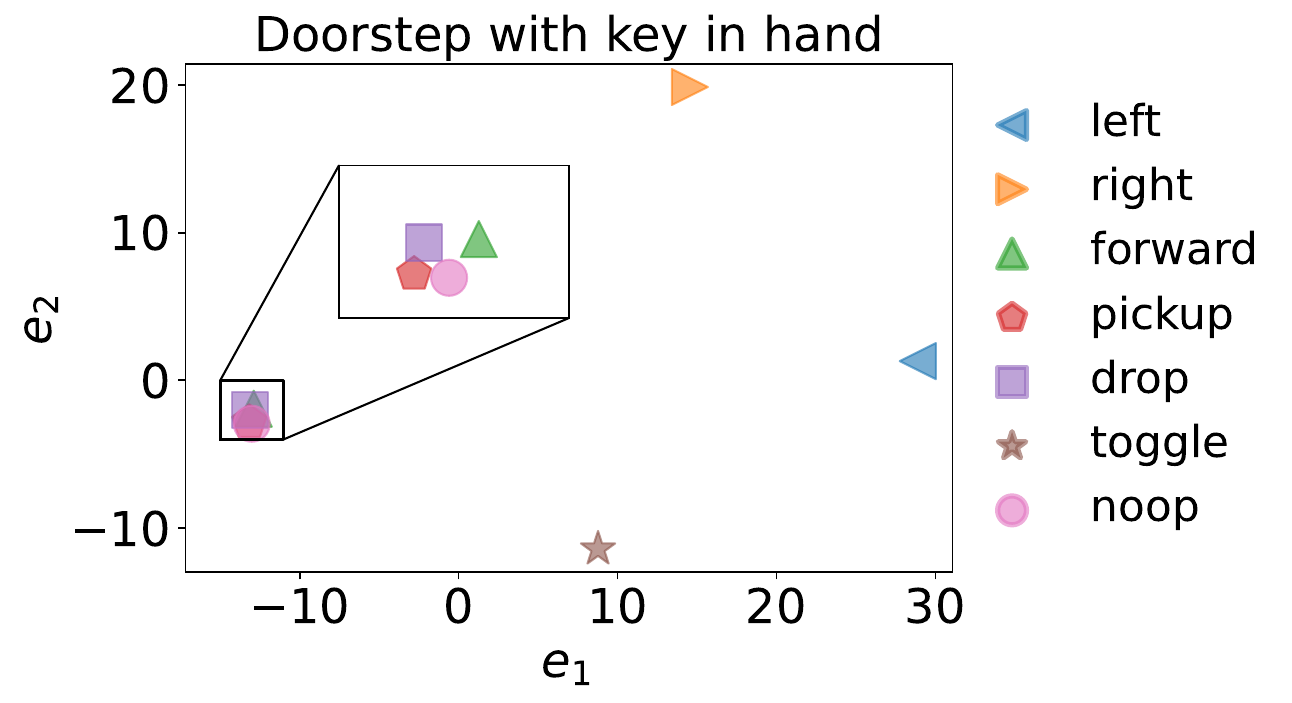}\label{fig: sub_figure1}}
    \subfloat{\includegraphics[width=0.33\hsize, height=0.18\hsize,trim=50 70 50 70,clip]{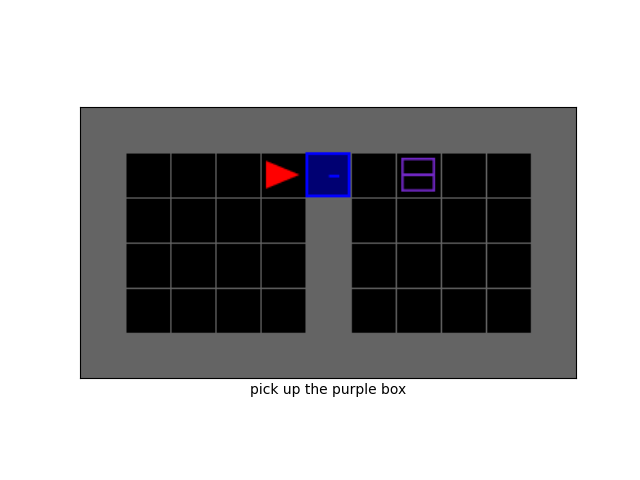}\label{fig: sub_figure1}}
    \vspace{0.5cm}
    
    \subfloat{\includegraphics[scale=0.30]{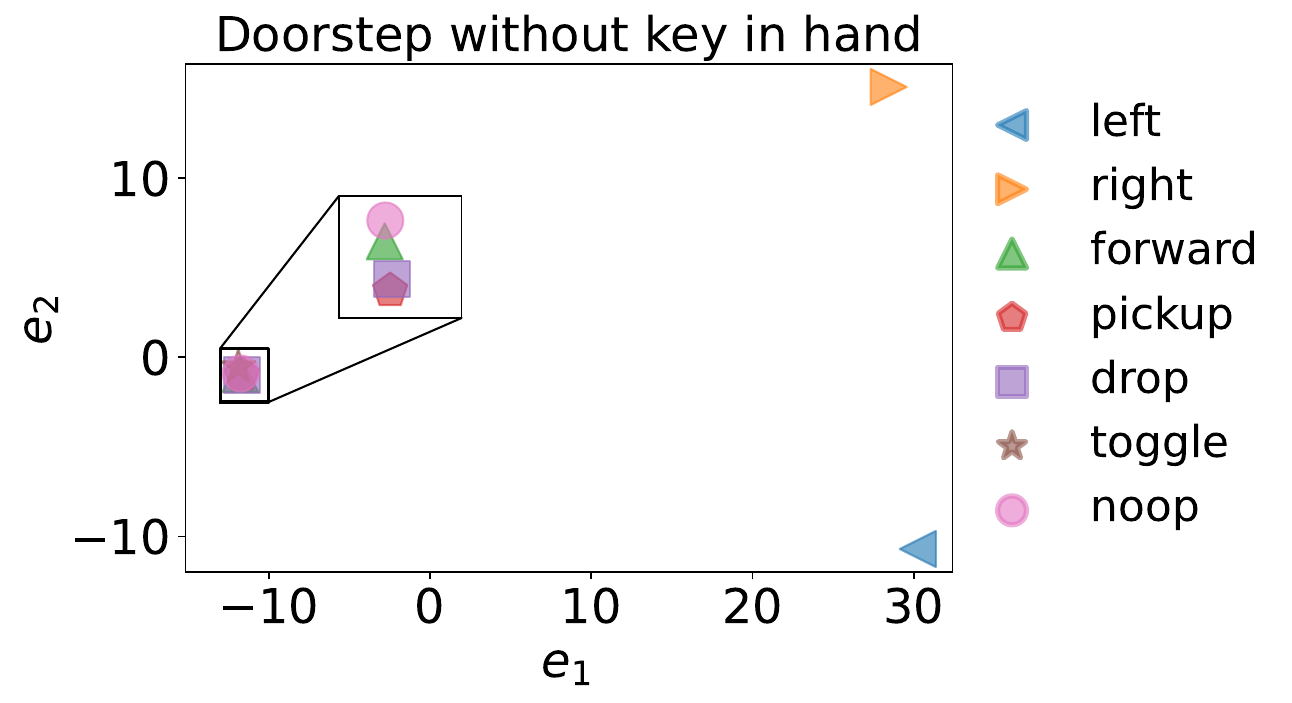}\label{fig: sub_figure1}}
    \subfloat{\includegraphics[width=0.33\hsize, height=0.18\hsize,trim=50 70 50 70,clip]{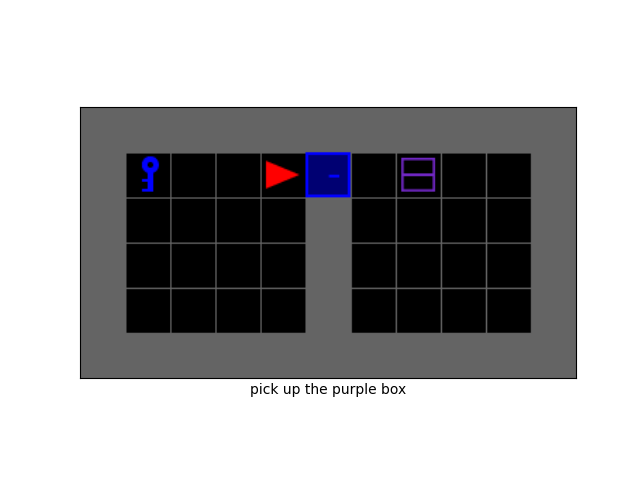}\label{fig: sub_figure1}}
    \vspace{0.5cm}
    
    \caption{Left: Visualization of similarity factor. 
    Right: Four More different state positions of the agent in the UnlockPickup task.
    We find that agents in different positions have various action clusters.
    }
    \label{fig: different action clusters}
\end{figure*}

\clearpage
\section{Atari Benchmark}
\label{appendix: atari}

We conduct experiments on Atari benchmark to provide evidence for NPM’s scalability to high-dimensional pixel-based observations. 
As illustrated in Figure~\ref{fig:DemonAttack cluster} in Atari 2600's Demon Attack, when the agent can only move horizontally and has no bullets, not only does NOOP exhibit redundant actions, but the action set also contains many actions that resemble LEFT or RIGHT actions. 
For instance, there are actions like LEFTFIRE and UPLEFT, which are similar to LEFT over this state.
The results shed light on the clustering of actions into three distinct categories, namely NOOP, LEFT, and RIGHT.

\begin{figure*}[h]
\centering\subfloat{\includegraphics[scale=0.4]{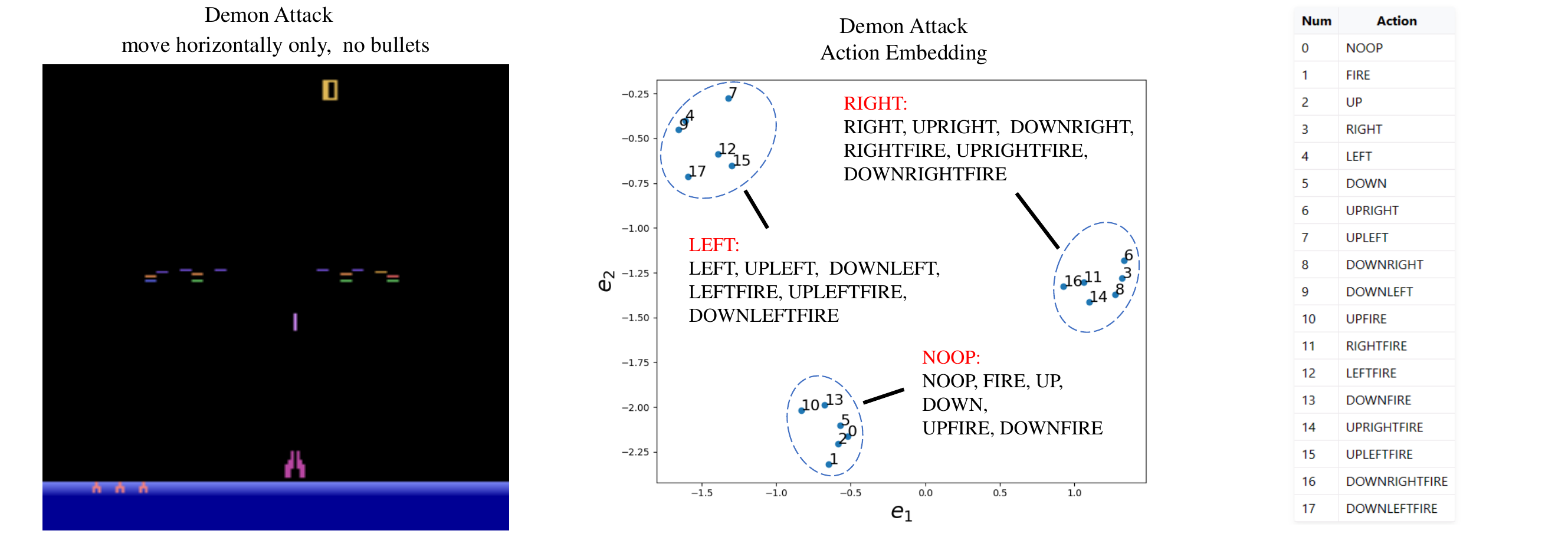}}
\caption{Left: The state where the agent can only move horizontally and has no bullets. Middle: The t-SNE projection based on similarity factor over this state. Right: The original action space of Atari2600 Demon Attack.}
\label{fig:DemonAttack cluster}
\end{figure*}

Benefiting from the reduction in action space brought about by action clustering in Figure~\ref{fig:DemonAttack cluster}, NPM outperforms the baseline method as Figure~\ref{fig: Atari result} shown. We conduct experiments on Atari benchmark with online action aggregation. While two phase training sheds of light on the generality of action masks, incorporating mask training and policy training is also feasible. 
However, it is important to note that during the online use of the mask, the aggregation of actions may not be accurate especially at the beginning. Intuitively, in the modified algorithm, we utilize the mask mechanism to accelerate exploration and training only over states where the agent has been visited a sufficient number of times (e.g. lower prediction errors in random network distillation~\cite{burda2018exploration}).

\begin{figure*}[h]
\centering\subfloat{\includegraphics[scale=0.22]{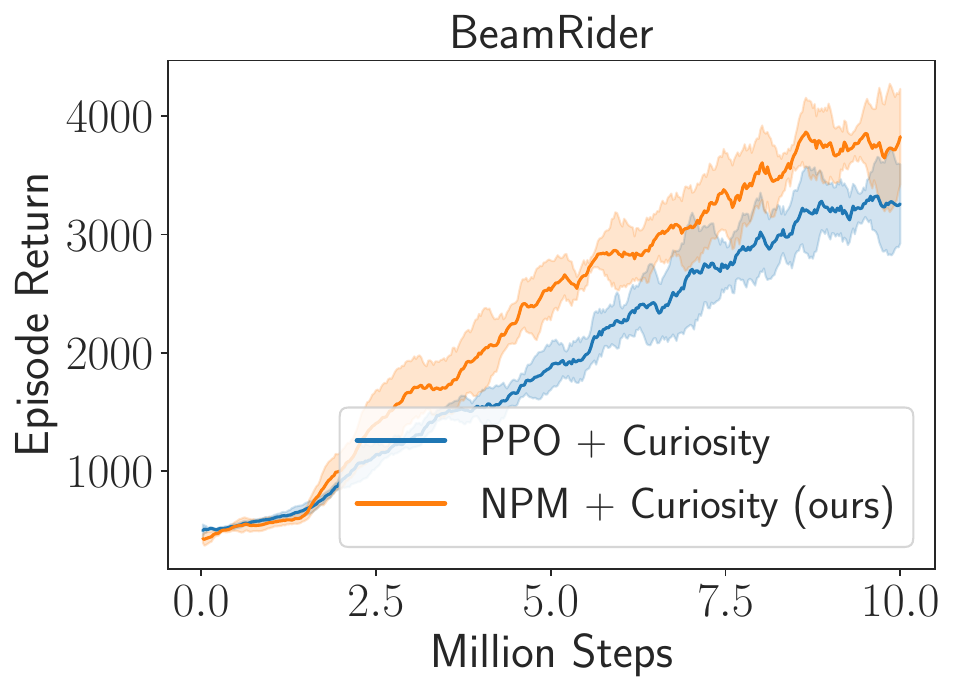}}\subfloat{\includegraphics[scale=0.22]{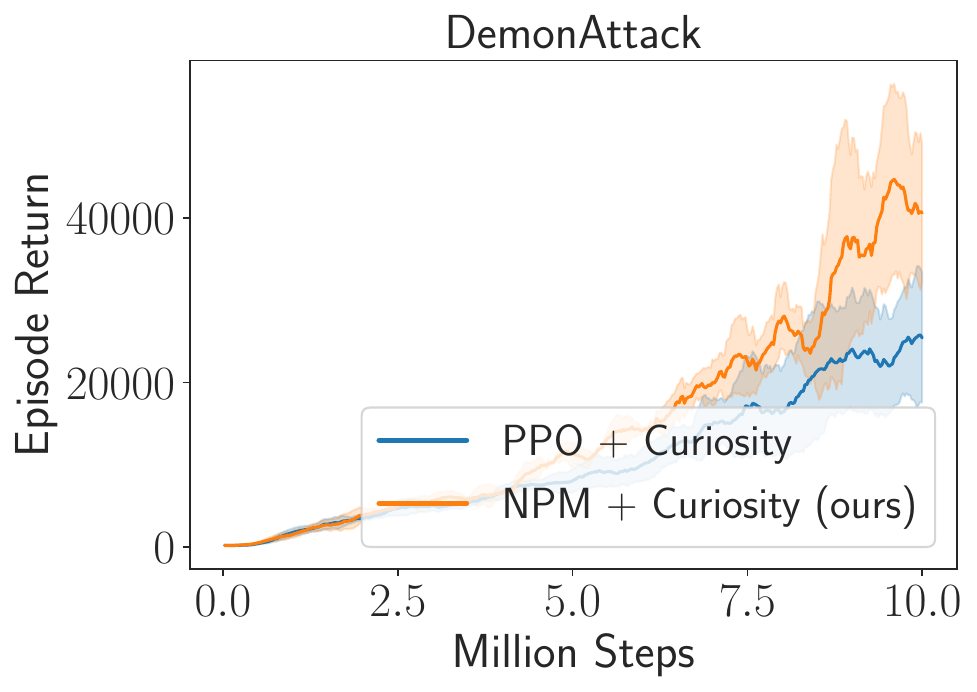}}\subfloat{\includegraphics[scale=0.22]{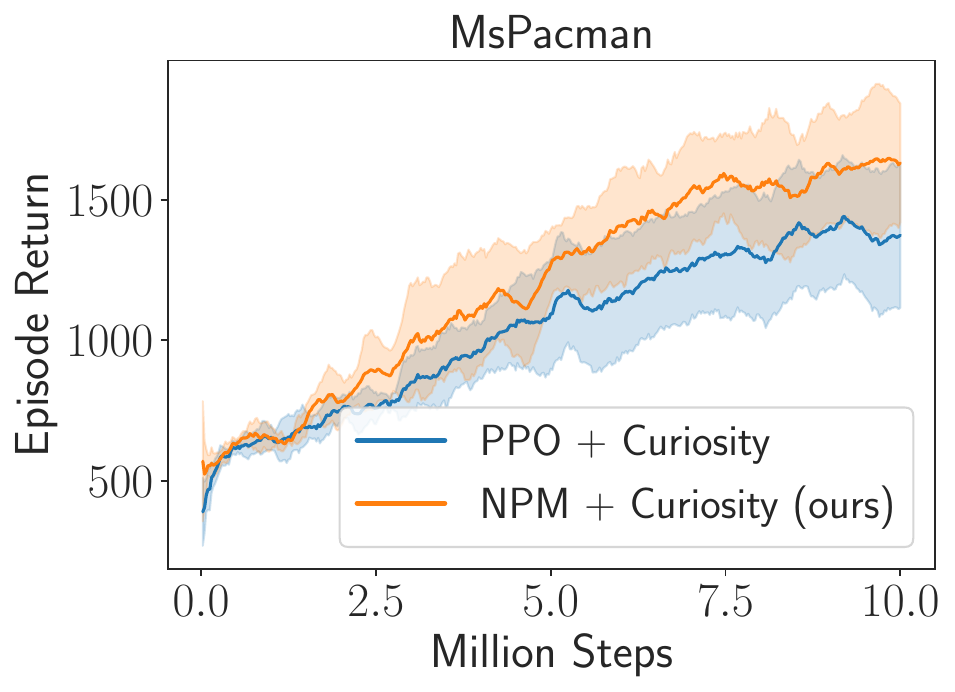}}\subfloat{\includegraphics[scale=0.22]{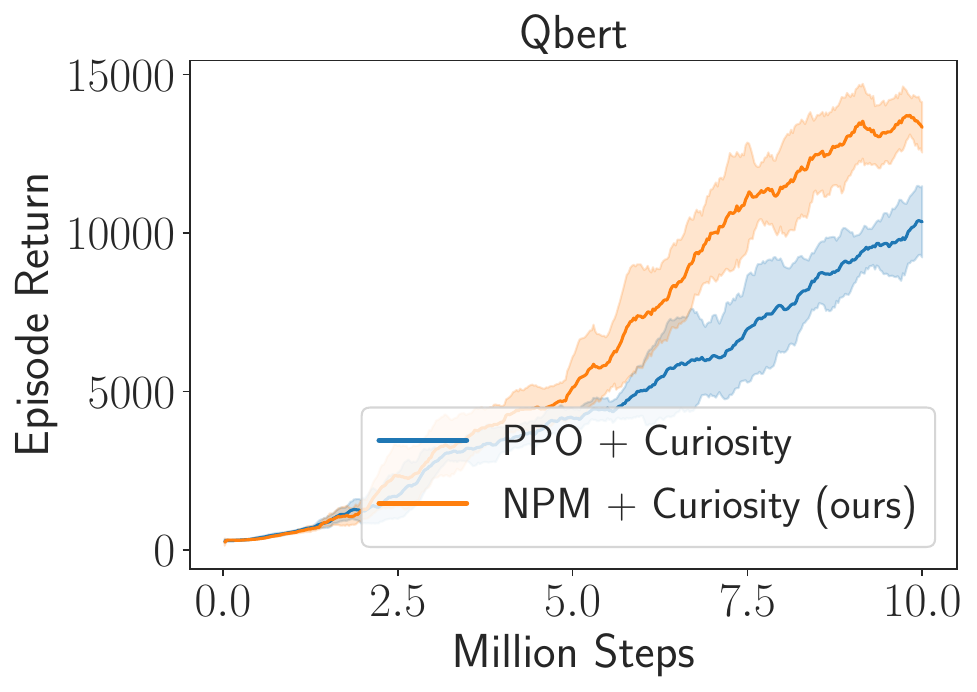}}
\caption{The experimental results on the Atari2600 benchmarks under five random seeds.}
\label{fig: Atari result}
\end{figure*}

\clearpage
\section{Experimental Details} 
\label{appendix: exp detail}
The hyper-parameter of our method is shown in Table~\ref{NPM-PPO hyper-parameters sheet} and Table~\ref{NPM-DQN hyper-parameters sheet}.
Both N-value network, the inverse dynamic model and the policy network adopt the same neural network's hidden layer [64,64].
The similarity factor threshold $\epsilon$ is searched over $\{0.01,0.05, 0.1, 0.2,0.3,0.4,0.5\}$.

\begin{table}[h]
		\caption{NPM-PPO Hyper-parameters sheet}
		\label{NPM-PPO hyper-parameters sheet}
		\centering
		\begin{tabular}{ll}
			\toprule
			Hyper-parameter & Value \\
			\midrule
			Shared & \\
			\hspace{0.3cm} Policy network learning rate & $3 \times 10^{-4}$\\
            \hspace{0.3cm} N-Value network learning rate & $3\times10^{-4}$\\
            \hspace{0.3cm} N-Value network update interval $T$ & $1$\\
			\hspace{0.3cm} Optimizer & Adam \\
			\hspace{0.3cm} Discount factor $\gamma$ & 0.99 \\
        \hspace{0.3cm} Entropy coefficient $\gamma$ & 0.20 \\
			\hspace{0.3cm} Parameters update rate $d$ & 2048 \\
                \hspace{0.3cm} Hidden dimension & [64, 64] \\
			\hspace{0.3cm} Activation function & Tanh \\
			\hspace{0.3cm} Batch size & 64 \\
			\hspace{0.3cm} Replay buffer size & $5\times 10^4$ \\
			\midrule
			Others & \\
			\hspace{0.3cm} $\epsilon$ & $\{0.01,0.05, 0.1, 0.2,0.3,0.4,0.5\}$ \\
			\bottomrule
		\end{tabular}
	\end{table}

\begin{table}[h]
		\caption{NPM-DQN Hyper-parameters sheet}
		\label{NPM-DQN hyper-parameters sheet}
		\centering
		\begin{tabular}{ll}
			\toprule
			Hyper-parameter & Value \\
			\midrule
			Shared & \\
			\hspace{0.3cm} Learning rate & $1 \times 10^{-4}$\\
            \hspace{0.3cm} N-Value network learning rate & $3\times10^{-4}$\\
            \hspace{0.3cm} N-Value network update interval $T$ & $1$\\
			\hspace{0.3cm} Buffer size & $1\times10^{6}$\\
			\hspace{0.3cm} Discount factor $\gamma$ & 0.99 \\
			\hspace{0.3cm} Batch size & 32 \\
                \hspace{0.3cm} Exploration initial value ($\epsilon$-greedy) &1.0 \\
                \hspace{0.3cm} Exploration final value ($\epsilon$-greedy) &0.05 \\
                \hspace{0.3cm} Learning starts & $5\times 10^4$ \\
                \hspace{0.3cm} Target update interval & 200\\
			\midrule
			Others & \\
			\hspace{0.3cm} $\epsilon$ & $\{0.01,0.05, 0.1, 0.2,0.3,0.4,0.5\}$ \\
			\bottomrule
		\end{tabular}
	\end{table}
 
\paragraph{Baseline Hyper-parameter:}
As a thumb rule mentioned in ~\cite{baram2021action}, the action redundancy coefficients $\lambda$ is searched over  $[0.0005,0.5]$ for different environments.

\begin{table}[h]
		\caption{MinRed Hyper-parameters sheet}
		\label{MinRed hyper-parameters sheet}
		\centering
		\begin{tabular}{ll}
			\toprule
			Hyper-parameter & Value \\
			\midrule
			Shared & \\
			\hspace{0.3cm} Policy network learning rate & $3 \times 10^{-4}$\\
            \hspace{0.3cm} N-Value network learning rate & $3\times10^{-4}$\\
            \hspace{0.3cm} N-Value network update interval $T$ & $1$\\
			\hspace{0.3cm} Optimizer & Adam \\
			\hspace{0.3cm} Discount factor $\gamma$ & 0.99 \\
                \hspace{0.3cm} Entropy coefficient $\gamma$ & 0.15 \\
			\hspace{0.3cm} Parameters update rate $d$ & 2048 \\
                \hspace{0.3cm} Hidden dimension & [64, 64] \\
			\hspace{0.3cm} Activation function & Tanh \\
			\hspace{0.3cm} Batch size & 64 \\
			\hspace{0.3cm} Replay buffer size & $5\times 10^4$ \\
			\midrule
			Others & \\
			\hspace{0.3cm} $\lambda$ & $\{0.0005,0.001,0.005,0.01,0.05,0.1,0.5\}$ \\
			\bottomrule
		\end{tabular}
	\end{table}

\paragraph{Pre-training Steps in Phase 1:}
The number of pre-training steps to train the inverse dynamics model in our experiment is shown in the Table~\ref{The number of pre-training steps in phase 1}. The results show that as the action space size grows, so do the training steps. Moreover, if action redundancy is state-dependent and certain states are difficult to reach in the domain (e.g., Minigrid and Atari Benchmark), the training steps should be larger to learn a good model that applies to broader state space.

Furthermore, it should be noted that the phase 1 pre-training of NPM (without extrinsic rewards) is different from policy training in phase 2 (with extrinsic rewards). Compared with the policy training, the phase 1 pre-training of NPM is task-agnostic and reward-free, which sheds light on the generality of action masks. As the experiments on the Maze 4/9 Rooms environment (Figure 9 in the original paper) shown, a single mask trained for one task (e.g., Go To Green Box) can be seamlessly transferred to multiple other tasks (e.g., Go To Red Ball and Go To Grey Key).
\begin{table}[h]
		\caption{The number of pre-training steps in phase 1}
		\label{The number of pre-training steps in phase 1}
		\centering
		\begin{tabular}{llll}
			\toprule
			Environment & Size of Action Space & Property & Training Steps \\
			\midrule
                Four-Room Task & 4, 11, 19, 35 & Synthetic	& 50,000 \\
                Maze Task	& 16, 64& Combined Action	& 50,000 \\
                Maze Task	&256	&Combined Action	&100,000 \\
                Minigrid	& 7	&State-Dependent	&500,000 \\
                Atari Benchmark	&18	&State-Dependent	&1,000,000 \\
			\bottomrule
		\end{tabular}
	\end{table}
\end{document}